




\documentclass{ecai} 



\usepackage{latexsym}
\usepackage{amssymb}
\usepackage{amsmath}
\usepackage{amsthm}
\usepackage{booktabs}
\usepackage{enumitem}
\usepackage{graphicx}
\usepackage{color}


\usepackage{amsmath,amsfonts,bm}









\def\eqref#1{equation~\ref{#1}}
\def\Eqref#1{Equation~(\ref{#1})}








\def\1{\bm{1}}








\def\vb{{\bm{b}}}

\def\vx{{\bm{x}}}
\def\vy{{\bm{y}}}



\def\mW{{\bm{W}}}

\DeclareMathAlphabet{\mathsfit}{\encodingdefault}{\sfdefault}{m}{sl}
\SetMathAlphabet{\mathsfit}{bold}{\encodingdefault}{\sfdefault}{bx}{n}













\newcommand{\sm}{\mathbf{\sigma}}
\newcommand{\e}{{e}}

\newcommand{\set}[1]{\left\{#1\right\}}
\newcommand{\class}[1]{c_{#1}}
\newcommand{\net}[1]{\ensuremath{\mathcal{N}_{#1}}}
\newcommand{\xlayer}[1]{\ensuremath{\vx^{({#1})}}}
\newcommand{\ylayer}[1]{\ensuremath{\vy^{({#1})}}}

\newcommand{\subxlayer}[2]{\ensuremath{\vx_{#1}^{(#2)}}}
\newcommand{\subylayer}[2]{\ensuremath{\vy_{#1}^{(#2)}}}

\newcommand{\txlayer}[1]{\ensuremath{\Tilde{\vx}^{(#1)}}}
\newcommand{\tylayer}[1]{\ensuremath{\Tilde{\vy}^{(#1)}}}

\newcommand{\pr}[2]{\pi^{#1}_{#2}}

\newcommand{\rpr}[3]{\Pi^{#1\mid{#2}}_{#3}}

\newcommand{\lnbigpi}[5]{\ln\left({\Pi^{{#1}\mid{#2}}_{\xlayer{0}}(#5)}\right)~|~{\xlayer{0}\in\sphere{#3}{#4}}}
\newcommand{\bigpiopenedxy}[5]{\ln\left(\frac{\sm_{\class{i}}(\vx^{({#1})}) \cdot \sm_{{\class{j}}}(\vy^{({#2})})}{\sm_{{\class{j}}}(\vx^{({#1})}) \cdot \sm_{\class{i}}(\vy^{({#2})})}\right)
~|~{\xlayer{0}\in\sphere{#3}{#4}}}
\newcommand{\bigpiopenedyx}[5]{\ln\left(\frac{\sm_{\class{i}}(\vy^{({#1})}) \cdot \sm_{{\class{j}}}(\vx^{({#2})})}{\sm_{{\class{j}}}(\vy^{({#1})}) \cdot \sm_{\class{i}}(\vx^{({#2})})}\right)
~|~{\xlayer{0}\in\sphere{#3}{#4}}}
\newcommand{\lrpr}[5]{min\left\{{\Pi^{{#1}\mid{#2}}_{\xlayer{0}}(#5)}~|~{\xlayer{0}\in\sphere{#3}{#4}}\right\}}
\newcommand{\urpr}[5]{max\left\{{\Pi^{{#1}\mid{#2}}_{\xlayer{0}}(#5)}~|~{\xlayer{0}\in\sphere{#3}{#4}}\right\}}

\newcommand{\minprob}[4]{\mathcal{M}^{{#1}\mid{#2}}_{#3,#4}}

\newcommand{\relaxprob}[4]{\mathcal{R}^{{#1}\mid{#2}}_{#3,#4}}

\newcommand{\sphere}[2]{{\bm{D}}_{#1}^{#2}}

\newcommand{\zeronet}{\mathbf{0}}

\usepackage{microtype}
\usepackage[table,xcdraw]{xcolor} 
\usepackage{mathtools}
\usepackage{multirow}
\usepackage{relsize} 
\usepackage{subcaption}
\captionsetup[figure]{skip=8pt, justification=centering}
\captionsetup[subfigure]{skip=4pt, justification=centering}
\captionsetup[table]{skip=8pt, justification=centering}
\setlength{\belowcaptionskip}{8pt} 

\usepackage[acronym, nohypertypes={acronym}]{glossaries}
\newacronym{DNN}{DNN}{Deep Neural Network}
\newacronym{VNN}{VNN}{Verification-friendly Neural Network}
\newacronym{ReLU}{ReLU}{Rectified Linear Unit}
\newacronym{LP}{LP}{Linear Programming}
\newacronym{MBP}{MBP}{Magnitude-Based Pruning}
\newacronym{RPR}{ROM}{Relative Output Margin}
\newacronym{LRPR}{LROM}{Local Relative Output Margin}
\newacronym{PR}{OM}{Output Margin}
\newacronym{RPRB}{Relative OM Bound}{Relative Output Margin Bound} 
\newacronym{AI}{AI}{Artificial Intelligence}
\newacronym{ML}{ML}{Machine Learning}
\newacronym{PGD}{PGD}{Projected Gradient Descent}

\newcommand\locimplies{\stackrel{\mathclap{\normalfont\mbox{$\sphere{}{}$}}}{\implies}}




\newtheorem{theorem}{Theorem}
\newtheorem{lemma}{Lemma}
\newtheorem{corollary}{Corollary}

\newtheorem{definition}{Definition}
\newtheorem{remark}{Remark}



\newcommand{\BibTeX}{B\kern-.05em{\sc i\kern-.025em b}\kern-.08em\TeX}


\begin{document}


\begin{frontmatter}


\paperid{7416} 


\title{Formal Local Implication Between Two Neural Networks\\[0.5em]
\large\textit{Accepted at European Conference on Artificial Intelligence (ECAI 2025)}}


\author[A]{\fnms{Anahita}~\snm{Baninajjar}\orcid{0000-0002-5719-274X
}\thanks{Corresponding Author. Email: anahita.baninajjar@eit.lth.se.}}
\author[B]{\fnms{Ahmed}~\snm{Rezine}\orcid{0000-0002-0440-4753}}
\author[A]{\fnms{Amir}~\snm{Aminifar}\orcid{0000-0002-1673-4733}} 

\address[A]{Department of Electrical and Information Technology, Lund University, Sweden}
\address[B]{Department of Computer and Information Science, Link\"oping University, Sweden}


\begin{abstract}
Given two neural network classifiers with the same input and output domains, our goal is to compare the two networks in relation to each other over an entire input region (e.g., within a vicinity of an input sample). To this end, we establish the foundation of \emph{formal local implication} between two networks, i.e., $\net{2} \locimplies \net{1}$, in an \emph{entire input region} $\sphere{}{}$. That is, network $\net{1}$ consistently makes a correct decision every time network $\net{2}$ does, and it does so in an entire input region $\sphere{}{}$. We further propose a \emph{sound} formulation for establishing such formally-verified (provably correct) local implications. 
The proposed formulation is relevant in the context of several application domains, e.g., for comparing a trained network and its corresponding compact (e.g., pruned, quantized, distilled) networks. We evaluate our formulation based on the MNIST, CIFAR10, and two real-world medical datasets, to show its relevance.
\end{abstract}

\end{frontmatter}

\glsresetall
\section{Introduction}

\label{intro}
Quantitative comparison of neural networks, e.g., in terms of performance, is a fundamental concept in the \gls{ML} domain. One common example is when a network is pruned, quantized, or distilled to run the compact networks on edge devices or smart sensors. In the medical domain, for instance, neural networks can enable implantable and wearable devices to detect myocardial infarction \citep{Sopic:myocard} or epileptic seizures \citep{baghersalimi2024m2skd} in real time. However, due to their limited computing resources, such devices often adopt the compact networks corresponding to the original medical-grade networks. It is vital for the compact network to reliably detect cardiac abnormalities/seizures, as lack of reliable decisions can jeopardize patients' lives. Therefore, reasoning about the decisions made by the compact network and their relation to the decisions made by the original/reference network is vital for the safe deployment of the compact networks.

In this work, we focus on compatible neural networks, i.e., two neural networks trained for the same learning/classification task, with the same input and output domains, but not the same weights and/or architectures. We define an input region as the region in a vicinity of a given input sample, e.g., captured by the absolute-value/Euclidean/maximum norm centered around the input sample. Given the two networks and an input region, we investigate whether it is possible to prove that, \emph{in the entire input region}, one network ($\net{1}$) consistently makes a correct decision every time the other network ($\net{2}$) does. That is, $\net{2} \locimplies \net{1}$, in an entire input region $\sphere{}{}$. This is the definition of \emph{implication} that is valid in an entire input region $\sphere{}{}$ (i.e., local), hence referred to as \emph{local implication}.

\begin{figure}[t]
\centering
\includegraphics[width=0.87\columnwidth, trim={0.cm 0.cm 0.cm 0.cm},clip]{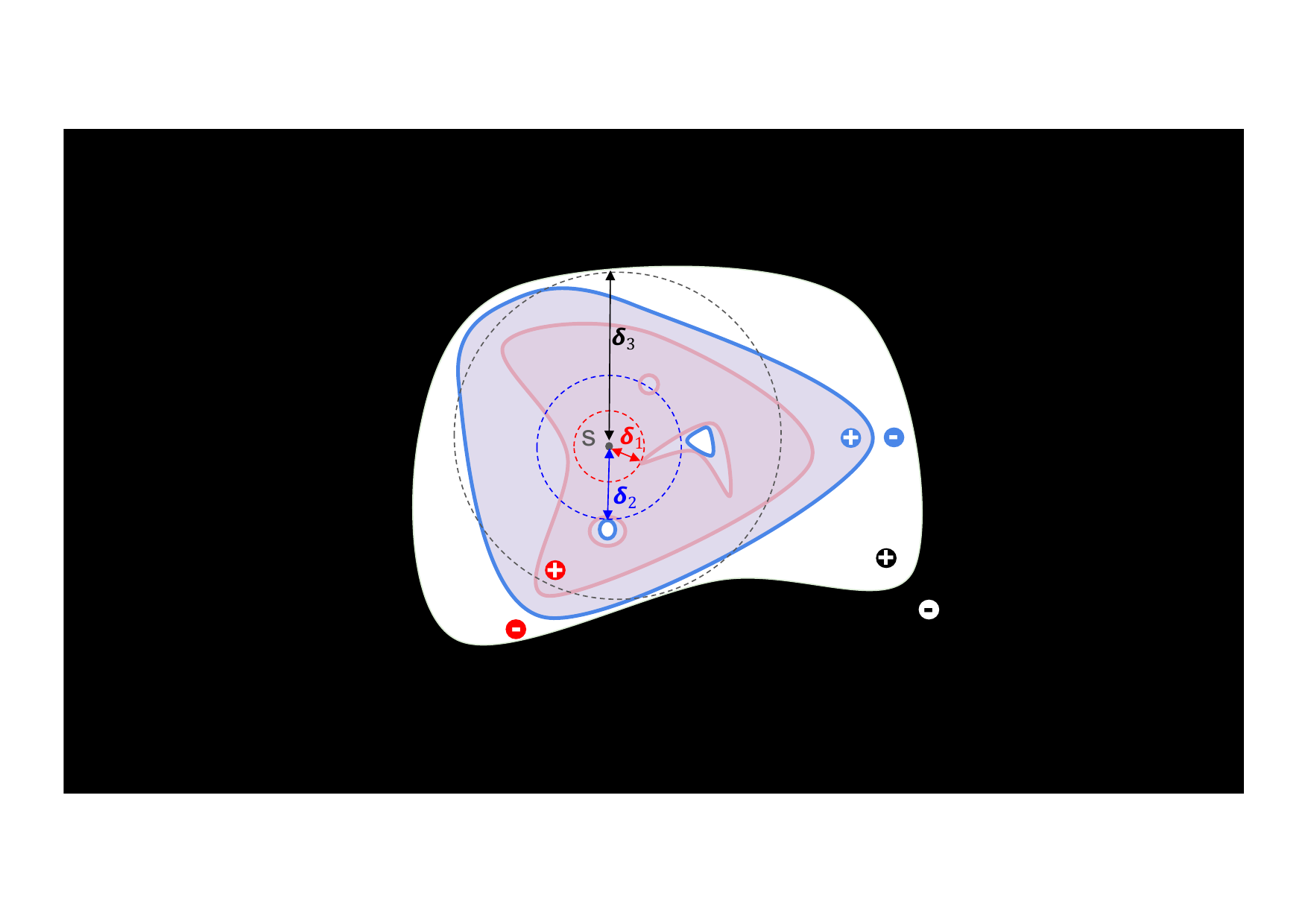}
\caption{Two binary classifiers (red and blue) learn to capture the black/white classes $\mathlarger{\oplus}$ and $\mathlarger{\ominus}$. Formal local implication captures, despite robustness violations of both red and blue classifiers, that the blue classifier makes the right decision each time the red one does within $\delta_3$ perturbations of sample $s$.}
\label{fig:implied-robustness}
\end{figure}

Formal local implication can capture, as illustrated in Figure~\ref{fig:implied-robustness},
that despite possible violations of local robustness, one network (the blue classifier in Figure~\ref{fig:implied-robustness}) makes the correct decision (with respect to ground truth as captured by the black/white  $\oplus$ and $\ominus$ in the figure) each time the other network does (the red classifier in the figure). Tracking the decisions of the two networks separately cannot capture local implication. Instead, outputs of both networks need to be compared for each single input sample throughout the whole considered input region. To this end, we define the notion of \gls{LRPR}. 
\gls{LRPR} not only enables us to reason about the given input samples, but also to reason about the entire region in the vicinity of the samples. 

In this paper, we propose a formulation to establish safe (provably-correct) bounds on \gls{LRPR} and formal verification guarantees on the decisions made by the two networks in the entire input region. \gls{LRPR} enables us to formally prove that a network consistently makes a correct decision every time the other network does, and it does so in the entire input region. We evaluate our proposed formulation extensively on several datasets to show its relevance, including two real-world medical applications for detection of cardiac arrhythmia or epileptic seizures.
Our main contributions are summarized below:
\begin{itemize}
\item We establish the foundation of \emph{formal local implication} between two networks $\net{1}$ and $\net{2}$, i.e., $\net{2} \locimplies \net{1}$.
\item We propose a \emph{sound} formulation for establishing such formally-verified (provably correct) local implications.
\item We conduct extensive experiments to compare the decisions made by pre-trained classifiers and their corresponding pruned, quantized, knowledge-distilled, or verification-friendly counterparts, on the MNIST dataset~\citep{mnist}, CIFAR10 dataset~\citep{krizhevsky2009learning}, CHB-MIT epilepsy dataset~\citep{shoeb_chb-mit_2010}, and MIT-BIH arrhythmia dataset~\citep{mit-bih}.
\end{itemize}

\section{Formal Definitions and Notations}
\label{sec:rpr}

In this section, we formally describe \gls{DNN} classifiers. Moreover, we introduce and formalize the notion of \gls{RPR} and its extension to an entire input region, i.e., \gls{LRPR}. Finally, we connect the notion of \gls{LRPR} to formal local implications.
\subsection{\acrfullpl{DNN}}

In this work, we mainly consider \gls{DNN} classifiers. A \gls{DNN} classifier is a
nonlinear function \( \net{}: \mathbb{R}^{{n^{\net{}}_0}} \rightarrow
\mathbb{R}^{n^{\net{}}_{N}}\) consisting of a sequence of \(N\) layers
followed by a softmax layer. Each layer is a linear transformation
followed by a nonlinear activation function. Here, \(n^{\net{}}_k\)
is the number of neurons in the \(k^{th}\) layer of network
\(\net{}\). Let \(f^{\net{}}_k(\cdot) : \mathbb{R}^{n^{\net{}}_{k-1}}
\rightarrow \mathbb{R}^{n^{\net{}}_k}\) be the function that derives
values of the \(k^{th}\) layer from the output of its preceding layer.
The values of the \(k^{th}\) layer, denoted by \(\xlayer{k}\), are given by:
\[ \xlayer{k} = f^{\net{}}_k(\xlayer{k-1}) = act^{\net{}}_k(\mW^{(k)}\xlayer{k-1} + \vb^{(k)}), \]
where \(\mW^{(k)}\) and \(\vb^{(k)}\) capture weights and biases of
the \(k^{th}\) layer, and \(act^{\net{}}_k\) represents an
activation function. The last layer uses softmax as the activation function. For each class \(\class{i}\) in the last layer \(N+1\), the softmax function value is: \(\subxlayer{\class{i}}{N+1}=\sm_{\class{i}}(\xlayer{N})\).

\subsection{\acrfullpl{LRPR}}
\label{subsec:lrpr}
We consider two \glspl{DNN} \net{1} with $N_1+1$ layers with values  $\xlayer{0}, \ldots \xlayer{N_1+1}$ and \net{2} with $N_2+1$ layers with values $\ylayer{0}, \ldots \ylayer{N_2+1}$. Suppose  \(n^{\net{1}}_{0} = n^{\net{2}}_{0}\) and \(n^{\net{1}}_{N_1} = n^{\net{2}}_{N_2}\). Such networks are said to be {compatible} as their inputs and outputs have the same dimensions.

Let us now introduce the notions of \acrfull{PR} and of \acrfull{RPR}.
\begin{definition}
\acrfull{PR}  $\pr{\net{1}}{\xlayer{0}}(\class{i},\class{j})$ of 
classes $(\class{i},\class{j})$ for \gls{DNN}
$\net{1}$ and input $\xlayer{0}$ is the ratio
\( \pr{\net{1}}{\xlayer{0}}(\class{i},\class{j})
=\frac{\sm_{\class{i}}(\xlayer{N_1})}{\sm_{\class{j}}(\xlayer{N_1})}
\)
of the outcome being $\class{i}$ by the one of being $\class{j}$.

\end{definition}

Recall classifiers decide on the class with a maximum softmax value. Let us consider binary classification for simplicity. Assuming the predicted class is \(\class{i}\), we know \(\sm_{\class{i}}(\xlayer{N_1})\ge {\sm_{\class{j}}(\xlayer{N_1})}\) and, in turn, \(\pr{\net{1}}{\xlayer{0}}(\class{i},\class{j}) = \frac{\sm_{\class{i}}(\xlayer{N_1})}{\sm_{\class{j}}(\xlayer{N_1})}\ge 1\).

\begin{definition} \label{def:rpr}
\acrfull{RPR}
$\rpr{\net{1}}{\net{2}}{\xlayer{0}}(\class{i},\class{j})$ of class pair
$(\class{i},\class{j})$  for $\net{1}$ w.r.t.
compatible $\net{2}$,
and for common input $\xlayer{0}=\ylayer{0}$, is the quotient of
the \gls{PR} in \net{1} by the one in \net{2}: 
\[ \rpr{\net{1}}{\net{2}}{\xlayer{0}}(\class{i},{\class{j}})\!=\!
\frac{\pi^{\net{1}}_{\xlayer{0}}(\class{i},{\class{j}})}{\pi^{\net{2}}_{\ylayer{0}}(\class{i},{\class{j}})}
\!=\! \frac{\sm_{\class{i}}(\vx^{(N_1)})\! \cdot \sm_{{\class{j}}}(\vy^{(N_2)})}{\sm_{{\class{j}}}(\vx^{(N_1)})\! \cdot \sm_{\class{i}}(\vy^{(N_2)})}.\]
\end{definition}
We use
$\rpr{\net{1}}{\net{2}}{\xlayer{0}}(\class{i},{\class{j}})$ to compare
output margins between $\class{i}$ and ${\class{j}}$ in \glspl{DNN} \(\net{1}\) and \(\net{2}\) for the same input $\xlayer{0}$.

To explore formal local implication, we establish in this paper bounds on \gls{RPR} values in entire input regions, e.g., in the vicinity of an input $\txlayer{0}$ or in a $\delta$-neighborhood of an input $\txlayer{0}$, defined as
 $  \sphere{\txlayer{0}}{\delta}=\left\{\xlayer{0} \mathrm{~s.t.~} \lVert \xlayer{0} - \txlayer{0}\rVert_\infty \leq \delta\right\} $.

 \begin{definition} \emph{\acrfull{LRPR}} of classes
  $(\class{i},{\class{j}})$ for $\net{1}$ w.r.t. its compatible network $\net{2}$
  in 
  $\sphere{\txlayer{0}}{\delta}$  is the set
  $\left\{
\rpr{\net{1}}{\net{2}}{\xlayer{0}}(\class{i},{\class{j}})
  ~|~
  \xlayer{0} \in \sphere{{\txlayer{0}}}{\delta}
  \right\}$.
\end{definition}

\begin{remark}
If \(\lrpr{\net{1}}{\net{2}}{\txlayer{0}}{\delta}{\class{i},{\class{j}}}\ge~1\), then 
\(\pr{\net{1}}{\xlayer{0}}(\class{i},\class{j}) \ge  \pr{\net{2}}{\xlayer{0}}(\class{i},\class{j})\), for all \(\xlayer{0}\) in the entire input region $\sphere{\txlayer{0}}{\delta}$. If $\net{2}$ makes a correct decision, i.e., \(\pr{\net{2}}{\xlayer{0}}(\class{i},\class{j}) = \frac{\sm_{\class{i}}(\xlayer{N_2})}{\sm_{\class{j}}(\xlayer{N_2})}\ge 1\), 
then \(\pr{\net{1}}{\xlayer{0}}(\class{i},\class{j}) = \frac{\sm_{\class{i}}(\xlayer{N_1})}{\sm_{\class{j}}(\xlayer{N_1})}\ge 1\), assuming the predicted class is \(\class{i}\). This, in turn, means that, in the entire input region $\sphere{\txlayer{0}}{\delta}$, $\net{1}$ will make a correct decision every time $\net{2}$ does, i.e.,
$\net{2} \stackrel{\sphere{\txlayer{0}}{\delta}}{\implies} \net{1}$. We say that $\net{2}$ implies $\net{1}$ on $\sphere{\txlayer{0}}{\delta}$.
\end{remark}


\section{Method}
\label{sec:method}

In this section, we introduce an optimization problem to bound \glspl{LRPR} for two compatible \glspl{DNN} and establish formal local implication between two networks. We also describe how we introduce and handle an over-approximation of the two networks in order to soundly solve the optimization problem and derive a (provably-correct) verified bound.

Assume two compatible \glspl{DNN} \net{1} and \net{2} with
respectively $N_1+1$ and $N_2+1$ layers, a common input $\txlayer{0}$
in the domain $\bm{D}$ of \net{1} and \net{2}, and a perturbation
bound $\delta$. Our goal is to find, for any class pair
$(\class{i},\class{j})$, a tight lower bound for
$\lrpr{\net{1}}{\net{2}}{\txlayer{0}}{\delta}{\class{i},\class{j}}$
and a tight upper bound for
$\urpr{\net{1}}{\net{2}}{\txlayer{0}}{\delta}{\class{i},\class{j}}$. 

The above optimization problem involves the softmax function. Therefore, to solve this optimization problem, we look into \(
\ln\left(\rpr{\net{1}}{\net{2}}{\xlayer{0}}(\class{i},{\class{j}})\right)
\) and 
observe it coincides with
\(
(\subxlayer{\class{i}}{N_1} - \subxlayer{\class{j}}{N_1})-(\subylayer{\class{i}}{N_2} - \subylayer{\class{j}}{N_2})
\). Hence, we can characterize \gls{LRPR} bounds by reasoning on inputs to the softmax layers (i.e., networks' logits). As a result, our optimization objective is simplified to:
\begin{align}
  \ln\left(\lrpr{\net{1}}{\net{2}}{\txlayer{0}}{\delta}{\class{i},\class{j}}\right)= \nonumber \\{\min_{\xlayer{0}=\ylayer{0}\in\sphere{\txlayer{0}}{\delta}} \left((\subxlayer{\class{i}}{N_1} - \subxlayer{\class{j}}{N_1}) -  (\subylayer{\class{i}}{N_2} - \subylayer{\class{j}}{N_2})\right), } \nonumber\\
\ln\left(\urpr{\net{1}}{\net{2}}{\txlayer{0}}{\delta}{\class{i},\class{j}}\right)= \nonumber \\{\max_{\xlayer{0}=\ylayer{0}\in\sphere{\txlayer{0}}{\delta}} \left((\subxlayer{\class{i}}{N_1} - \subxlayer{\class{j}}{N_1}) -  (\subylayer{\class{i}}{N_2} - \subylayer{\class{j}}{N_2})\right).}\nonumber
\end{align}
Let us now formulate the optimization problem based on the above objective function and the constraints imposed by the neural networks and input region, as follows:
\begin{align}
  \min_{\xlayer{0}}&\hspace{0.5em} 
  (\subxlayer{\class{i}}{N_1} - \subxlayer{\class{j}}{N_1}) -(\subylayer{\class{i}}{N_2} - \subylayer{\class{j}}{N_2}),  \label{obj} \\
    \textrm{s.t.}  
    & \hspace{0.3cm} {{\vy}}^{(0)} = {\vx}^{(0)}, \hspace{0.3cm} \txlayer{0} \in \bm{D},
    \label{input}\\
    & \hspace{0.3cm} \lVert \xlayer{0} - \txlayer{0}\rVert_\infty = \lVert \ylayer{0} - \txlayer{0}\rVert_\infty \leq \delta, \label{pert}\\
    & \hspace{0.3cm} \xlayer{k} = f^{\net{1}}_k(\xlayer{k-1}), \; \forall k \in \{1, \dots, N_1\}, \label{N1} \\
    & \hspace{0.3cm} \ylayer{l} = f^{\net{2}}_l(\ylayer{l-1}), \; \forall l \in \{1, \dots, N_2\}. \label{N2}
\end{align}
Let
$\minprob{\net{1}}{\net{2}}{\txlayer{0}}{\delta}(\class{i},\class{j})$
be the exact value obtained as solution to this problem. 
\Eqref{obj} introduces the objective function used to capture the (logarithm of the) minimum \gls{RPR} of the class pair $(\class{i},\class{j})$ for
$\net{1}$ w.r.t. $\net{2}$ in the input region $\sphere{\txlayer{0}}{\delta}$.\footnote{The objective function captures a stronger condition than mere implication, as it ensures that the margin does not shrink. While our framework is designed for margin preservation, it can be easily adapted to verify only implication by changing the objective to $\subxlayer{\class{i}}{N_1} - \subxlayer{\class{j}}{N_1}$ and moving the second part of the original objective into the constraints as $\subylayer{\class{i}}{N_2} - \subylayer{\class{j}}{N_2} > 0$.} Note that $\subxlayer{\class{i}}{N_1} - \subxlayer{\class{j}}{N_1}$ captures the difference between the logit values associated to classes $\class{i}$ and $\class{j}$ in network \net{1}. Similarly, $\subylayer{\class{i}}{N_2} - \subylayer{\class{j}}{N_2}$ captures the difference between the logit values associated to the same classes in \net{2}. The objective function is then to minimize the difference between these two quantities.

Let us consider Equations~(\ref{input})--(\ref{N2}). \Eqref{input} enforces both that $\ylayer{0}$ (the perturbed input to network \net{2}) equals $\xlayer{0}$ (the perturbed input to network \net{1}), and that the original input \(\txlayer{0}\) belongs to the dataset $\bm{D}$ of the two networks. \Eqref{pert} enforces that the perturbed inputs $\xlayer{0}$ and $\ylayer{0}$ are in the $\delta$-neighborhood
of \(\txlayer{0}\). \Eqref{N1} characterizes values of the first $N_1$ layers of network \net{1} as it relates the values of the $k^{th}$ layer (for $k$ in $\set{1, \ldots, N_1}$) to those of its preceding layer, using the nonlinear function \({f}^{\net{1}}_{k}:\mathbb{R}^{n^{\net{1}}_{k-1}}\to \mathbb{R}^{n^{\net{1}}_{k}}\). The same is applied to network \net{2} using the nonlinear functions \({f}^{\net{2}}_{l}:\mathbb{R}^{n^{\net{2}}_{l-1}}\to \mathbb{R}^{n^{\net{2}}_{l}}\)
for each layer $l$ as captured in~\Eqref{N2}. 

Equations~(\ref{N1})~and~(\ref{N2}) involve activation functions, hence result in nonlinear constraints. As such, finding the exact global optimal solution is intractable. To address this, we consider a sound over-approximation of nonlinearities. In particular, we consider \gls{ReLU} activation function and adopt existing relaxations for it~\citep{Ehler:linear,DeepPoly, baninajjar2023safedeep}, to over-approximate the values computed at each layer using linear inequalities (see the appendix).

These over-approximations result in a relaxed optimization problem that can be solved using \gls{LP}. The solution of the relaxed optimization problem is denoted by $\relaxprob{\net{1}}{\net{2}}{\txlayer{0}}{\delta}(\class{i},\class{j})$. Because the relaxed optimization over-approximates the exact one in Equations~(\ref{obj})--(\ref{N2}) and that we are able to find the optimal solution to the \gls{LP} relaxed formulation, any lower bound obtained for the relaxed problem is guaranteed to be smaller than a solution for the original minimization problem, i.e., \(\relaxprob{\net{1}}{\net{2}}{\txlayer{0}}{\delta}(\class{i},\class{j}) \leq \minprob{\net{1}}{\net{2}}{\txlayer{0}}{\delta}(\class{i},\class{j})\).

\begin{theorem}
  \label{theo:relaxed_bounds}
  Let $(\class{i},{\class{j}})$ be a pair of classes of compatible
  \glspl{DNN} \net{1} and \net{2}.  Assume a neighborhood
  $\sphere{\txlayer{0}}{\delta}$ and let
  $\relaxprob{\net{1}}{\net{2}}{\txlayer{0}}{\delta}(\class{i},\class{j})$
  (resp. $\relaxprob{\net{2}}{\net{1}}{\txlayer{0}}{\delta}(\class{i},\class{j})$)
  be a solution to the relaxed minimization problem corresponding to
  \gls{LRPR} of $\net{1}$ w.r.t. $\net{2}$ (resp. $\net{2}$ w.r.t. $\net{1}$). 
  Then, we have:
  {\small
  \[\arraycolsep=1.4pt\def\arraystretch{2.2}
  \begin{array}{l}
    \relaxprob{\net{1}}{\net{2}}{\txlayer{0}}{\delta}(\class{i},\class{j}) \leq 
    \minprob{\net{1}}{\net{2}}{\txlayer{0}}{\delta}(\class{i},\class{j})= \\ \ln\left(\lrpr{\net{1}}{\net{2}}{\txlayer{0}}{\delta}{\class{i},\class{j}}\right) 

\leq
    \\ \ln\left(\urpr{\net{1}}{\net{2}}{\txlayer{0}}{\delta}{\class{i},\class{j}}\right)

    =
   \\ -\ln\left(\lrpr{\net{2}}{\net{1}}{\txlayer{0}}{\delta}{\class{i},\class{j}}\right)= \\ -\minprob{\net{2}}{\net{1}}{\txlayer{0}}{\delta}(\class{i},\class{j})
    \leq 
   -\relaxprob{\net{2}}{\net{1}}{\txlayer{0}}{\delta}(\class{i},\class{j}).
  \end{array}
  \]
  }
\end{theorem}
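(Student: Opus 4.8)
The plan is to walk the displayed chain from left to right, link by link, relying on three facts already available in the excerpt: (a) the identity noted just above, namely that $\ln\!\left(\rpr{\net{1}}{\net{2}}{\xlayer{0}}(\class{i},\class{j})\right)$ equals the logit difference $(\subxlayer{\class{i}}{N_1} - \subxlayer{\class{j}}{N_1}) - (\subylayer{\class{i}}{N_2} - \subylayer{\class{j}}{N_2})$ for every admissible $\xlayer{0}=\ylayer{0}$; (b) the soundness of the \gls{ReLU} relaxation, which makes the relaxed feasible set a superset of the exact one and hence forces $\relaxprob{\net{1}}{\net{2}}{\txlayer{0}}{\delta}(\class{i},\class{j}) \leq \minprob{\net{1}}{\net{2}}{\txlayer{0}}{\delta}(\class{i},\class{j})$ (and likewise with the two networks swapped); and (c) monotonicity of $\ln$, which lets it pass through $\min$ and $\max$ freely. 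I would also remark at the outset that $\sphere{\txlayer{0}}{\delta}$ is a nonempty compact box and the networks are continuous on it, so the extrema written as $\min$ and $\max$ are genuinely attained and the $\lrpr{}{}{}{}{}$/$\urpr{}{}{}{}{}$ notation is legitimate.

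For the first link, substituting fact (a) into the objective of Equations~(\ref{obj})--(\ref{N2}) identifies $\minprob{\net{1}}{\net{2}}{\txlayer{0}}{\delta}(\class{i},\class{j})$ with $\min_{\xlayer{0}\in\sphere{\txlayer{0}}{\delta}} \ln\!\left(\rpr{\net{1}}{\net{2}}{\xlayer{0}}(\class{i},\class{j})\right)$, and by fact (c) this equals $\ln$ of the minimum \gls{RPR} over the region, i.e. $\ln\!\left(\lrpr{\net{1}}{\net{2}}{\txlayer{0}}{\delta}{\class{i},\class{j}}\right)$ by the definition of \gls{LRPR}; prepending fact (b) yields $\relaxprob{\net{1}}{\net{2}}{\txlayer{0}}{\delta}(\class{i},\class{j}) \leq \minprob{\net{1}}{\net{2}}{\txlayer{0}}{\delta}(\class{i},\class{j}) = \ln\!\left(\lrpr{\net{1}}{\net{2}}{\txlayer{0}}{\delta}{\class{i},\class{j}}\right)$. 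The next link, $\ln\!\left(\lrpr{\net{1}}{\net{2}}{\txlayer{0}}{\delta}{\class{i},\class{j}}\right) \leq \ln\!\left(\urpr{\net{1}}{\net{2}}{\txlayer{0}}{\delta}{\class{i},\class{j}}\right)$, is simply the fact that a $\min$ over a nonempty set is at most the corresponding $\max$.

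The central link is $\ln\!\left(\urpr{\net{1}}{\net{2}}{\txlayer{0}}{\delta}{\class{i},\class{j}}\right) = -\ln\!\left(\lrpr{\net{2}}{\net{1}}{\txlayer{0}}{\delta}{\class{i},\class{j}}\right)$. From Definition~\ref{def:rpr}, swapping the two networks inverts the ratio: for a fixed input the four softmax factors defining $\rpr{\net{2}}{\net{1}}{\xlayer{0}}(\class{i},\class{j})$ are exactly the reciprocals of those defining $\rpr{\net{1}}{\net{2}}{\xlayer{0}}(\class{i},\class{j})$ (equivalently, by fact (a), the two logit differences are negatives of one another), so $\ln\!\left(\rpr{\net{1}}{\net{2}}{\xlayer{0}}(\class{i},\class{j})\right) = -\ln\!\left(\rpr{\net{2}}{\net{1}}{\xlayer{0}}(\class{i},\class{j})\right)$ pointwise on $\sphere{\txlayer{0}}{\delta}$. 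Taking the maximum over $\xlayer{0}$ turns the left side into $\ln\!\left(\urpr{\net{1}}{\net{2}}{\txlayer{0}}{\delta}{\class{i},\class{j}}\right)$ and the right side into minus the minimum, i.e. $-\ln\!\left(\lrpr{\net{2}}{\net{1}}{\txlayer{0}}{\delta}{\class{i},\class{j}}\right)$, again using fact (c). The last two links mirror the first two with the roles of $\net{1}$ and $\net{2}$ exchanged: $-\ln\!\left(\lrpr{\net{2}}{\net{1}}{\txlayer{0}}{\delta}{\class{i},\class{j}}\right) = -\minprob{\net{2}}{\net{1}}{\txlayer{0}}{\delta}(\class{i},\class{j})$ is the swapped instance of the first-link equality, negated, and $-\minprob{\net{2}}{\net{1}}{\txlayer{0}}{\delta}(\class{i},\class{j}) \leq -\relaxprob{\net{2}}{\net{1}}{\txlayer{0}}{\delta}(\class{i},\class{j})$ is the swapped instance of fact (b), negated so the inequality reverses.

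I do not anticipate a genuine obstacle here: every step is a definitional unfolding, monotonicity of $\ln$, soundness of the relaxation, or the reciprocity of \gls{RPR} under network swap. The only place demanding care is the bookkeeping in the central link — tracking that $\max$ of a negated quantity equals minus the $\min$, and pulling $\ln$ outside $\min$/$\max$ only with its monotonicity explicitly invoked — together with the routine observation that the extrema are attained, which legitimizes the $\min$/$\max$ notation throughout.
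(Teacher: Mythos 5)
Your proposal is correct and follows essentially the same route as the paper's own proof: the logarithm identity reducing $\ln$ of the \gls{RPR} to a difference of logit gaps, monotonicity of $\ln$ to commute it with $\min$/$\max$, soundness of the relaxation for the two outer inequalities, and the reciprocity of the \gls{RPR} under swapping the networks (so that $\max$ of the negated quantity becomes minus the $\min$) for the central equality. The only addition is your explicit remark that the extrema are attained on the compact box, which the paper leaves implicit.
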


Based on Theorem \ref{theo:relaxed_bounds}, the solutions to relaxed optimization problems provide safe lower/upper bounds for \glspl{LRPR}, i.e., \(\relaxprob{\net{1}}{\net{2}}{\txlayer{0}}{\delta}(\class{i},\class{j}) \leq \minprob{\net{1}}{\net{2}}{\txlayer{0}}{\delta}(\class{i},\class{j}) \le -\relaxprob{\net{2}}{\net{1}}{\txlayer{0}}{\delta}(\class{i},\class{j})\). 

\begin{corollary}\label{lab:cor}
Let $(\class{i},{\class{j}})$ be a pair of classes of compatible
  \glspl{DNN} \net{1} and \net{2}.  Assume a neighborhood
  $\sphere{\txlayer{0}}{\delta}$ and let
  $\relaxprob{\net{1}}{\net{2}}{\txlayer{0}}{\delta}(\class{i},\class{j})$
  be a solution to the relaxed minimization problem corresponding to
  \gls{LRPR} of $\net{1}$ w.r.t. $\net{2}$. If $\relaxprob{\net{1}}{\net{2}}{\txlayer{0}}{\delta}(\class{i},\class{j}) > 0$, then for all perturbed inputs $\xlayer{0} \in \sphere{\txlayer{0}}{\delta}$ for which $\net{2}$ correctly classifies $\xlayer{0}$, then $\net{1}$ also correctly classifies $\xlayer{0}$.
  That is, 
  $\net{2}$ implies $\net{1}$ on $\sphere{\txlayer{0}}{\delta}$, i.e., 
  $\net{2} \stackrel{\sphere{\txlayer{0}}{\delta}}{\implies} \net{1}$.
\end{corollary}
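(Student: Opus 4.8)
The plan is to reduce the corollary to a short chain of inequalities, combining Theorem~\ref{theo:relaxed_bounds} with the Remark in Section~\ref{sec:rpr}; no new estimates are needed. First I would invoke the first line of Theorem~\ref{theo:relaxed_bounds}, namely that the relaxed solution is a sound lower bound, $\relaxprob{\net{1}}{\net{2}}{\txlayer{0}}{\delta}(\class{i},\class{j}) \leq \minprob{\net{1}}{\net{2}}{\txlayer{0}}{\delta}(\class{i},\class{j}) = \ln\!\left(\lrpr{\net{1}}{\net{2}}{\txlayer{0}}{\delta}{\class{i},\class{j}}\right)$. Hence the hypothesis $\relaxprob{\net{1}}{\net{2}}{\txlayer{0}}{\delta}(\class{i},\class{j}) > 0$ forces $\ln\!\left(\lrpr{\net{1}}{\net{2}}{\txlayer{0}}{\delta}{\class{i},\class{j}}\right) > 0$; since the quantity on the right is a minimum over $\sphere{\txlayer{0}}{\delta}$, being strictly positive it follows that $\rpr{\net{1}}{\net{2}}{\xlayer{0}}(\class{i},\class{j}) > 1$ for \emph{every} $\xlayer{0}\in\sphere{\txlayer{0}}{\delta}$.

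Second, I would fix an arbitrary $\xlayer{0} = \ylayer{0} \in \sphere{\txlayer{0}}{\delta}$ on which $\net{2}$ classifies correctly, and unfold the definitions. By Definition~\ref{def:rpr}, $\rpr{\net{1}}{\net{2}}{\xlayer{0}}(\class{i},\class{j}) > 1$ is equivalent to $\pr{\net{1}}{\xlayer{0}}(\class{i},\class{j}) > \pr{\net{2}}{\xlayer{0}}(\class{i},\class{j})$. Taking $\class{i}$ to be the ground-truth label, a correct decision by $\net{2}$ means $\sm_{\class{i}}(\ylayer{N_2}) \geq \sm_{\class{j}}(\ylayer{N_2})$, i.e., $\pr{\net{2}}{\xlayer{0}}(\class{i},\class{j}) \geq 1$. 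Chaining these, $\pr{\net{1}}{\xlayer{0}}(\class{i},\class{j}) > 1$, hence $\sm_{\class{i}}(\xlayer{N_1}) > \sm_{\class{j}}(\xlayer{N_1})$, so $\net{1}$ also decides $\class{i}$ on $\xlayer{0}$, i.e., classifies it correctly. Since $\xlayer{0}$ was an arbitrary point of the region, this is precisely $\net{2} \locimplies \net{1}$ on $\sphere{\txlayer{0}}{\delta}$.

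The only delicate point --- the \emph{main obstacle}, such as it is --- is bookkeeping of strict versus non-strict inequalities: the Remark is phrased with $\geq 1$ throughout, whereas here the hypothesis is the strict $\relaxprob{\net{1}}{\net{2}}{\txlayer{0}}{\delta}(\class{i},\class{j}) > 0$. I would make explicit that strictness survives the passage through Theorem~\ref{theo:relaxed_bounds} (which only uses $\leq$), yielding $\rpr{\net{1}}{\net{2}}{\xlayer{0}}(\class{i},\class{j}) > 1$, and that this is exactly what lets us conclude $\net{1}$'s decision even in the boundary case where $\net{2}$'s margin is a tie. For more than two classes, I would note that the argument runs verbatim with $\class{i}$ the true label and $\class{j}$ ranging over competing classes: if $\relaxprob{\net{1}}{\net{2}}{\txlayer{0}}{\delta}(\class{i},\class{j}) > 0$ holds for every competitor $\class{j}$ and $\net{2}$ decides $\class{i}$, then so does $\net{1}$; otherwise one simply restricts to the binary setting, as the paper does when introducing \gls{RPR}.
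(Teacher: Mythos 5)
Your proof is correct and rests on the same two ingredients as the paper's: the soundness inequality $\relaxprob{\net{1}}{\net{2}}{\txlayer{0}}{\delta}(\class{i},\class{j}) \leq \ln\left(\lrpr{\net{1}}{\net{2}}{\txlayer{0}}{\delta}{\class{i},\class{j}}\right)$ from Theorem~\ref{theo:relaxed_bounds}, followed by unfolding the definition of \gls{RPR} to get $\pr{\net{1}}{\xlayer{0}}(\class{i},\class{j}) > \pr{\net{2}}{\xlayer{0}}(\class{i},\class{j})$ pointwise and combining it with $\pr{\net{2}}{\xlayer{0}}(\class{i},\class{j}) \ge 1$. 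The only presentational difference is that you give the direct implication in one chain, whereas the paper enumerates the four correctness combinations and rules out the single offending case by contradiction; your handling of the strict-versus-non-strict inequalities and of the multi-class bookkeeping is, if anything, more explicit than the paper's.
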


\paragraph{Joint vs Independent Analysis}

Our original optimization problem and its linear relaxation compute \glspl{RPR} bounds for a common input across both networks, ranging over the considered neighborhood. An alternative approach is to reason based on independently obtained ranges of \acrfullpl{PR} for each network. However, while independently computing and combining the ranges of \glspl{PR} for each network leads to sound approximations of \glspl{RPR}, it results in a significant loss of precision, as it does not consider a common input to both networks.
This is formalized by the theorem below, and
is witness by our experiments where we evaluate the corresponding loss in precision.
%

 \begin{theorem}
   \label{theo:individual_diff}

Let \(\minprob{\net{1}}{\zeronet}{\txlayer{0}}{\delta}(\class{i},\class{j})\) denote the value of the objective function in~\Eqref{obj}, i.e., $(\subxlayer{\class{i}}{N_1} - \subxlayer{\class{j}}{N_1}) -(\subylayer{\class{i}}{N_2} - \subylayer{\class{j}}{N_2})$, under the choice of a constant second network \(\net{2}\) that assigns uniform probabilities to all outcomes. In this case, the second term becomes zero, and the expression simplifies to
\(
\subxlayer{\class{i}}{N_1} - \subxlayer{\class{j}}{N_1}
\), corresponding to computing minimum \glspl{PR} for \(\net{1}\) on its own. Similarly, \(\minprob{\net{2}}{\zeronet}{\txlayer{0}}{\delta}(\class{i},\class{j})\) equals \(\subylayer{\class{i}}{N_2} - \subylayer{\class{j}}{N_2}\), and \(\minprob{\zeronet}{\net{2}}{\txlayer{0}}{\delta}(\class{i},\class{j})\) corresponds to its negation. This leads to the following inequality, expressing that the sum of the independently obtained \glspl{PR} is less than or equal to the \gls{RPR} value when both networks are considered together:
     \[\minprob{\net{1}}{\zeronet}{\txlayer{0}}{\delta}(\class{i},\class{j})+\minprob{\zeronet}{\net{2}}{\txlayer{0}}{\delta}(\class{i},\class{j})
     \leq 
     \minprob{\net{1}}{\net{2}}{\txlayer{0}}{\delta}(\class{i},\class{j}).
     \]

 \end{theorem}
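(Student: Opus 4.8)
The plan is to observe that the three quantities $\minprob{\net{1}}{\zeronet}{\txlayer{0}}{\delta}(\class{i},\class{j})$, $\minprob{\zeronet}{\net{2}}{\txlayer{0}}{\delta}(\class{i},\class{j})$, and $\minprob{\net{1}}{\net{2}}{\txlayer{0}}{\delta}(\class{i},\class{j})$ are minima of related objective functions over the \emph{same} feasible set, namely the inputs lying in $\sphere{\txlayer{0}}{\delta}$ (together with the layer-propagation constraints of the relevant networks), and then apply the elementary fact that the minimum of a sum is at least the sum of the minima. Concretely, write $g_1(\xlayer{0}) = \subxlayer{\class{i}}{N_1} - \subxlayer{\class{j}}{N_1}$ for the logit-margin function of $\net{1}$ as a function of the shared input, and $g_2(\ylayer{0}) = \subylayer{\class{i}}{N_2} - \subylayer{\class{j}}{N_2}$ for that of $\net{2}$; both are determined by the input via Equations~(\ref{N1}) and~(\ref{N2}) respectively. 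Using the observation already made in the text that the second network being the constant uniform classifier $\zeronet$ zeroes out its contribution, we have $\minprob{\net{1}}{\zeronet}{\txlayer{0}}{\delta}(\class{i},\class{j}) = \min_{\xlayer{0}\in\sphere{\txlayer{0}}{\delta}} g_1(\xlayer{0})$, and symmetrically $\minprob{\zeronet}{\net{2}}{\txlayer{0}}{\delta}(\class{i},\class{j}) = \min_{\ylayer{0}\in\sphere{\txlayer{0}}{\delta}} \big(-g_2(\ylayer{0})\big)$, while $\minprob{\net{1}}{\net{2}}{\txlayer{0}}{\delta}(\class{i},\class{j}) = \min_{\xlayer{0}=\ylayer{0}\in\sphere{\txlayer{0}}{\delta}} \big(g_1(\xlayer{0}) - g_2(\ylayer{0})\big)$.

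The key steps, in order, are: (i) rewrite each of the three terms explicitly as a minimization of $g_1$, of $-g_2$, and of $g_1 - g_2$ over $\sphere{\txlayer{0}}{\delta}$, invoking the text's remark that $\net{2}=\zeronet$ makes $\subylayer{\class{i}}{N_2} - \subylayer{\class{j}}{N_2}=0$; (ii) let $\xlayer{0}^\star$ be a minimizer of the joint objective $g_1 - g_2$ over $\sphere{\txlayer{0}}{\delta}$ (existence follows from continuity of the network maps and compactness of $\sphere{\txlayer{0}}{\delta}$); (iii) note that $\xlayer{0}^\star$ is in particular feasible for each of the two separate minimizations, so $g_1(\xlayer{0}^\star) \ge \minprob{\net{1}}{\zeronet}{\txlayer{0}}{\delta}(\class{i},\class{j})$ and $-g_2(\xlayer{0}^\star) \ge \minprob{\zeronet}{\net{2}}{\txlayer{0}}{\delta}(\class{i},\class{j})$; (iv) add these two inequalities to obtain $\minprob{\net{1}}{\net{2}}{\txlayer{0}}{\delta}(\class{i},\class{j}) = g_1(\xlayer{0}^\star) - g_2(\xlayer{0}^\star) \ge \minprob{\net{1}}{\zeronet}{\txlayer{0}}{\delta}(\class{i},\class{j}) + \minprob{\zeronet}{\net{2}}{\txlayer{0}}{\delta}(\class{i},\class{j})$, which is exactly the claimed inequality.

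I do not anticipate a genuine obstacle here; the argument is the standard "min of a sum $\ge$ sum of mins" superadditivity fact, and the only thing requiring care is bookkeeping — being precise that all three problems share the identical constraint region on the input (the constraint $\ylayer{0}=\xlayer{0}$ in the joint problem is exactly what lets a joint minimizer be read as a feasible point for each marginal problem), and that substituting $\net{2}=\zeronet$ does indeed collapse the objective as the text asserts. If one wants to avoid appealing to the existence of an exact minimizer $\xlayer{0}^\star$ (e.g. if the feasible set were not compact), the same conclusion follows by taking an infimizing sequence and passing to the limit, but since $\sphere{\txlayer{0}}{\delta}$ is a closed box and the network functions are continuous, the minimizer exists and the direct argument suffices.
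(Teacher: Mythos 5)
Your proposal is correct and matches the paper's own argument: the paper likewise evaluates the two separate objectives at the (logits of the) joint minimizer, notes that each separate minimum is bounded above by the corresponding term there, and adds the two inequalities. The only cosmetic difference is that you make the ``minimum of a sum is at least the sum of the minima'' framing and the feasibility of the joint minimizer for each marginal problem explicit, which the paper leaves implicit under ``by definitions.''
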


 \paragraph{Transitivity Property of \glspl{LRPR}}
Here, we show that \glspl{LRPR} has the transitivity property, which can extend the results to more than two compatible networks.

\begin{theorem} \label{theo:transitivity}
Let $(\class{i},{\class{j}})$ be a pair of classes of compatible
  \glspl{DNN} \net{1}, \net{2}, and \net{3}. Assume a neighborhood
  $\sphere{\txlayer{0}}{\delta}$ and let
  $\relaxprob{\net{1}}{\net{2}}{\txlayer{0}}{\delta}(\class{i},\class{j})$
  be a solution to the relaxed minimization problem corresponding to
  \gls{LRPR} of \net{1} w.r.t. \net{2}. Similarly, let
  $\relaxprob{\net{2}}{\net{3}}{\txlayer{0}}{\delta}(\class{i},\class{j})$
  be a solution to the relaxed minimization problem corresponding to
  \gls{LRPR} of \net{2} w.r.t. \net{3}. If we know $\relaxprob{\net{1}}{\net{2}}{\txlayer{0}}{\delta}(\class{i},\class{j})> 0$ and $\relaxprob{\net{2}}{\net{3}}{\txlayer{0}}{\delta}(\class{i},\class{j})>0$, then we can conclude $\relaxprob{\net{1}}{\net{3}}{\txlayer{0}}{\delta}(\class{i},\class{j})> 0$.
\end{theorem}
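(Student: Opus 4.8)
The plan is to argue at the level of the linear‑programming relaxations themselves, not by chaining the exact bounds. Write $\vz^{(0)},\dots,\vz^{(N_3+1)}$ for the layer values of $\net{3}$ and abbreviate, at a point, the logit margins by $g_1=\subxlayer{\class{i}}{N_1}-\subxlayer{\class{j}}{N_1}$, $g_2=\subylayer{\class{i}}{N_2}-\subylayer{\class{j}}{N_2}$, and $g_3=\vz^{(N_3)}_{\class{i}}-\vz^{(N_3)}_{\class{j}}$, so that the three relaxed problems in the statement have objectives $g_1-g_2$, $g_2-g_3$, and $g_1-g_3$, respectively. The first thing I would record is a structural property of the over‑approximation: the linear inequalities that replace the \gls{ReLU}s of a network $\net{m}$ (as in the appendix) have coefficients determined solely by $\net{m}$ and by the region $\sphere{\txlayer{0}}{\delta}$, independently of which companion network $\net{m}$ is being compared against. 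Consequently the feasible set of the relaxed $\net{1}$-vs-$\net{2}$ problem is precisely the set of tuples $(\xlayer{0},\tilde\vx,\tilde\vy)$ such that $\xlayer{0}\in\sphere{\txlayer{0}}{\delta}$, $\tilde\vx$ satisfies $\net{1}$'s relaxed layer constraints from $\xlayer{0}$, and $\tilde\vy$ satisfies $\net{2}$'s relaxed layer constraints from the same $\xlayer{0}$; the two networks are coupled only through the shared input, and the same decomposition holds for the $\net{2}$-vs-$\net{3}$ and the $\net{1}$-vs-$\net{3}$ problems. I would also note that for every $\xlayer{0}\in\sphere{\txlayer{0}}{\delta}$ each single network's relaxed constraints are satisfiable, since the exact forward propagation from $\xlayer{0}$ is always one feasible point (soundness of the relaxation).

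Next I would fix an arbitrary feasible point $(\xlayer{0},\tilde\vx,\tilde\vz)$ of the relaxed $\net{1}$-vs-$\net{3}$ problem and, invoking the satisfiability remark, pick relaxed values $\tilde\vy$ for $\net{2}$ consistent with the same $\xlayer{0}$. By the decomposition, $(\xlayer{0},\tilde\vx,\tilde\vy)$ is then feasible for the relaxed $\net{1}$-vs-$\net{2}$ problem and $(\xlayer{0},\tilde\vy,\tilde\vz)$ is feasible for the relaxed $\net{2}$-vs-$\net{3}$ problem, so at this point $g_1-g_2\ge\relaxprob{\net{1}}{\net{2}}{\txlayer{0}}{\delta}(\class{i},\class{j})$ and $g_2-g_3\ge\relaxprob{\net{2}}{\net{3}}{\txlayer{0}}{\delta}(\class{i},\class{j})$. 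Adding these two inequalities cancels $g_2$ and gives $g_1-g_3\ge\relaxprob{\net{1}}{\net{2}}{\txlayer{0}}{\delta}(\class{i},\class{j})+\relaxprob{\net{2}}{\net{3}}{\txlayer{0}}{\delta}(\class{i},\class{j})>0$, where positivity uses the two hypotheses. Since $g_1-g_3$ is exactly the $\net{1}$-vs-$\net{3}$ objective at the chosen feasible point and the point was arbitrary, minimizing over the relaxed $\net{1}$-vs-$\net{3}$ feasible set yields $\relaxprob{\net{1}}{\net{3}}{\txlayer{0}}{\delta}(\class{i},\class{j})\ge\relaxprob{\net{1}}{\net{2}}{\txlayer{0}}{\delta}(\class{i},\class{j})+\relaxprob{\net{2}}{\net{3}}{\txlayer{0}}{\delta}(\class{i},\class{j})>0$, which is the claim; combined with Corollary~\ref{lab:cor} it also gives $\net{3}\locimplies\net{1}$ on $\sphere{\txlayer{0}}{\delta}$.

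The main obstacle is precisely why this needs the structural argument rather than a one‑line transitivity of inequalities. From the hypotheses and Theorem~\ref{theo:relaxed_bounds} one gets pointwise $g_1>g_2$ and $g_2>g_3$ on $\sphere{\txlayer{0}}{\delta}$, hence $g_1>g_3$ there and $\minprob{\net{1}}{\net{3}}{\txlayer{0}}{\delta}(\class{i},\class{j})>0$ for the \emph{exact} problem — but the relaxed value is only a lower bound on the exact one, so this does not by itself yield $\relaxprob{\net{1}}{\net{3}}{\txlayer{0}}{\delta}(\class{i},\class{j})>0$. The real work is therefore the feasible‑set decomposition of the first step, i.e., checking that each network's relaxation is built independently of its companion and that the only coupling constraint is $\ylayer{0}=\xlayer{0}$, which is what lets one splice $\net{2}$'s relaxed variables into a solution of the $\net{1}$-vs-$\net{3}$ relaxation; I would make sure the relaxations used in the appendix genuinely have this companion‑independence before relying on it.
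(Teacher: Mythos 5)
Your proof is correct, but it takes a genuinely different (and substantially more detailed) route than the paper, whose entire argument is the single sentence ``The proof follows directly from Theorem~\ref{theo:relaxed_bounds}.'' Read literally, that one-liner gives only what you describe in your last paragraph: the hypotheses plus Theorem~\ref{theo:relaxed_bounds} yield positivity of the \emph{exact} minima, hence $g_1>g_2>g_3$ pointwise on $\sphere{\txlayer{0}}{\delta}$ and $\minprob{\net{1}}{\net{3}}{\txlayer{0}}{\delta}(\class{i},\class{j})>0$ --- which suffices for the semantic conclusion $\net{3}\locimplies\net{1}$ via Remark~1, but does not by itself deliver the literal claim $\relaxprob{\net{1}}{\net{3}}{\txlayer{0}}{\delta}(\class{i},\class{j})>0$, since the relaxed value only lower-bounds the exact one. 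You correctly flag this gap and close it with an argument the paper does not make: the feasible-set decomposition (each network's linear relaxation depends only on that network and on $\sphere{\txlayer{0}}{\delta}$, with coupling solely through the shared input), which lets you splice a feasible relaxed trace of $\net{2}$ into any feasible point of the $\net{1}$-vs-$\net{3}$ LP and conclude the superadditivity $\relaxprob{\net{1}}{\net{3}}{\txlayer{0}}{\delta}(\class{i},\class{j})\ge\relaxprob{\net{1}}{\net{2}}{\txlayer{0}}{\delta}(\class{i},\class{j})+\relaxprob{\net{2}}{\net{3}}{\txlayer{0}}{\delta}(\class{i},\class{j})$. This is a strictly stronger quantitative statement than the theorem asks for, and it is the right way to prove the claim as written. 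Your closing caveat is also well placed: the companion-independence holds for the interval-propagated triangle relaxation of the appendix, but would need to be re-verified for any relaxation whose neuron bounds are tightened using information from the joint two-network problem.
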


\begin{proof}
Proof sketches of Theorems~\ref{theo:relaxed_bounds}--\ref{theo:transitivity}, as well as Corollary~\ref{lab:cor}, are presented in the appendix.
\end{proof}

\section{Evaluation}
\label{sec:eval}

We evaluate our proposed formulation for formal local implication and investigate the ranges of \glspl{LRPR} across various datasets and  \gls{DNN} structures.\footnote{The code is available on \url{github.com/anahitabn94/Formal_Local_Implication}.}
Experiments are executed on a MacBook Pro with an 8-core CPU and 32 GB of RAM using the Gurobi solver~\citep{gurobi}.

\subsection{Datasets}

We use the following datasets for evaluation:

\textbf{MNIST dataset~\citep{mnist}} contains grayscale handwritten digits. Each digit is depicted through a 28$\times$28 pixel image. We consider the first 100 images of the test set, similar to~\citep{ugare2022proof} and~\citep{DeepPoly}.

\textbf{CIFAR10 dataset~\citep{krizhevsky2009learning}} comprises 32$\times$32 colored images categorized into 10 different classes. In alignment with~\citep{ugare2022proof} and~\citep{DeepPoly}, we focus on the first 100 images from the test set.

\textbf{CHB-MIT Scalp EEG database~\citep{shoeb_chb-mit_2010}} includes 23 individuals diagnosed with epileptic seizures. These recordings are sampled in the international 10--20 EEG system, and our focus is on F7-T7 and F8-T8 electrode pairs, commonly used in seizure detection~\citep{8351728}.

\textbf{MIT-BIH Arrhythmia database~\citep{mit-bih}} involves 48 individuals with 2-channel ECG signals. To establish a classification problem, we consider a subset of 14 cardiac patients who demonstrated at least two different types of heartbeats.

\subsection{Neural Networks}

In this section, we describe the neural network architecture for each dataset. Detailed information, e.g., accuracy of these \glspl{DNN}, is available in the appendix.

\subsubsection{Original Networks}

For the MNIST and CIFAR10 datasets, we use fully-connected \glspl{DNN} from~\citep{ugare2022proof}, all of which have undergone robust training as outlined in~\citep{chiang2020certified}. We also employ convolutional \glspl{DNN} described in~\citep{ugare2022proof}, with results provided in the appendix. For the CHB-MIT dataset, the personalized \gls{DNN} for each patient has 2048 input neurons, convolution layers followed by max-pooling, and ends with a dense layer~\citep{baninajjarvnn}. For the MIT-BIH dataset, the \gls{DNN} has 320 input neurons, a convolution layer, and a dense layer~\citep{baninajjarvnn}.

\subsubsection{Compact Networks}

Our experiments involve various techniques to derive compact \glspl{DNN}. These techniques derive compact \glspl{DNN} enabling energy-efficient inference on limited resources.

\textbf{Pruned Networks} are created through a pruning procedure applied to \glspl{DNN}, where certain weights and biases are selectively nullified. For the MNIST and CIFAR10 datasets, we use pruned networks generated by~\citep{ugare2022proof} via post-training pruning. Each pruned network removes the smallest weights/biases in each layer, a process called \gls{MBP}, resulting in nine pruned networks with pruning rates ranging from 10\% to 90\%. For the CHB-MIT and MIT-BIH datasets, we apply \gls{MBP} pruning by setting values below 10\% of the maximum weight/bias to zero.

\textbf{\glspl{VNN}} are generated by optimizing weights and biases to maintain their functionality while reducing the number of non-zero weights, as described in~\citep{baninajjarvnn}, and are subsequently used for all networks.

\textbf{Quantized Networks} are obtained by reducing the precision of network weights, converting them from 32-bit floating-point to lower precision. The MNIST and CIFAR10 networks are quantized using post-training float16, int16, int8, and int4 methods provided by~\citep{ugare2022proof}. The same quantization methods are applied to the \glspl{DNN} trained on CHB-MIT and MIT-BIH datasets.

\textbf{Distilled Networks} are networks trained via knowledge distillation to mimic teacher networks' behavior. We consider nine temperatures, i.e., $T=1,\dots,9$, and produce distilled networks for all datasets following~\citep{hinton2015distilling}.

\subsection{Results and Analysis}

We conduct several experiments with our proposed method for assessing 
local implications by establishing
bounds on \glspl{LRPR}. 
We exclusively focus on correctly classified
samples within each test set. 
We consider the widths and depths of the networks when defining
perturbations. We use $\delta=0.001$ and $\delta=0.01$ for the MNIST
and CIFAR10 datasets and experiment with several values for the CHB-MIT
dataset ($\delta$ up to 0.002) and the MIT-BIH dataset ($\delta$ up to
0.4). 
We say we establish local implication of 
$\net{1}$ w.r.t. $\net{2}$ on sample vicinity
$\sphere{\txlayer{0}}{\delta}$
(i.e., $\net{2} \stackrel{\sphere{\txlayer{0}}{\delta}}{\implies} \net{1}$), 
if we show $\relaxprob{\net{1}}{\net{2}}{\txlayer{0}}{\delta}(\class{},\class{j}) \ge 0$ for all pairs $(\class{},\class{j})$, where $\class{}$ is the correct class. 
We omit samples and vicinities when they are clear from the context and write $\net{2} \implies \net{1}$ for short.
Observe that $\relaxprob{\net{1}}{\net{2}}{\txlayer{0}}{\delta}(\class{},\class{j}) \ge 0$ implies $\minprob{\net{1}}{\net{2}}{\txlayer{0}}{\delta}(\class{},\class{j}) \ge 0$, since \(\relaxprob{\net{1}}{\net{2}}{\txlayer{0}}{\delta}(\class{i},\class{j}) \leq \minprob{\net{1}}{\net{2}}{\txlayer{0}}{\delta}(\class{i},\class{j})\) by soundness of our method.
Here, we use zero as a threshold in $\relaxprob{\net{1}}{\net{2}}{\txlayer{0}}{\delta}(\class{},\class{j}) \ge 0$ as it corresponds to checking increases or decreases of \gls{PR} from one network to the other. 
However, our approach can easily accommodate other thresholds. 

Given two compatible networks $\net{2}$ and $\net{1}$ to be compared 
on the $\delta$-vicinities of a set of samples, we state that $\net{1}$  has more established implications if there are more samples for which
we could show $\net{1}$ is implied by $\net{2}$ on the corresponding vicinities
(i.e., we could establish $\net{2} \implies \net{1}$)
than those for which we could establish 
$\net{2}$ is implied by $\net{1}$ (i.e., $\net{1} \implies \net{2}$).
In this context, if $\net{1}$ has more established implications than \net{2}, 
then the number of samples with vicinities where we could establish $\net{1}$
made the correct decision each time $\net{2}$ did is larger than the number of samples with
vicinities where we could establish $\net{2}$
made the correct decision each time $\net{1}$ did.
In other words, we could establish formal local implications for $\net{1}$ w.r.t. $\net{2}$
more often than we could establish it for $\net{2}$ w.r.t. $\net{1}$.

\begin{figure*}[t]
  \centering
  \begin{subfigure}{0.27\textwidth}
    \includegraphics[width=\linewidth]{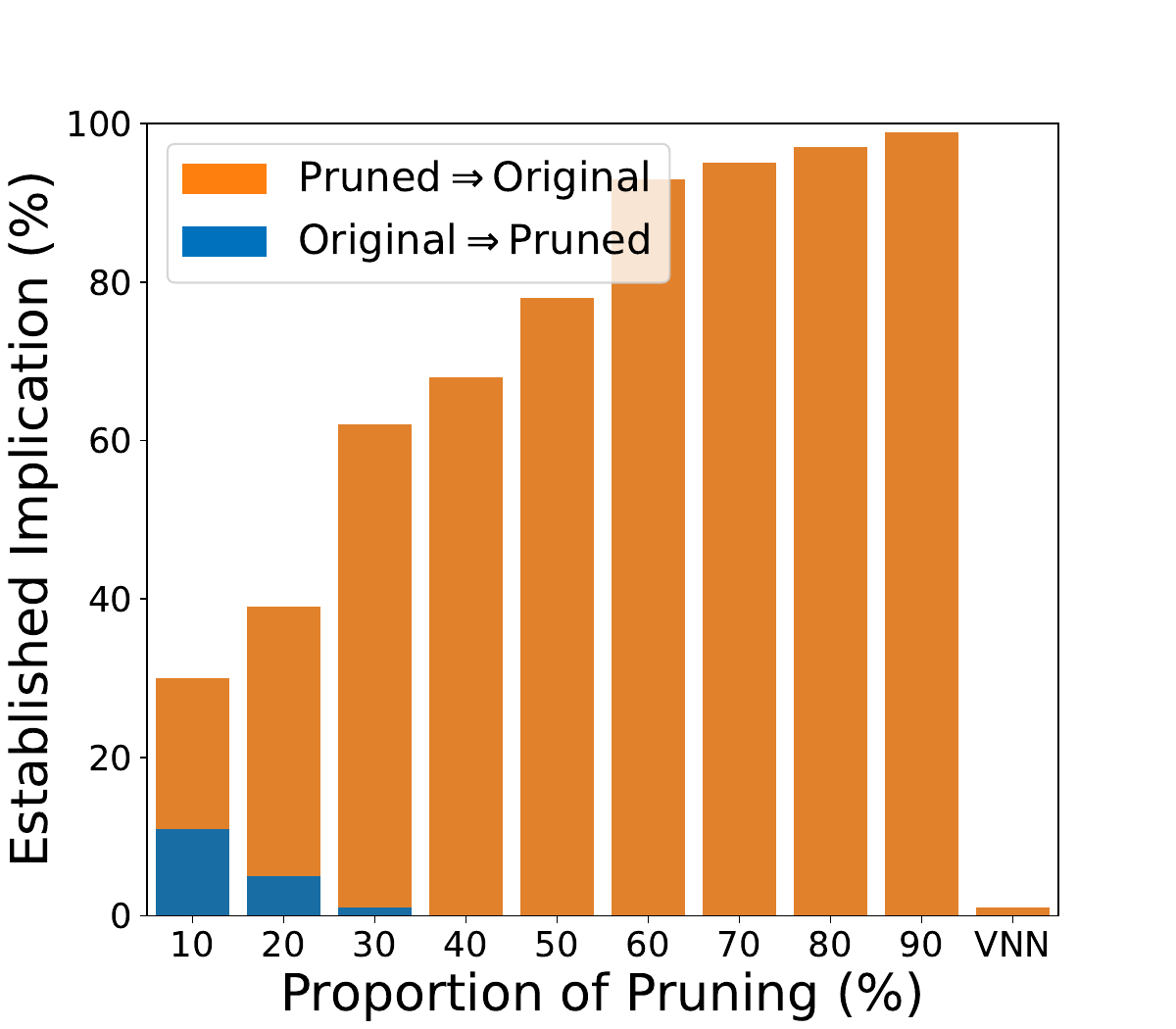}
    \captionsetup{justification=centering}
    \caption{MNIST, Pruned~\citep{ugare2022proof}, ~\citep{baninajjarvnn}}
    \label{FNN_prune_mnist}
  \end{subfigure}
  \begin{subfigure}{0.27\textwidth}
    \includegraphics[width=\linewidth]{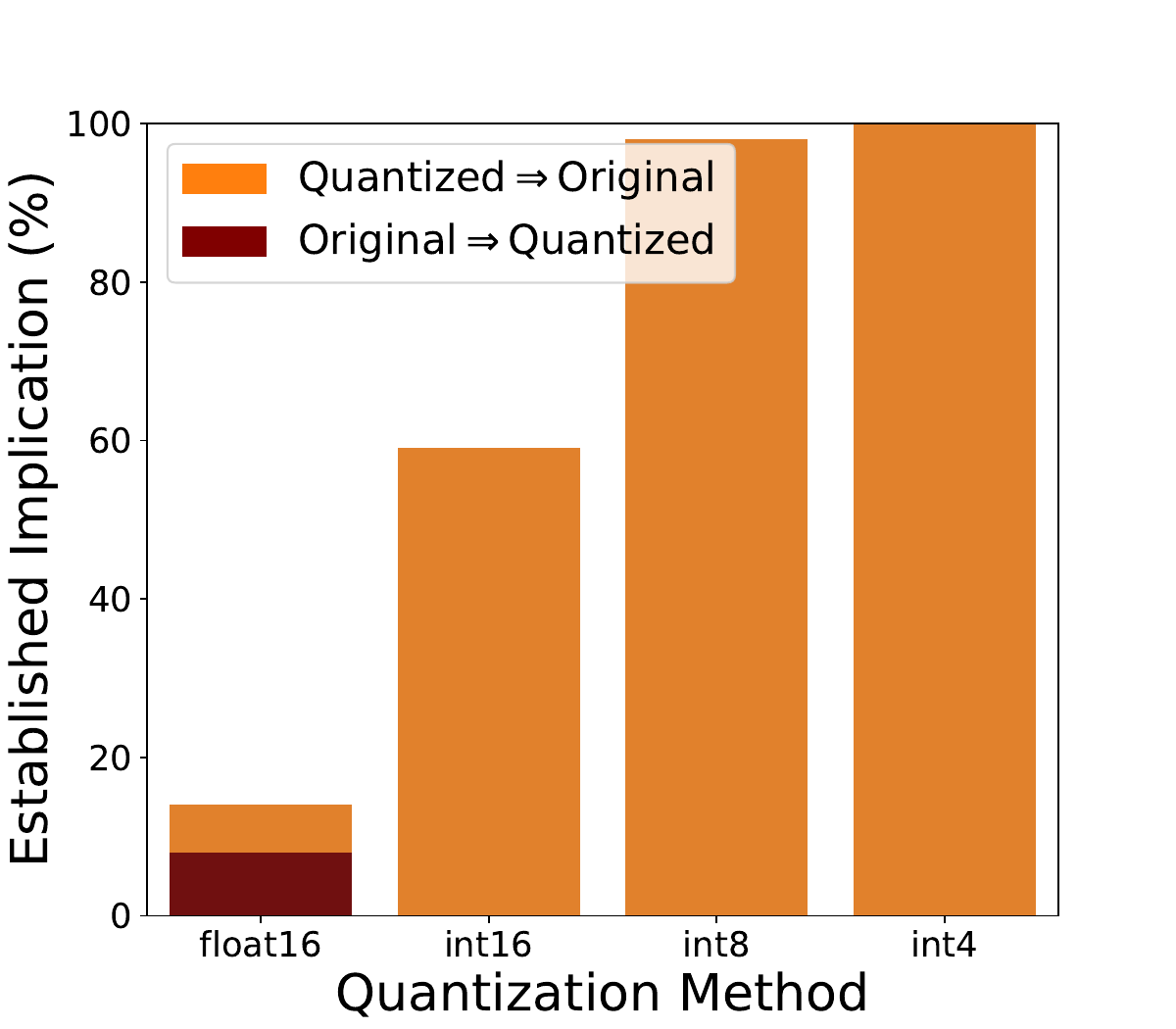}
    \captionsetup{justification=centering}
    \caption{MNIST, Quantized~\citep{ugare2022proof}}
    \label{FNN_quant_mnist}
  \end{subfigure}
  \begin{subfigure}{0.27\textwidth}
    \includegraphics[width=\linewidth]{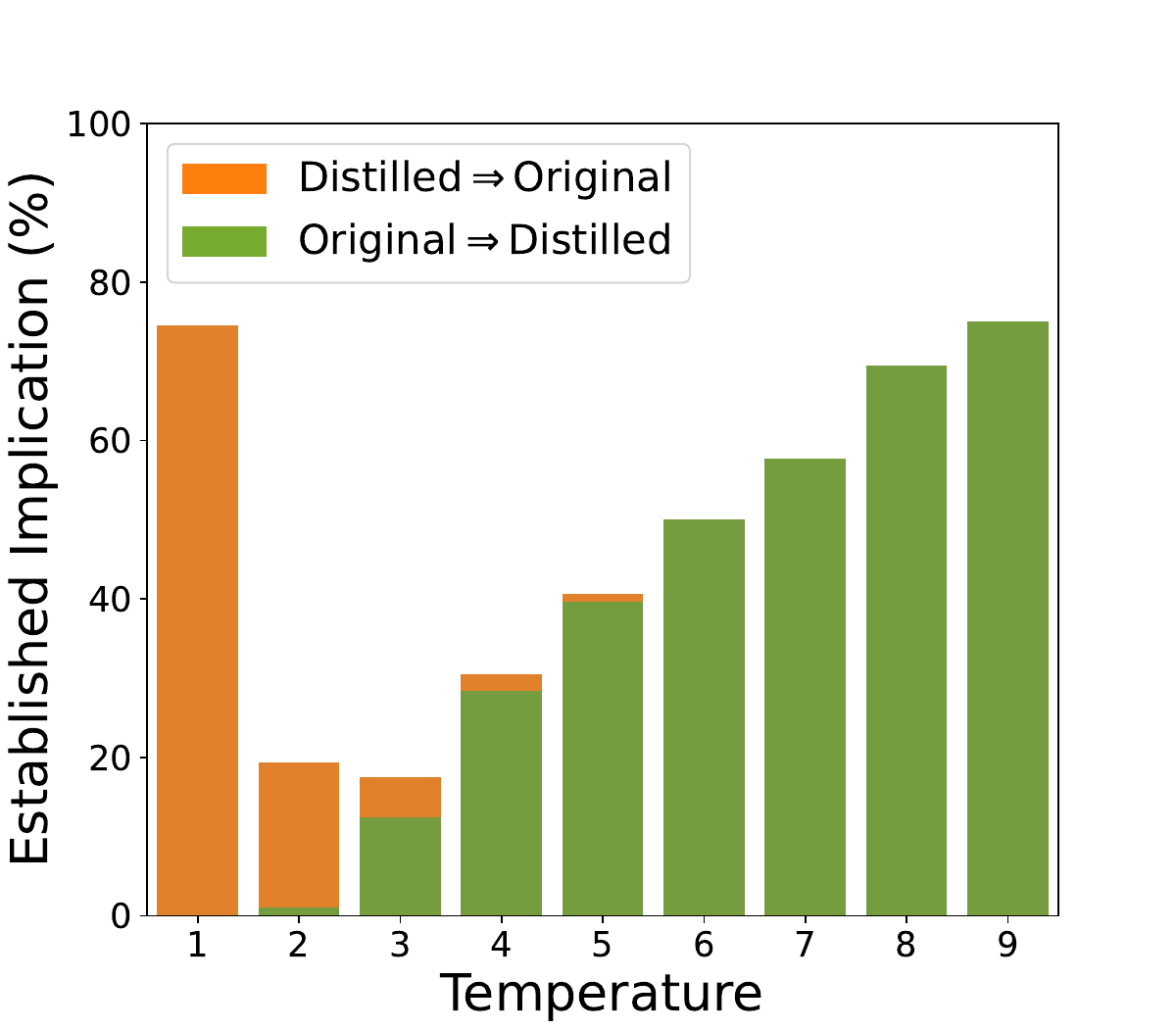}
    \captionsetup{justification=centering}
    \caption{MNIST, Distilled~\citep{hinton2015distilling}}
    \label{FNN_dist_mnist}
  \end{subfigure}

    \begin{subfigure}{0.27\textwidth}
    \includegraphics[width=\linewidth]{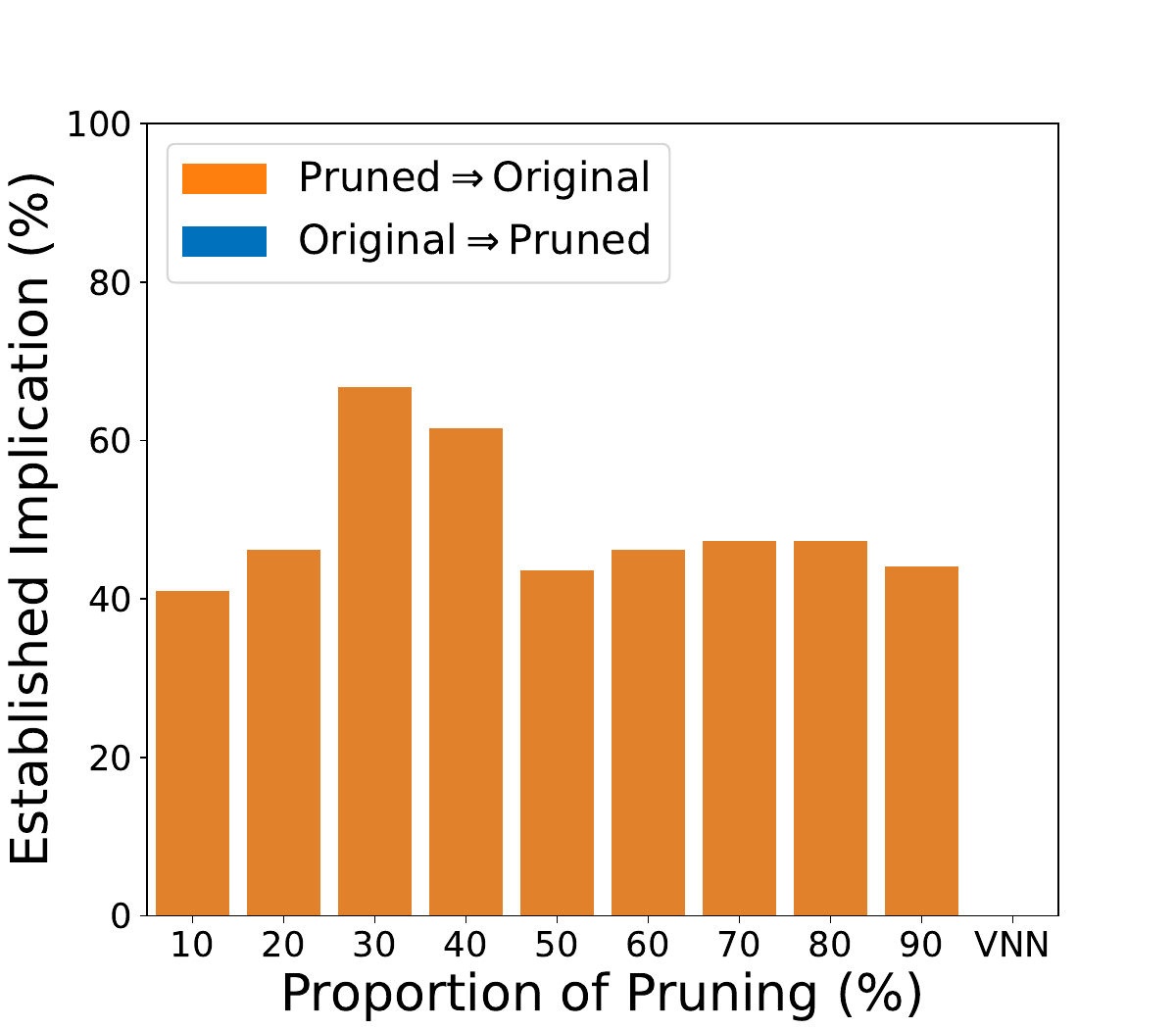}
    \captionsetup{justification=centering}
    \caption{CIFAR10, Pruned~\citep{ugare2022proof},~\citep{baninajjarvnn}}
    \label{FNN_prune_cifar}
  \end{subfigure}
  \begin{subfigure}{0.27\textwidth}
    \includegraphics[width=\linewidth]{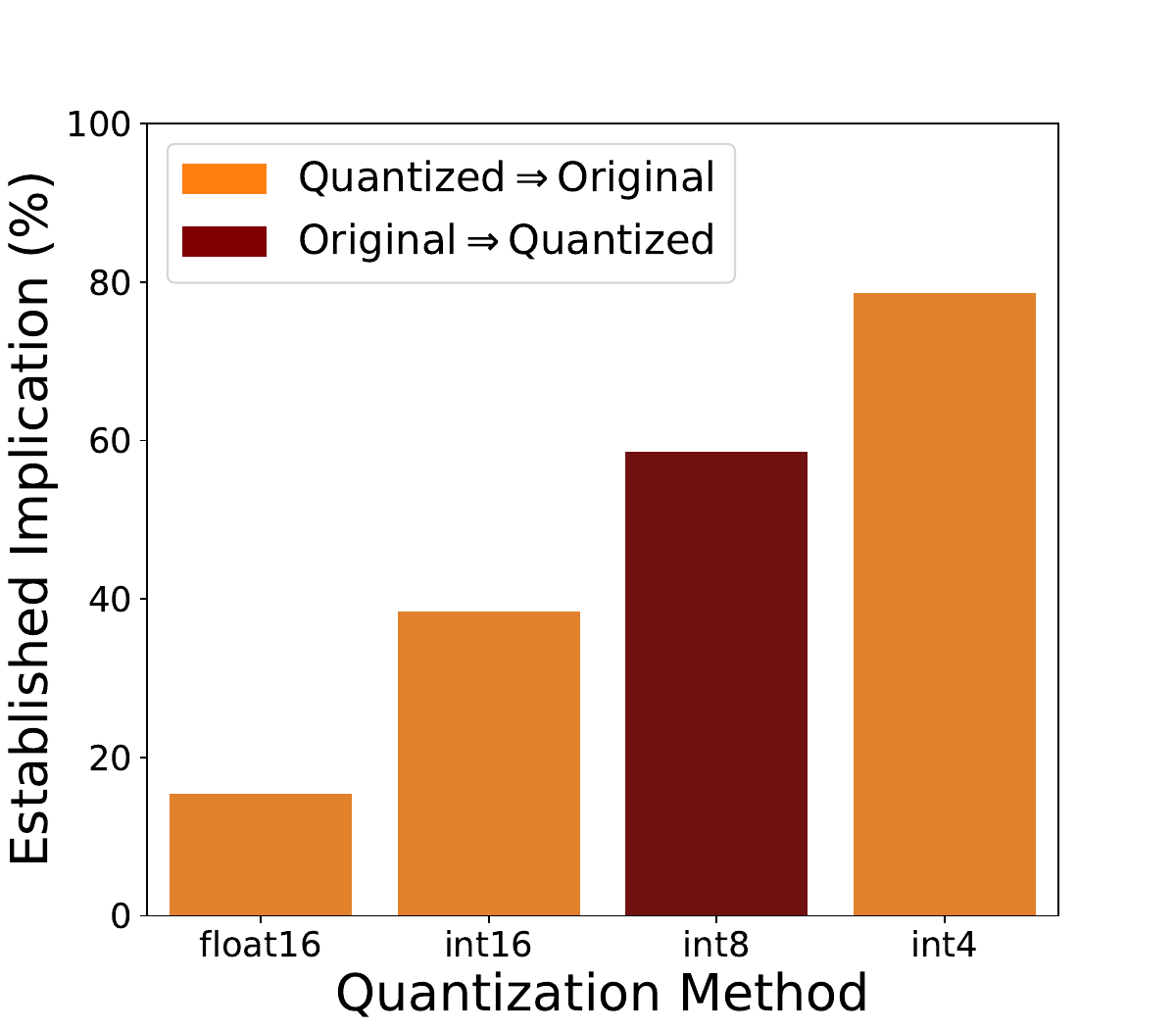}
    \captionsetup{justification=centering}
    \caption{CIFAR10, Quantized~\citep{ugare2022proof}}
    \label{FNN_quant_cifar}
  \end{subfigure}
  \begin{subfigure}{0.27\textwidth}
    \includegraphics[width=\linewidth]{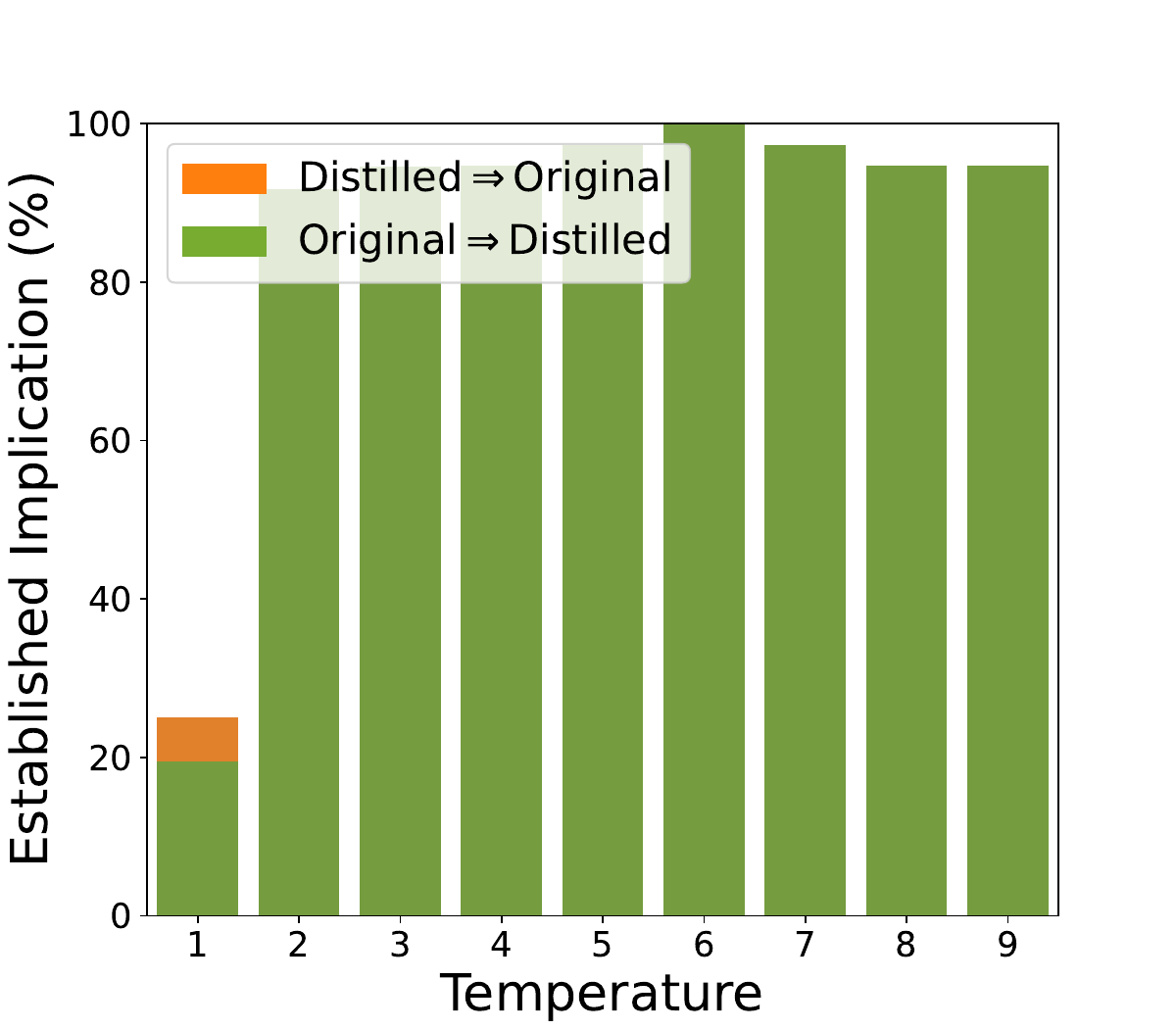}
    \captionsetup{justification=centering}
    \caption{CIFAR10, Distilled~\citep{hinton2015distilling}}
    \label{FNN_dist_cifar}
  \end{subfigure}
  \caption{Stacked bar plots illustrate the established implication of fully-connected \glspl{DNN} trained on the MNIST and CIFAR10 datasets with $\delta = 0.001$. The y-axis represents the percentage of samples in the dataset, referred to as established implication, showing the proportion for which the Compact network implies the Original network (Compact $\implies$ Original) and vice versa. Compact $\implies$ Original means that whenever the Compact network makes a correct decision for a sample, the Original network does as well.} 
  \label{photo_relative_robustness}
\end{figure*}

\subsubsection{Analysis of Formal Local Implication}

Here, we explore formal local implications between original and compact \glspl{DNN} using \gls{LRPR}.

\paragraph{MNIST Dataset.} Figures~\ref{FNN_prune_mnist}--~\ref{FNN_dist_mnist} present 
the results of formal local implication for \glspl{DNN} trained on the MNIST dataset when $\delta = 0.001$. In this figure, the y-axis represents the percentage of samples in the dataset, referred to as established implication, showing the proportion for which the compact network implies the original network on $\delta$-vicinity of considered samples, and the reverse. Compact $\implies$ Original means that whenever the compact network makes a correct decision for a sample, the original network does the same.
Figure~\ref{FNN_prune_mnist} shows an increase in the established implication as the pruning proportion rises, when investigating how pruned networks imply the behavior of the original networks (Pruned $\implies$ Original). There are two potential explanations for this phenomenon. First, the similarity between the original and less-pruned networks may lead to no network having a higher established implication across the entire perturbation neighborhood. Second, our method may establish implication of more samples in more-pruned networks, due to their sparsity. 

The last column of Figure~\ref{FNN_prune_mnist} presents the results of investigating the established implication of the \gls{VNN} generated using~\citep{baninajjarvnn}. The results show that the \gls{VNN} is comparable to that of the original network, as both the implication of the \gls{VNN} by the original network and vice versa are close to zero.

Established implications of quantized networks are depicted in Figure~\ref{FNN_quant_mnist}, 
with quantization precision on the x-axis. Results 
show original networks are more likely to be implied by the quantized ones than vice versa; i.e., there are more cases where we could establish Quantized $\implies$ Original than cases where we could establish Original $\implies$ Quantized.

Figure~\ref{FNN_dist_mnist} illustrates established implication of distilled and original networks implied by each other. The x-axis denotes the temperature of the distilled network, and the y-axis indicates the established implication. This figure demonstrates that the likelihood of distilled networks being implied by original networks increases as temperatures rise, rather than the other way around. The patterns of established implication exhibited by distilled networks differentiate them from pruned and quantized networks, making them a favorable option for creating compact and energy-efficient networks.

The processing time for each sample in the dataset depends on the perturbation, i.e., the value of $\delta$, and the architecture of the original and compact networks. The processing time ($\mu \pm \sigma$) is $15.0 \pm 0.7$ seconds when $\delta=0.001$ and $18.3 \pm 5.2$ seconds when $\delta=0.01$ for pruned and quantized networks. The processing time for distilled networks is $6.7 \pm 0.1$ seconds for $\delta=0.001$ and $6.9 \pm 0.2$ seconds for $\delta=0.01$.

\begin{table*}[]
\caption{The minimum, maximum, and range ($\mu \pm \sigma$) of \glspl{LRPR} for scenarios where implications of Original networks by Compact ones are investigated using independent and joint analyses at $\delta = 0.001$ and $\delta = 0.01$ on MNIST and CIFAR10. Our joint analysis results in a tighter range, with improvement calculated as $\left( 1 - \frac{\text{Range}_{\text{joint}}}{\text{Range}_{\text{ind.}}} \right) \times 100$, indicating the percentage reduction in range.}
\centering
\resizebox{\textwidth}{!}{
\begin{tabular}{ccccccc|cccc}
\toprule[1.5pt]
 &  &  & \multicolumn{4}{c|}{MNIST \cite{mnist}}  & \multicolumn{4}{c}{CIFAR10 \cite{krizhevsky2009learning}}\\ \cline{4-11}
   &    &   & Pruned & Quantized & Distilled & \gls{VNN} & Pruned & Quantized & Distilled & \gls{VNN} \\ \toprule[1.5pt]
\multirow{7}{*}{\rotatebox{90}{$\delta = 0.001$}} & \multirow{2}{*}{Min ($\uparrow$)} & Ind.  & $1.424\scriptstyle{\pm1.519}$ & $-0.485\scriptstyle{\pm0.296}$ & $-19.456\scriptstyle{\pm23.286}$ & $-0.460\scriptstyle{\pm0.322}$ & $0.000\scriptstyle{\pm0.043}$ & $-0.026\scriptstyle{\pm0.028}$ & $-5.927\scriptstyle{\pm5.049}$  & $0.120\scriptstyle{\pm0.301}$   \\
  &  & Joint & $\mathbf{1.875}\scriptstyle{\pm1.680}$ & $\mathbf{0.010}\scriptstyle{\pm0.016}$ & $\mathbf{-19.237}\scriptstyle{\pm23.317}$ & $\mathbf{-0.167}\scriptstyle{\pm0.191}$  & $\mathbf{0.025}\scriptstyle{\pm0.046}$ & $\mathbf{0.001}\scriptstyle{\pm0.002}$ & $\mathbf{-5.907}\scriptstyle{\pm5.040}$ & $\mathbf{0.128}\scriptstyle{\pm0.304}$  \\ 
  \cline{2-11}
& \multirow{2}{*}{Max ($\downarrow$)} & Ind.  & $2.391\scriptstyle{\pm1.898}$  & $0.517\scriptstyle{\pm0.314}$  & $-18.118\scriptstyle{\pm23.399}$  & $0.469\scriptstyle{\pm0.330}$ & $0.055\scriptstyle{\pm0.065}$ & $0.029\scriptstyle{\pm0.030}$ & $-5.471\scriptstyle{\pm4.849}$  & $0.162\scriptstyle{\pm0.315}$   \\
  &  & Joint & $\mathbf{1.940}\scriptstyle{\pm1.706}$  & $\mathbf{0.023}\scriptstyle{\pm0.019}$  & $\mathbf{-18.338}\scriptstyle{\pm23.367}$  & $\mathbf{0.176}\scriptstyle{\pm0.197}$  & $\mathbf{0.030}\scriptstyle{\pm0.049}$ & $\mathbf{0.002}\scriptstyle{\pm0.002}$ & $\mathbf{-5.491}\scriptstyle{\pm4.858}$ & $\mathbf{0.154}\scriptstyle{\pm0.311}$ \\

\cline{2-11}
& \multirow{2}{*}{Range ($\downarrow$)} & Ind. &  $0.967\scriptstyle{\pm2.431}$ & $1.002\scriptstyle{\pm0.432}$ & $1.338\scriptstyle{\pm33.011}$ & $0.929\scriptstyle{\pm0.461}$  & $0.055\scriptstyle{\pm0.078}$ & $0.055\scriptstyle{\pm0.041}$ & $0.456\scriptstyle{\pm7.000}$ & $0.042\scriptstyle{\pm0.436}$ \\
 & & Joint & $\mathbf{0.065}\scriptstyle{\pm2.394}$ & $\mathbf{0.013}\scriptstyle{\pm0.025}$ & $\mathbf{0.899}\scriptstyle{\pm33.011}$ & $\mathbf{0.343}\scriptstyle{\pm0.274}$ & 
 $\mathbf{0.005}\scriptstyle{\pm0.067}$ & $\mathbf{0.001}\scriptstyle{\pm0.003}$ & $\mathbf{0.416}\scriptstyle{\pm7.000}$ &
 $\mathbf{0.026}\scriptstyle{\pm0.435}$ \\ \cline{2-11}
 & \multicolumn{2}{c}{Improvement ($\uparrow$)} & \cellcolor{gray!30} $93.3\%$ & \cellcolor{gray!30} $98.7\%$ & \cellcolor{gray!30} $32.8\%$ & \cellcolor{gray!30} $63.1\%$ & \cellcolor{gray!30} $90.9\%$ & \cellcolor{gray!30} $98.2\%$ & \cellcolor{gray!30} $8.8\% $  & \cellcolor{gray!30} $38.1\%$
\\
\midrule

\multirow{7}{*}{\rotatebox{90}{$\delta = 0.01$}} & \multirow{2}{*}{Min ($\uparrow$)} & Ind.  & $-3.141\scriptstyle{\pm2.357}$ & $-5.165\scriptstyle{\pm3.112}$ & $-25.995\scriptstyle{\pm22.597}$ & $-4.828\scriptstyle{\pm2.923}$  & $-0.257\scriptstyle{\pm0.268}$ & $-0.287\scriptstyle{\pm0.294}$ & $-8.088\scriptstyle{\pm6.097}$ & $-0.079\scriptstyle{\pm0.305}$    \\
  &  & Joint & $\mathbf{1.058}\scriptstyle{\pm1.547}$ & $\mathbf{-0.649}\scriptstyle{\pm0.595}$ & $\mathbf{-23.917}\scriptstyle{\pm22.870}$ & $\mathbf{-2.113}\scriptstyle{\pm1.322}$ & $\mathbf{-0.044}\scriptstyle{\pm0.086}$ & $\mathbf{-0.070}\scriptstyle{\pm0.088}$ & $\mathbf{-7.906}\scriptstyle{\pm5.991}$ & $\mathbf{-0.011}\scriptstyle{\pm0.293}$    \\ 
  \cline{2-11}
& \multirow{2}{*}{Max ($\downarrow$)} & Ind.  & $6.836\scriptstyle{\pm4.243}$  & $5.197\scriptstyle{\pm3.129}$  & $-12.368\scriptstyle{\pm23.812}$  & $4.662\scriptstyle{\pm2.815}$ & $0.314\scriptstyle{\pm0.321}$ & $0.290\scriptstyle{\pm0.296}$ & $-2.966\scriptstyle{\pm4.091}$ & $0.351\scriptstyle{\pm0.433}$    \\
  &  & Joint & $\mathbf{2.635}\scriptstyle{\pm1.941}$  & $\mathbf{0.681}\scriptstyle{\pm0.600}$  & $\mathbf{-14.496}\scriptstyle{\pm23.462}$  & $\mathbf{1.935}\scriptstyle{\pm1.211}$ & $\mathbf{0.102}\scriptstyle{\pm0.115}$ & $\mathbf{0.073}\scriptstyle{\pm0.089}$ & $\mathbf{-3.144}\scriptstyle{\pm4.145}$ & $\mathbf{0.286}\scriptstyle{\pm0.381}$       \\

\cline{2-11}
& \multirow{2}{*}{Range ($\downarrow$)} & Ind. &  $9.977\scriptstyle{\pm4.854}$ & $10.362\scriptstyle{\pm4.413}$ & $13.627\scriptstyle{\pm32.827}$ & $9.490\scriptstyle{\pm4.058}$ & $0.571\scriptstyle{\pm0.418}$ & $0.577\scriptstyle{\pm0.417}$ & $5.122\scriptstyle{\pm7.342}$ & $0.430\scriptstyle{\pm0.530}$ \\
 & & Joint & $\mathbf{1.577}\scriptstyle{\pm2.482}$ & $\mathbf{1.330}\scriptstyle{\pm0.845}$ & $\mathbf{9.421}\scriptstyle{\pm32.764}$ & $\mathbf{4.048}\scriptstyle{\pm1.793}$ & $\mathbf{0.146}\scriptstyle{\pm0.144}$ & $\mathbf{0.143}\scriptstyle{\pm0.125}$ & $\mathbf{4.762}\scriptstyle{\pm7.285}$ & $\mathbf{0.297}\scriptstyle{\pm0.481}$ \\ \cline{2-11}
  &   \multicolumn{2}{c}{Improvement ($\uparrow$)} & \cellcolor{gray!30} $84.2\%$ & \cellcolor{gray!30} $87.2\%$ & \cellcolor{gray!30} $30.9\%$ & \cellcolor{gray!30} $57.3\%$ & \cellcolor{gray!30} $74.4\%$ & \cellcolor{gray!30} $75.2\%$ & \cellcolor{gray!30} $7.0\%$ &  \cellcolor{gray!30} $30.9\% $
\\ 
    \bottomrule[1.5pt]
\end{tabular}} \label{table_min_max_both}
\end{table*}

\paragraph{CIFAR10 Dataset.} 

Figures~\ref{FNN_prune_cifar}--~\ref{FNN_dist_cifar} show the results of investigating formal local implication of \glspl{DNN} trained on the CIFAR10 dataset when $\delta = 0.001$. Although the general patterns in the results of the CIFAR10 \glspl{DNN} are similar to those of the MNIST \glspl{DNN}, a few differences are observed. In Figure~\ref{FNN_quant_cifar}, the quantized network with int8 precision is more likely to be implied by the original network rather than the reverse. This suggests that the reduction in precision does not significantly impair the network’s ability to make the correct decision each time the original one does. Moreover, in Figure~\ref{FNN_dist_cifar}, distilled networks consistently exhibit a higher established implication across different temperatures. This behavior indicates that, in this set of experiments, despite fewer parameters, distilled networks tend to better make the correct decision each time the original network does than those obtained with other compaction schemes. 

The processing time of \glspl{DNN} trained on the CIFAR10 dataset is higher than that of the MNIST \glspl{DNN}, as the number of parameters is larger due to the input size. The processing time ($\mu \pm \sigma$) is $33.1 \pm 2.1$ seconds when $\delta=0.001$ and $43.9 \pm 8.1$ seconds when $\delta=0.01$ for pruned and quantized networks. The processing time of distilled networks is $15.0 \pm 0.9$ and $18.1 \pm 2.6$ seconds for $\delta=0.001$ and $\delta=0.01$, respectively.

\subsubsection{Comparison with Independent Analysis}  

Table~\ref{table_min_max_both} shows the minimum, maximum and range ($\mu \pm {\sigma}$) of \glspl{LRPR} for the scenario where we assess if compact networks imply original ones using independent and joint analyses at $\delta=0.001$ and $\delta=0.01$ for MNIST and CIFAR10 datasets. Results show that our proposed joint analysis consistently produces higher minimum values and lower maximum values, resulting in a tighter range for \glspl{LRPR}. This occurs because considering two networks in the same setting, as in joint analysis, removes unrealistic scenarios that would not occur in reality, thus preventing a minimum lower than the true minimum and a maximum higher than the true maximum.

Here, range refers to the difference between the minimum and maximum values of each method ($\text{Max} - \text{Min}$), where the range calculated by our joint analysis method is consistently lower than that of the independent analysis. Improvement is calculated using $\left( 1 - \frac{\text{Range}_{\text{joint}}}{\text{Range}_{\text{ind.}}} \right) \times 100$, which measures the percentage reduction in range from the independent to the joint analysis. For example, for MNIST with $\delta=0.01$, we have $9.977$ for the independent and $1.577$ as the joint value, the improvement is $\left( 1 - \frac{1.577}{9.977} \right) \times 100 \approx 84.2\%$. This formula standardizes the measurement of relative improvement, indicating that the range of \glspl{LRPR} in the joint analysis is $84.2\%$ narrower than in the independent analysis. This demonstrates that the joint analysis provides a tighter, more consistent range, highlighting its improved precision and reliability.

\subsubsection{Adversarially-Trained Models} 
As discussed earlier, our proposed formulation is applicable to any two neural networks. In this section, we analyze three neural networks, two of which are adversarially-trained models designed to defend against adversarial attacks, specifically \gls{PGD}, using different values of $\epsilon$~\citep{DeepPoly}. The PGD-trained networks have $\epsilon$ values of 0.1 and 0.3, denoted as \gls{PGD}1 and \gls{PGD}3, respectively. 

Table~\ref{pgd_comparison} presents the results of examining the established implications and certified accuracy of the original (non-defended), \gls{PGD}1, and \gls{PGD}3 \glspl{DNN}. In the established implication section, the column $\implies$ row indicates, for example, that in 42\% of the samples of the dataset, \gls{PGD}3 makes a correct decision whenever \gls{PGD}1 does. The results demonstrate that \gls{PGD}1 and \gls{PGD}3 consistently achieve higher established implications than their original counterpart.

These results become even more intriguing when compared to the outcomes of evaluating the robustness of the neural networks using formal verification techniques. For $\delta = 0.001$, the certified accuracy of the original, \gls{PGD}1, and \gls{PGD}3 \glspl{DNN} is $100\%$ when each network is evaluated individually using verification tools. Our formulation reveals that, although robustness evaluations might yield similar results, this does not necessarily imply that the networks behave identically. For example, under a higher perturbation of $\delta = 0.01$, the certified accuracy of the original \gls{DNN} drops to $89\%$, whereas the PGD-trained \glspl{DNN} maintain a certified accuracy of $99\%$. This demonstrates that the results produced by our formulation accurately capture the differences in the networks' behaviors. Further investigations reveal that increasing $\delta$ to 0.04 leads to a more pronounced drop in the certified accuracy of \gls{PGD}1 compared to \gls{PGD}3, with \gls{PGD}1's accuracy falling to 29\%, while \gls{PGD}3's remains at 87\%. This is also reflected in the established implication results for $\delta = 0.001$, where \gls{PGD}1 has 20\% verified \gls{LRPR} with respect to \gls{PGD}3, compared to 42\% for \gls{PGD}3 with respect to \gls{PGD}1.

\begin{table}[ht]
\caption{Comparison of established implication and certified accuracy of the original, \gls{PGD}1, and \gls{PGD}3 \glspl{DNN}. In the established implication section, the column $\implies$ row shows that, e.g., in 42\% of the samples, \gls{PGD}3 makes a correct decision whenever \gls{PGD}1 does.}
\centering
\resizebox{\columnwidth}{!}{
\begin{tabular}{cccc|ccc}
\toprule[1.5pt]
 & \multicolumn{3}{c|}{Established Implication} & \multicolumn{3}{c}{Certified Accuracy} \\ \cline{2-7}
 & Org. & PGD1 & PGD3 & $\delta=0.001$ & $\delta=0.01$ & $\delta=0.04$ \\ \midrule
Org. & - & $0\%$ & $1\%$ & $100\%$ & $89\%$ & $17\%$\\
PGD1 & $57\%$ & - & $20\%$ & $100\%$ & $99\%$ & $29\%$\\ 
PGD3 & $57\%$ & $42\%$ & - & $100\%$ & $99\%$ & $87\%$ \\ \bottomrule[1.5pt]
\end{tabular} }
\label{pgd_comparison}
\end{table}

\begin{figure*}[t]
 \centering

  \begin{subfigure}{0.225\textwidth}
\includegraphics[trim={0cm 0cm 1cm 0.5cm}, clip,width=\linewidth]{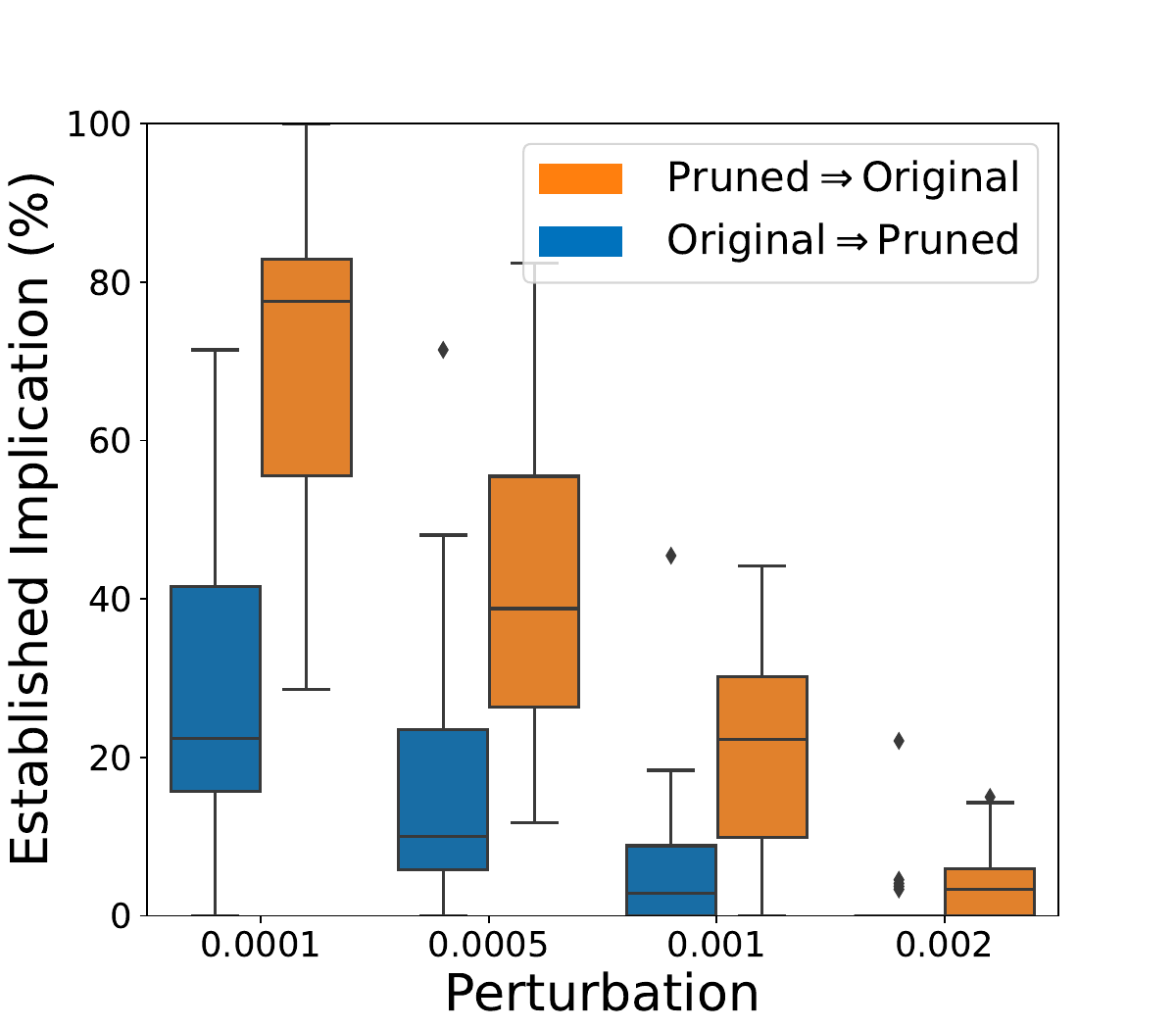}
    \caption{CHB, Pruned~\citep{ugare2022proof}}
    \label{CHB_sparse}
  \end{subfigure}
  \begin{subfigure}{0.225\textwidth}
\includegraphics[trim={0cm 0cm 1cm 0.5cm}, clip,width=\linewidth]{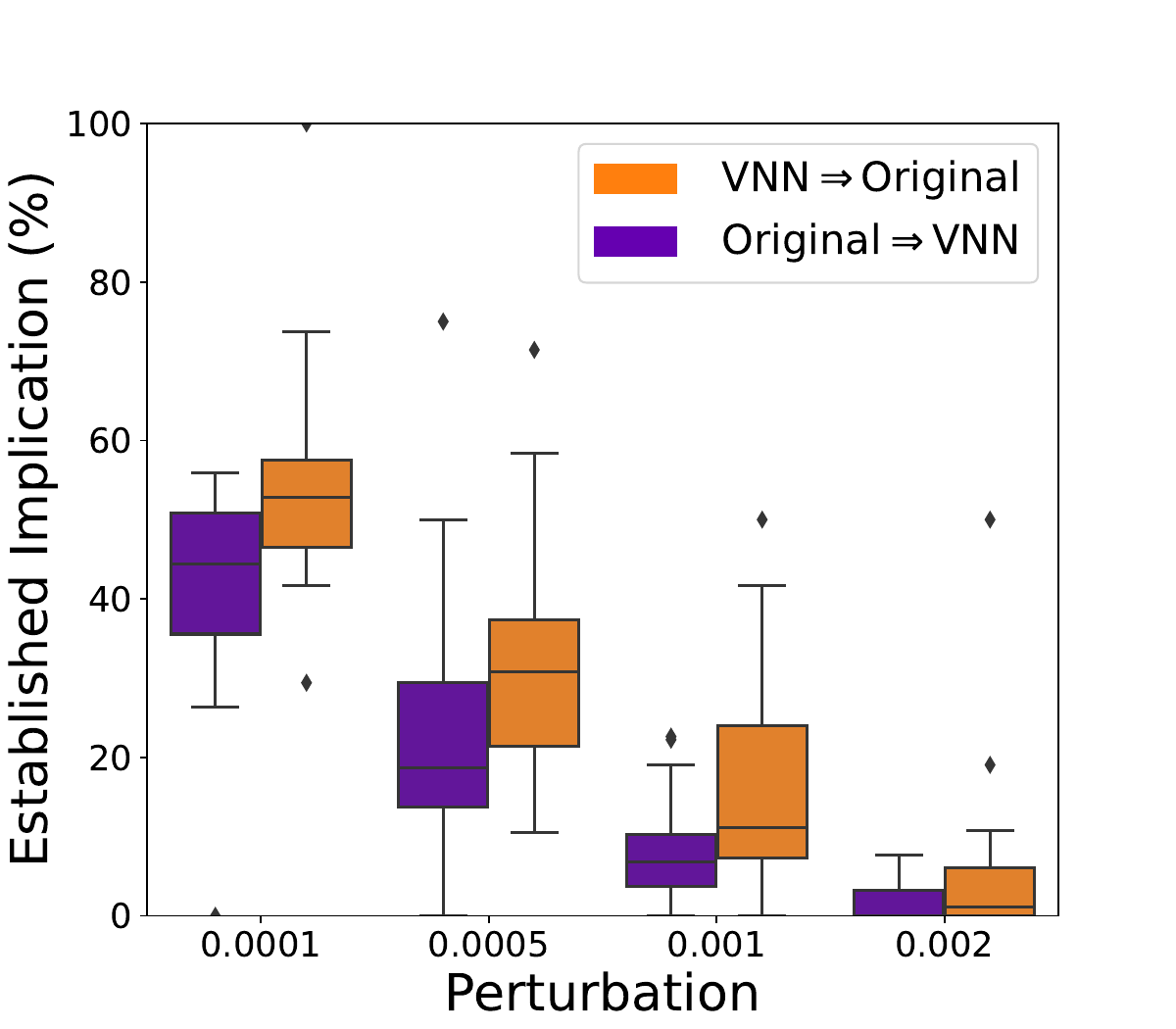}
    \caption{CHB, VNN~\citep{baninajjarvnn}}
    \label{CHB_VNN}
  \end{subfigure}
  \begin{subfigure}{0.225\textwidth}
\includegraphics[trim={0cm 0cm 1cm 0.5cm}, clip,width=\linewidth]{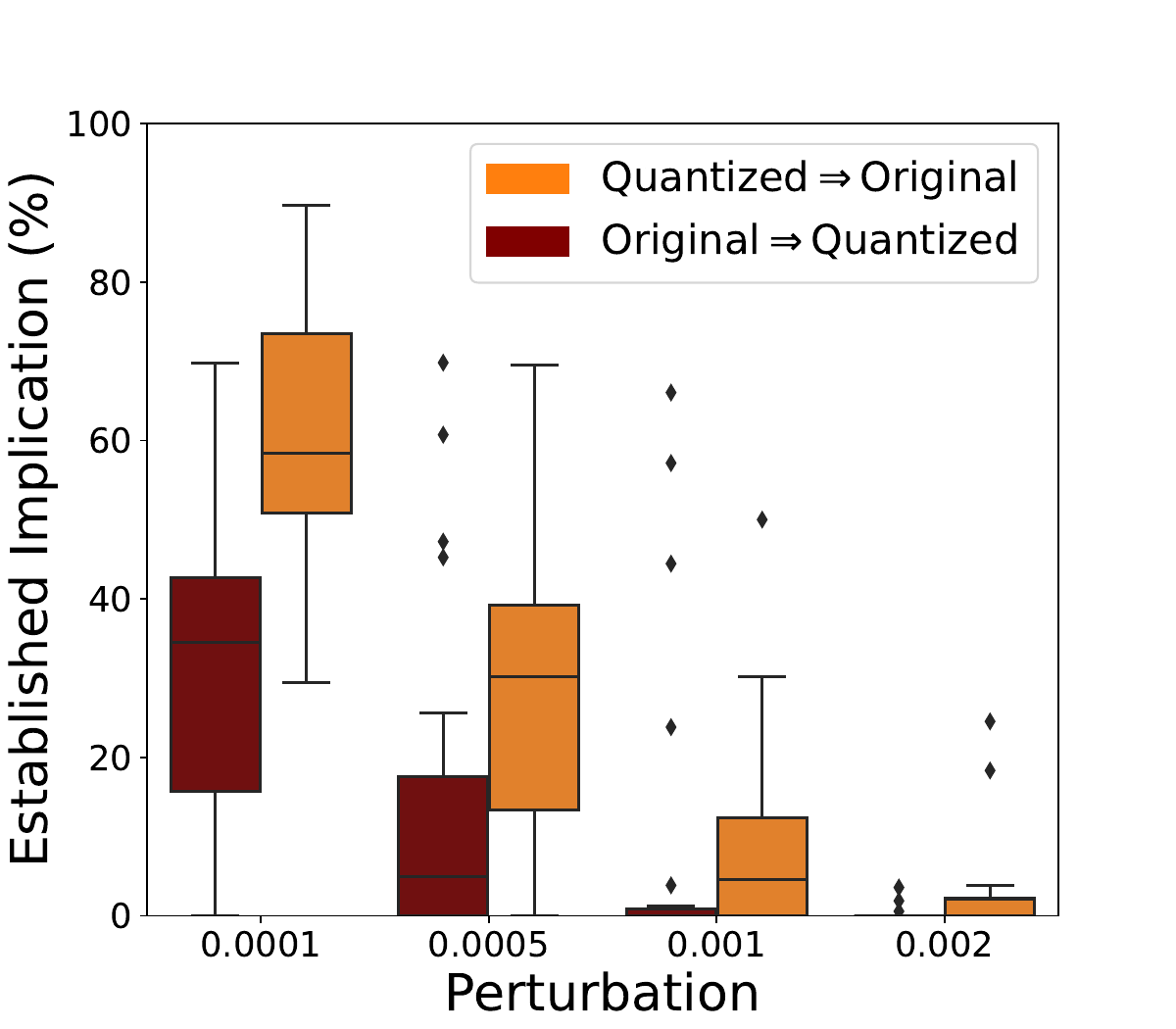}
    \caption{CHB,\! Quantized~\citep{ugare2022proof}}
    \label{CHB_int16}
  \end{subfigure}
  \begin{subfigure}{0.225\textwidth}
\includegraphics[trim={0cm 0cm 1cm 0.5cm}, clip,width=\linewidth]{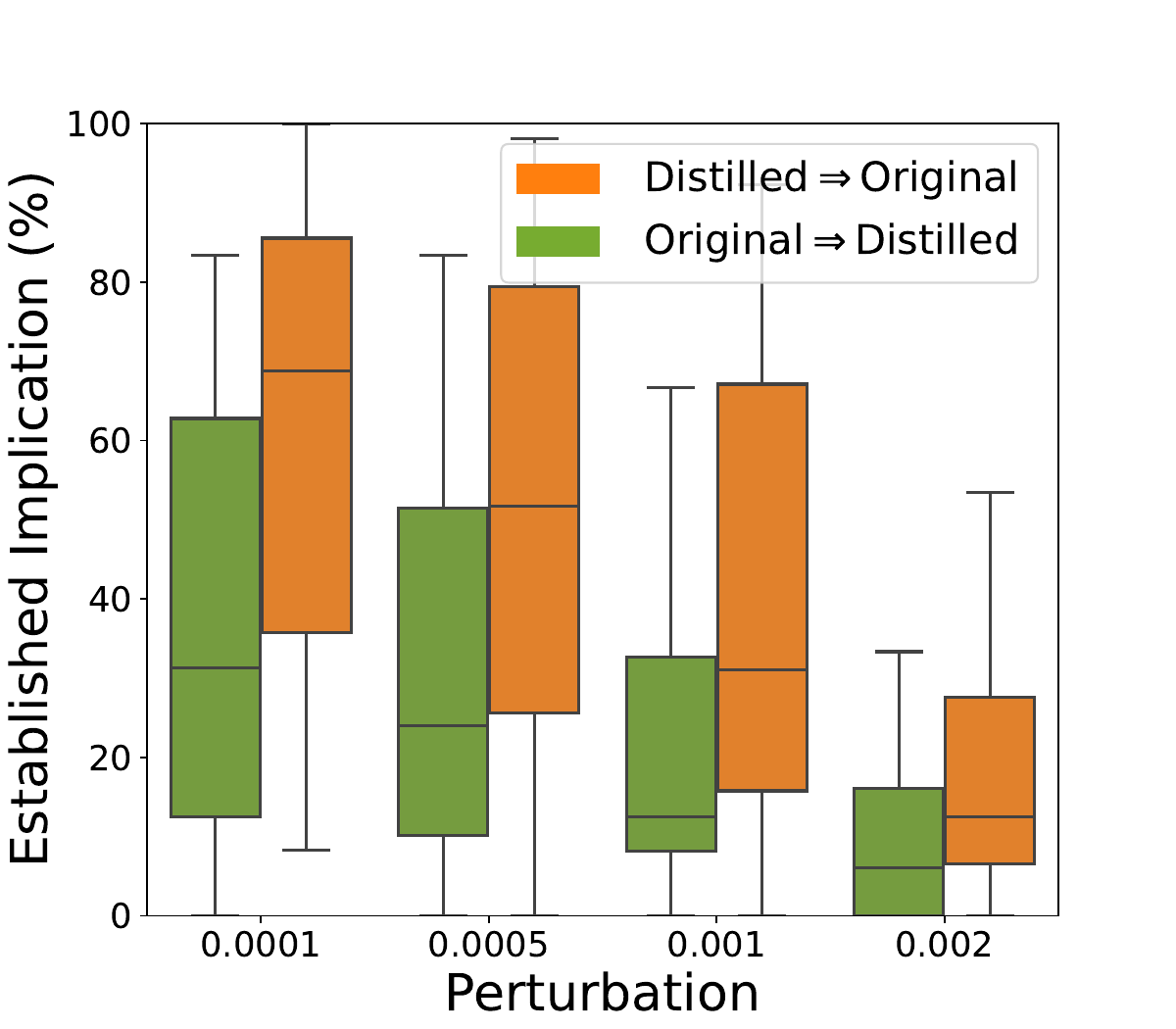}
    \caption{CHB, Distilled~\citep{hinton2015distilling}}
    \label{CHB-T5}
  \end{subfigure}

    \begin{subfigure}{0.225\textwidth}
\includegraphics[trim={0cm 0cm 1cm 0.5cm}, clip,width=\linewidth]{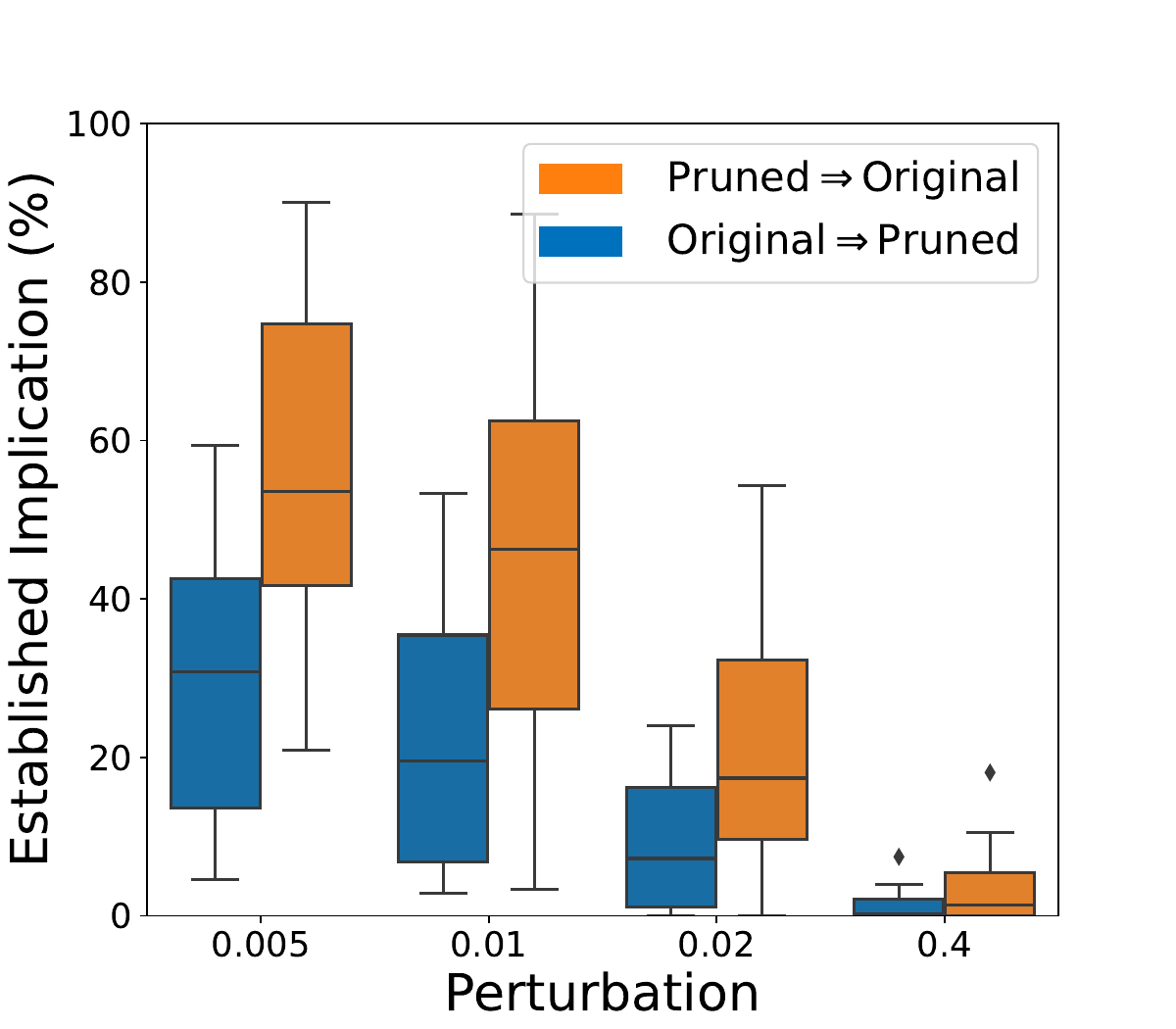}
    \caption{BIH, Pruned~\citep{ugare2022proof}}
    \label{BIH-sparse}
  \end{subfigure}
  \begin{subfigure}{0.225\textwidth}
\includegraphics[trim={0cm 0cm 1cm 0.5cm}, clip,width=\linewidth]{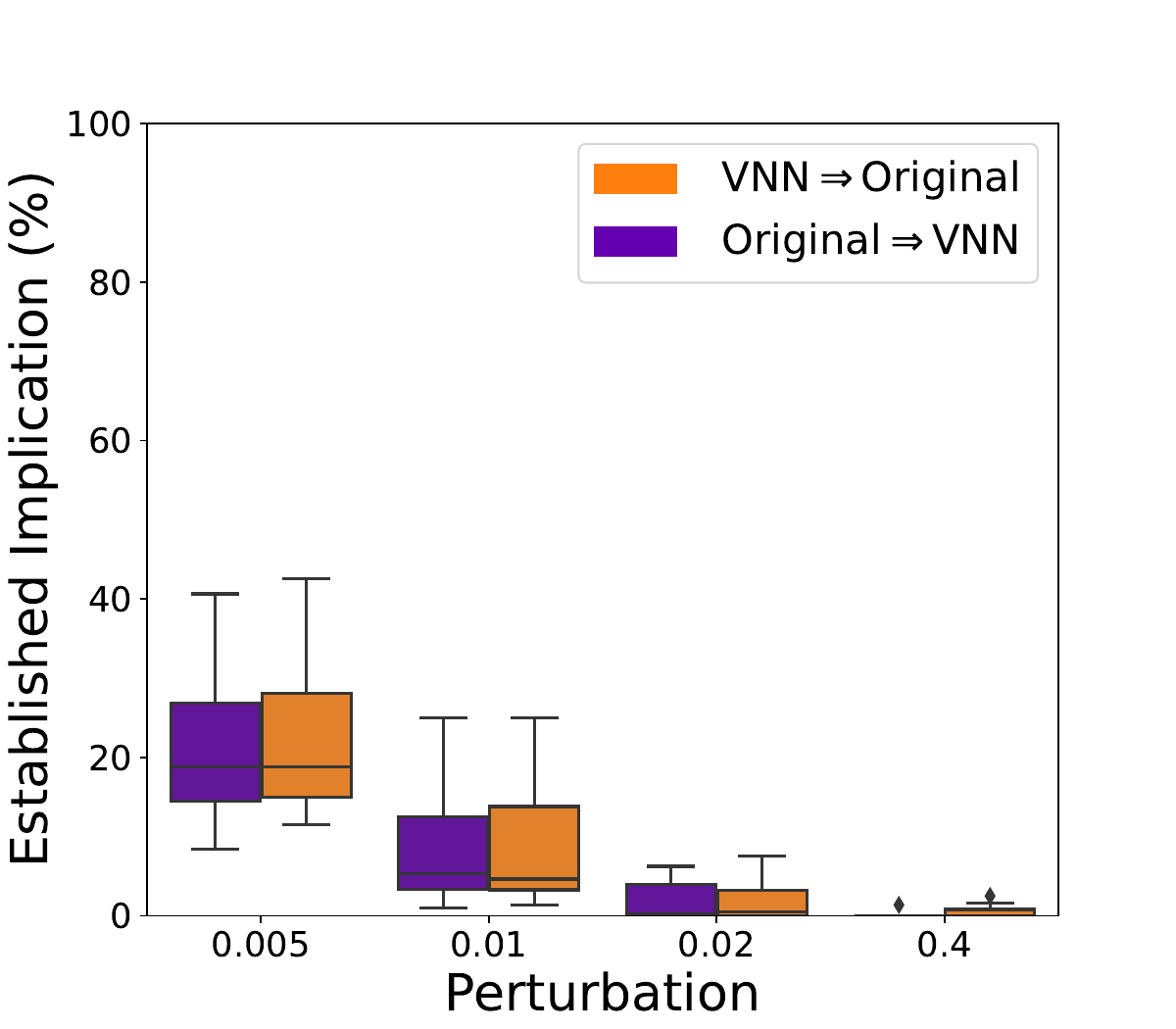}
    \caption{BIH, VNN~\citep{baninajjarvnn}}
    \label{BIH-VNN}
  \end{subfigure}
  \begin{subfigure}{0.225\textwidth}
\includegraphics[trim={0cm 0cm 1cm 0.5cm}, clip,width=\linewidth]{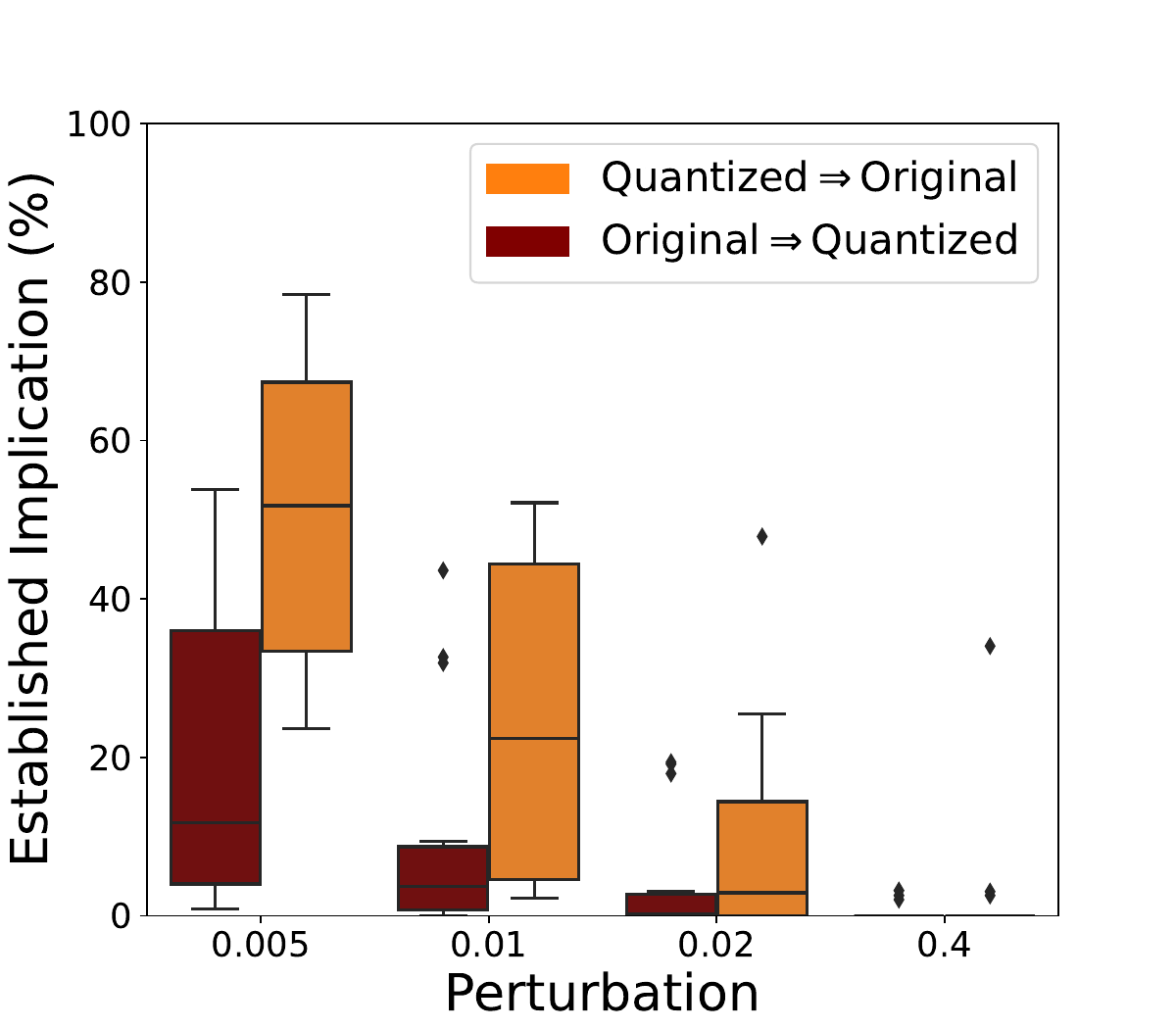}
    \caption{BIH, Quantized~\citep{ugare2022proof}}
    \label{bih-quant16}
  \end{subfigure}
  \begin{subfigure}{0.225\textwidth}
\includegraphics[trim={0cm 0cm 1cm 0.5cm}, clip,width=\linewidth]{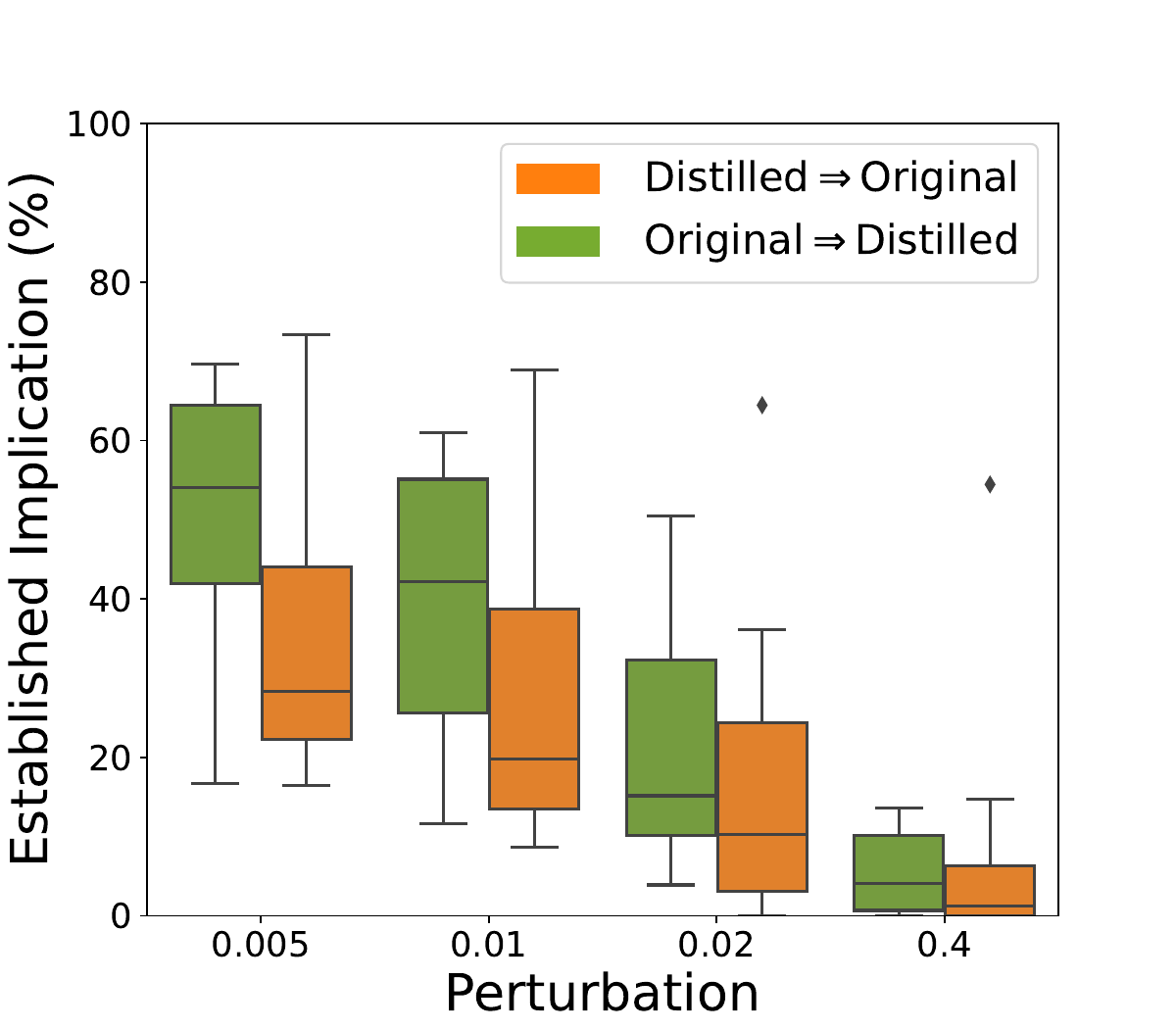}
    \caption{BIH, Distilled~\citep{hinton2015distilling}}
    \label{bih-distil5}
  \end{subfigure} 
 \caption{Box plots illustrate the established implication of convolutional \glspl{DNN} trained on all patients in the CHB-MIT~\citep{shoeb_chb-mit_2010} and MIT-BIH~\citep{mit-bih} datasets for Original and Compact networks. For each patient in the dataset, we evaluate the implication between an Original and a Compact network using the patient’s own data and aggregate the results across all patients to present them in the box plots.} 
  \label{CHB_3}
\end{figure*}

\subsubsection{Two Real-World Medical Datasets}

In this section, we explore the established implication of \glspl{DNN} trained on two real-world medical datasets, the CHB-MIT and MIT-BIH.

\paragraph{CHB-MIT Dataset:} We explore the established implication of convolutional \glspl{DNN} trained on the CHB-MIT dataset to categorize EEG signals of patients with epileptic seizures as captured in Figure~\ref{CHB_sparse}--~\ref{CHB-T5}. Here, the x-axis shows different perturbation values applied to the input of a pair of original and compact networks. The general pattern of the established implication of pruned, quantized, and distilled networks is that we observe original \glspl{DNN} have a higher established implication than their compact counterparts. Moreover, the number of established cases decreases with increasing perturbation. This can be caused by an actual decrease in \gls{LRPR} over a neighborhood, or by exacerbated over-approximation generated by the formulation. However, Figure~\ref{CHB_VNN} shows that the average established implication of \glspl{VNN} is comparable to that of their original counterparts.

\paragraph{{MIT-BIH dataset}:} In this section, we assess the established implication of convolutional \glspl{DNN} trained on the MIT-BIH dataset to categorize ECG signals from patients with cardiac arrhythmia, as demonstrated in Figures~\ref{BIH-sparse}--~\ref{bih-distil5}. Similar to \glspl{DNN} trained on the CHB-MIT dataset, the number of established samples drops as perturbation increases, either due to reduced \gls{LRPR} across a range of perturbed inputs or increased over-approximation generated by the formulation. The behavior of \gls{MBP}-pruned and quantized networks is also similar to that of CHB-MIT \glspl{DNN}, with the established implication of original networks being higher than that of their corresponding pruned and quantized ones. However, the results of \gls{VNN}-pruned networks are slightly different, as the established implication is closer to that of their original counterparts. Moreover, distilled networks display a different pattern, where their established implication is higher than that of their corresponding original networks.

\section{Related Work}
\label{sec:related}

Evaluating and comparing the performance of networks is crucial in machine learning. 
For instance, both verification and adversarial techniques have been used to
compare robustness of networks and their compacted
versions.
The previous studies by~\citep{li2023can}
and~\citep{jordao2021effect} for pruning, and by
~\citep{duncan2020relative} for quantization used established
verification techniques~\citep{huang2017safety,wang2021beta} to separately assess local
robustness of a network and its compacted version.

On the other hand, the work in ~\citep{wang2018adversarial} applies
two white box attacks including Fast Gradient Sign Method
(FGSM)~\citep{goodfellow2014explaining} and
\gls{PGD}~\citep{madry2017towards} to assess local robustness.
None of the previous approaches can formally establish, even in the presence 
of adversarial examples, that a network
makes a correct decision each time the other network does.

In this context, ReLUDiff~\citep{paulsen2020reludiff} and NeuroDiff~\citep{paulsen2020neurodiff} present valuable techniques for analyzing functional differences between neural networks, focusing on verifying whether two models behave identically. As noted by~\citet{paulsen2020reludiff}, many tools focus on single-network behavior, limiting their ability to verify relational properties. In contrast, we establish an implication property that ensures one network is at least as correct as another. Instead of neuron-wise tracking like ReLUDiff and NeuroDiff, our method directly optimizes the final-layer difference, allowing comparison between different architectures.

~\citet{kleine2020verifying} use MILP over clustered input regions to verify symmetric neural network equivalence, but does not capture directional correctness needed for comparing original and compact models. Unlike~\citet{narodytska2018verifying}, who verify single binarized networks via SAT, and~\citet{eleftheriadis2022neural}, who perform symmetric equivalence checking with SMT over the entire input space, our approach targets local directional guarantees between networks, enabling asymmetric verification.

\section{Conclusions}
\label{sec:concl}

In this work, we propose a formulation to compare two networks in relation to each other over an entire input region. Specifically, we establish the foundation for formal local implication between two networks, i.e., $\net{2} \locimplies \net{1}$, within an  entire input region $\sphere{}{}$. In this context, network $\net{1}$ consistently makes a correct decision every time network $\net{2}$ does in the entire input region $\sphere{}{}$. The proposed formulation is relevant in the context of several application domains, e.g., for comparing a trained network and its corresponding compact (e.g., pruned, quantized, distilled) network. We evaluate our formulation using the MNIST, CIFAR10, and two real-world medical datasets, to show its relevance.




\begin{ack}
This work is partially supported by the Wallenberg AI, Autonomous Systems and Software Program (WASP) funded by the Knut and Alice Wallenberg Foundation and by the European Union (EU) Interreg Program. We would like to also thank Kamran Hosseini for his initial input on an earlier draft of this work.

\end{ack}



\bibliography{mybibfile}

\newpage
\appendix
\onecolumn
\section{Appendix} \label{appendix}

\subsection{Proofs from Section \ref{sec:method}}
\label{lab:proofs}

\begin{lemma}
  \label{lem:ln:appendix}
  Let $(\class{i},{\class{j}})$ be a pair of
  classes of compatible \glspl{DNN} \net{1} and \net{2}.
  Assume common input  $\xlayer{0}=\ylayer{0}$.
  Suppose \net{1} has $N_1+1$ layers 
  $\xlayer{0}, \ldots \xlayer{N_1+1}$ and \net{2}
  has $N_2+1$ layers $\ylayer{0}, \ldots \ylayer{N_2+1}$. 
  Then: 
\[
\ln\left(\rpr{\net{1}}{\net{2}}{\xlayer{0}}(\class{i},{\class{j}})\right) 
=(\subxlayer{{\class{i}}}{N_1} - \subxlayer{\class{j}}{N_1})-(\subylayer{{\class{i}}}{N_2} - \subylayer{\class{j}}{N_2})
\]
\end{lemma}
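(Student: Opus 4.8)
The plan is to unwind the definitions involved — the Relative Output Margin (Definition~\ref{def:rpr}), the Output Margin, and the softmax function — and then exploit the single structural fact that does all the work: the softmax normalizing constant cancels in every ratio that appears. First I would rewrite $\rpr{\net{1}}{\net{2}}{\xlayer{0}}(\class{i},\class{j})$ using Definition~\ref{def:rpr} together with the common-input hypothesis $\xlayer{0}=\ylayer{0}$, obtaining
\[
\rpr{\net{1}}{\net{2}}{\xlayer{0}}(\class{i},\class{j})
= \frac{\sm_{\class{i}}(\xlayer{N_1})}{\sm_{\class{j}}(\xlayer{N_1})}\cdot\frac{\sm_{\class{j}}(\ylayer{N_2})}{\sm_{\class{i}}(\ylayer{N_2})},
\]
and then, since all softmax outputs are strictly positive so the logarithm is well defined, splitting the logarithm of the product into
\[
\ln\!\left(\rpr{\net{1}}{\net{2}}{\xlayer{0}}(\class{i},\class{j})\right)
= \ln\frac{\sm_{\class{i}}(\xlayer{N_1})}{\sm_{\class{j}}(\xlayer{N_1})}
+ \ln\frac{\sm_{\class{j}}(\ylayer{N_2})}{\sm_{\class{i}}(\ylayer{N_2})}.
\]

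Next I would evaluate each ratio by substituting the definition of softmax on the logit vectors $\xlayer{N_1}$ and $\ylayer{N_2}$. Writing $Z_1=\sum_{\class{k}} e^{\subxlayer{\class{k}}{N_1}}$ for the normalizer of the softmax applied to $\xlayer{N_1}$, we have $\sm_{\class{i}}(\xlayer{N_1})=e^{\subxlayer{\class{i}}{N_1}}/Z_1$, hence
\[
\frac{\sm_{\class{i}}(\xlayer{N_1})}{\sm_{\class{j}}(\xlayer{N_1})}
= \frac{e^{\subxlayer{\class{i}}{N_1}}/Z_1}{e^{\subxlayer{\class{j}}{N_1}}/Z_1}
= e^{\subxlayer{\class{i}}{N_1}-\subxlayer{\class{j}}{N_1}},
\]
and, by the identical computation for $\net{2}$, $\sm_{\class{j}}(\ylayer{N_2})/\sm_{\class{i}}(\ylayer{N_2}) = e^{\subylayer{\class{j}}{N_2}-\subylayer{\class{i}}{N_2}}$. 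Substituting both into the previous display and using $\ln e^{t}=t$ yields
\[
\ln\!\left(\rpr{\net{1}}{\net{2}}{\xlayer{0}}(\class{i},\class{j})\right)
= \left(\subxlayer{\class{i}}{N_1}-\subxlayer{\class{j}}{N_1}\right) + \left(\subylayer{\class{j}}{N_2}-\subylayer{\class{i}}{N_2}\right)
= \left(\subxlayer{\class{i}}{N_1}-\subxlayer{\class{j}}{N_1}\right) - \left(\subylayer{\class{i}}{N_2}-\subylayer{\class{j}}{N_2}\right),
\]
which is exactly the claimed identity.

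There is no genuine obstacle in this argument; the only points requiring care are bookkeeping ones. One must keep straight that $\xlayer{N_1}$ and $\ylayer{N_2}$ denote the pre-softmax logits (layer $N_1$ and $N_2$ values, the softmax being the extra layer $N_1+1$ and $N_2+1$), so that $\subxlayer{\class{i}}{N_1}$ is precisely the $i$-th logit of $\net{1}$; and one must observe that the common normalizer appears in both numerator and denominator of each of the two ratios, so it cancels \emph{before} the logarithm is taken — this is what makes the logit differences appear. It is also worth stating explicitly that the identity holds pointwise for every input $\xlayer{0}$ (equivalently $\ylayer{0}$), since this is what later licenses passing from this lemma to the region-wise minimization/maximization statements of Theorem~\ref{theo:relaxed_bounds} and to the objective in~\Eqref{obj}.
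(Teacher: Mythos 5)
Your proposal is correct and follows essentially the same route as the paper's proof: unwind Definition~\ref{def:rpr}, substitute the softmax expressions, cancel the common normalizers, and read off the logit differences after taking the logarithm. The only cosmetic difference is that you split the logarithm into two terms and cancel each normalizer ratio separately, whereas the paper cancels everything inside a single fraction before applying $\ln$.
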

\begin{proof}
  By applying the $\ln$ function on the definition of $\rpr{\net{1}}{\net{2}}{\xlayer{0}}{(\class{i},\class{j})}$:  
\[
\begin{array}{ll}
  \ln\left(\rpr{\net{1}}{\net{2}}{\xlayer{0}}(\class{i},{\class{j}})\right) = &  \ln\left(\frac{\left(\sm_{\class{i}}(\xlayer{N_1})\right) \cdot \left(\sm_{{\class{j}}}(\ylayer{N_2})\right)}{\left(\sm_{{\class{j}}}(\xlayer{N_1})\right) \cdot \left(\sm_{\class{i}}(\ylayer{N_2})\right)}\right)
    = \ln\left( \frac{\frac{\e^{{\subxlayer{{\class{i}}}{N_1}}}}{\sum_{u=1}^{n^{\net{1}}_{N_1}}\e^{{\subxlayer{u}{N_1}}}} \cdot \frac{\e^{{\subylayer{{\class{j}}}{N_2}}}}{\sum_{u=1}^{n^{\net{2}}_{N_2}}\e^{{\subylayer{u}{N_2}}}}}{\frac{\e^{{\subxlayer{{\class{j}}}{N_1}}}}{\sum_{u=1}^{n^{\net{1}}_{N_1}}\e^{{\subxlayer{u}{N_1}}}} \cdot \frac{\e^{{\subylayer{{\class{i}}}{N_2}}}}{\sum_{u=1}^{n^{\net{2}}_{N_2}}\e^{{\subylayer{u}{N_2}}}}} \right) \\ \\
   = \ln\left( \frac{\e^{{\subxlayer{{\class{i}}}{N_1}}} \cdot \e^{{\subylayer{{\class{j}}}{N_2}}}}{\e^{{\subxlayer{{\class{j}}}{N_1}}} \cdot \e^{{\subylayer{{\class{i}}}{N_2}}}} \right)
   = &  {\subxlayer{{\class{i}}}{N_1}} + {\subylayer{{\class{j}}}{N_2}} - ( {\subxlayer{{\class{j}}}{N_1}} + {\subylayer{{\class{i}}}{N_2}} )
   = {\subxlayer{{\class{i}}}{N_1}} - {\subxlayer{{\class{j}}}{N_1}} - ({\subylayer{{\class{i}}}{N_2}} - {\subylayer{{\class{j}}}{N_2}}) 
\end{array}
\]
\end{proof}

\begin{corollary}
  \label{cor:guaranteed:appendix}
  Let $(\class{i},{\class{j}})$ be a pair of classes of
  compatible \glspl{DNN} \net{1} and \net{2}
  (with resp. $N_1+1$ and $N_2+1$ layers).
  Assume common input $\txlayer{0}=\tylayer{0}$ and
  perturbation $\delta$.
  For $\xlayer{0}=\ylayer{0}$ with $\xlayer{0}\in\sphere{\txlayer{0}}{\delta}$,
  let $\xlayer{0}, \ldots \xlayer{N_1+1}$
  be layers in \net{1} and $\ylayer{0}, \ldots
  \ylayer{N_2+1}$ be layers in \net{2}.
  Then: 
  \[
\begin{array}{l}
  \ln\left(\lrpr{\net{1}}{\net{2}}{\txlayer{0}}{\delta}{\class{i},\class{j}}\right) ~ = ~
  {min_{\xlayer{0}\in\sphere{\txlayer{0}}{\delta}} \left((\subxlayer{\class{i}}{N_1} - \subxlayer{\class{j}}{N_1}) -  (\subylayer{\class{i}}{N_2} - \subylayer{\class{j}}{N_2})\right) } \\
  \\
\ln\left(\urpr{\net{1}}{\net{2}}{\txlayer{0}}{\delta}{\class{i},\class{j}}\right)  ~ = ~
{max_{\xlayer{0}\in\sphere{\txlayer{0}}{\delta}} \left((\subxlayer{\class{i}}{N_1} - \subxlayer{\class{j}}{N_1}) -  (\subylayer{\class{i}}{N_2} - \subylayer{\class{j}}{N_2})\right)}
   \end{array}
\]
\end{corollary}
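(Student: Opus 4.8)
\textbf{Proof proposal for Corollary~\ref{cor:guaranteed:appendix}.}
The plan is to reduce the statement to Lemma~\ref{lem:ln:appendix} together with the elementary fact that a strictly increasing continuous function commutes with $\min$ and $\max$ over a compact domain. First I would note that the neighborhood $\sphere{\txlayer{0}}{\delta}=\{\xlayer{0} \mathrm{~s.t.~} \lVert\xlayer{0}-\txlayer{0}\rVert_\infty\le\delta\}$ is compact, and that under the common-input constraint $\xlayer{0}=\ylayer{0}$ the map $\xlayer{0}\mapsto\rpr{\net{1}}{\net{2}}{\xlayer{0}}(\class{i},\class{j})$ is continuous and strictly positive, since it is a product and quotient of softmax outputs, each continuous with values in $(0,1)$. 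Hence the minimum and maximum defining $\lrpr{\net{1}}{\net{2}}{\txlayer{0}}{\delta}{\class{i},\class{j}}$ and $\urpr{\net{1}}{\net{2}}{\txlayer{0}}{\delta}{\class{i},\class{j}}$ are attained and lie in $(0,\infty)$, so their logarithms are well defined.

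Next, because $\ln$ is strictly increasing and continuous on $(0,\infty)$, for any continuous $g:\sphere{\txlayer{0}}{\delta}\to(0,\infty)$ one has $\ln\!\left(\min_{\xlayer{0}} g(\xlayer{0})\right)=\min_{\xlayer{0}}\ln\!\left(g(\xlayer{0})\right)$, and likewise with $\max$ in place of $\min$. Applying this with $g=\rpr{\net{1}}{\net{2}}{\cdot}(\class{i},\class{j})$ gives $\ln\!\left(\lrpr{\net{1}}{\net{2}}{\txlayer{0}}{\delta}{\class{i},\class{j}}\right)=\min_{\xlayer{0}\in\sphere{\txlayer{0}}{\delta}}\ln\!\left(\rpr{\net{1}}{\net{2}}{\xlayer{0}}(\class{i},\class{j})\right)$ and the analogous identity for the upper bound. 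I would then substitute the closed form of Lemma~\ref{lem:ln:appendix}, namely $\ln\!\left(\rpr{\net{1}}{\net{2}}{\xlayer{0}}(\class{i},\class{j})\right)=(\subxlayer{\class{i}}{N_1}-\subxlayer{\class{j}}{N_1})-(\subylayer{\class{i}}{N_2}-\subylayer{\class{j}}{N_2})$, valid for every $\xlayer{0}=\ylayer{0}$ in the region, inside the $\min$ (resp.\ $\max$), which yields exactly the two displayed equalities.

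The only real obstacle is bookkeeping rather than mathematics: one must confirm that the softmax ratio is genuinely positive so that $\ln$ is defined throughout $\sphere{\txlayer{0}}{\delta}$, and that the extrema exist — both handled by positivity/continuity of softmax and compactness of the $\delta$-ball. If one prefers to avoid invoking attainment, the same argument works verbatim with $\inf$/$\sup$ replacing $\min$/$\max$, since monotone continuous functions commute with infima and suprema of bounded sets. No deeper difficulty is expected; the content is entirely the combination of Lemma~\ref{lem:ln:appendix} with the monotonicity of $\ln$.
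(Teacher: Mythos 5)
Your proposal is correct and follows essentially the same route as the paper: apply $\ln$ to the set of \gls{RPR} values, commute $\ln$ with $\min$/$\max$ by monotonicity, and substitute the closed form from Lemma~\ref{lem:ln:appendix}. The extra care you take about positivity of the softmax ratio and attainment of the extrema on the compact $\delta$-ball is left implicit in the paper but is a welcome addition, not a divergence.
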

\begin{proof}
  By applying the $\ln$ function on $\lrpr{\net{1}}{\net{2}}{\txlayer{0}}{\delta}{\class{i},\class{j}}$ and using Lemma \ref{lem:ln:appendix}:  
\[
\begin{array}{ll}
  \ln\left(\lrpr{\net{1}}{\net{2}}{\txlayer{0}}{\delta}{\class{i},\class{j}}\right)  &
  =~min\left\{{\ln\left(\Pi^{{\net{1}}\mid{\net{2}}}_{\xlayer{0}}(\class{i},\class{j})\right)}~|~{\xlayer{0}\in\sphere{\txlayer{0}}{\delta}}\right\} \\ \\
  & =~
  min\left\{{\left((\subxlayer{\class{i}}{N_1} - \subxlayer{\class{j}}{N_1}) -  (\subylayer{\class{i}}{N_2} - \subylayer{\class{j}}{N_2})\right)}~
  |~{\xlayer{0}\in\sphere{\txlayer{0}}{\delta}}\right\} \\ \\
  & =~
    {min_{\xlayer{0}\in\sphere{\txlayer{0}}{\delta}} \left((\subxlayer{\class{i}}{N_1} - \subxlayer{\class{j}}{N_1}) -  (\subylayer{\class{i}}{N_2} - \subylayer{\class{j}}{N_2})\right) } \\ \\
\end{array}
\]
\[
\begin{array}{ll}    
  \ln\left(\urpr{\net{1}}{\net{2}}{\txlayer{0}}{\delta}{\class{i},\class{j}}\right)  
  & =~ max\left\{{\ln\left(\Pi^{{\net{1}}\mid{\net{2}}}_{\xlayer{0}}(\class{i},\class{j})\right)}~|~{\xlayer{0}\in\sphere{\txlayer{0}}{\delta}}\right\} \\ \\
  & =~max\left\{{\left((\subxlayer{\class{i}}{N_1} - \subxlayer{\class{j}}{N_1}) -  (\subylayer{\class{i}}{N_2} - \subylayer{\class{j}}{N_2})\right)}~
  |~{\xlayer{0}\in\sphere{\txlayer{0}}{\delta}}\right\} \\ \\
  & =~ {max_{\xlayer{0}\in\sphere{\txlayer{0}}{\delta}} \left((\subxlayer{\class{i}}{N_1} - \subxlayer{\class{j}}{N_1}) -  (\subylayer{\class{i}}{N_2} - \subylayer{\class{j}}{N_2})\right) } \\
\end{array}
\] \end{proof}

\begin{theorem}
  \label{theo:relaxed_bounds:appendix}
  Let $(\class{i},{\class{j}})$ be a pair of classes of compatible
  \glspl{DNN} \net{1} and \net{2}.  Assume a neighborhood
  $\sphere{\txlayer{0}}{\delta}$ and let
  $\relaxprob{\net{1}}{\net{2}}{\txlayer{0}}{\delta}(\class{i},\class{j})$
  (resp. $\relaxprob{\net{2}}{\net{1}}{\txlayer{0}}{\delta}(\class{i},\class{j})$)
  be a solution to the relaxed minimization problem corresponding to
  \gls{LRPR} of \net{1} w.r.t. \net{2} (resp. \net{2} w.r.t. \net{1}). Then:
  {\small
  \[
  \begin{array}{lllll}
    \relaxprob{\net{1}}{\net{2}}{\txlayer{0}}{\delta}(\class{i},\class{j}) 
    & \leq & 
    \ln\left(\lrpr{\net{1}}{\net{2}}{\txlayer{0}}{\delta}{\class{i},\class{j}}\right)
    & &
    \\
    \\
    \textrm{ and }
    & &
    \ln\left(\urpr{\net{1}}{\net{2}}{\txlayer{0}}{\delta}{\class{i},\class{j}}\right)
    & \leq &  
    -\relaxprob{\net{2}}{\net{1}}{\txlayer{0}}{\delta}(\class{i},\class{j})
  \end{array}
  \]
  }
\end{theorem}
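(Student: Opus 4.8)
The plan is to establish the two claimed inequalities by combining Corollary \ref{cor:guaranteed:appendix}, which expresses $\ln\!\left(\lrpr{\net{1}}{\net{2}}{\txlayer{0}}{\delta}{\class{i},\class{j}}\right)$ as the exact optimum $\minprob{\net{1}}{\net{2}}{\txlayer{0}}{\delta}(\class{i},\class{j})$ of the minimization problem in \Eqref{obj}--\Eqref{N2}, with the soundness of the linear relaxation already stated in the excerpt, namely $\relaxprob{\net{1}}{\net{2}}{\txlayer{0}}{\delta}(\class{i},\class{j}) \leq \minprob{\net{1}}{\net{2}}{\txlayer{0}}{\delta}(\class{i},\class{j})$. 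Chaining these two facts immediately yields the first inequality $\relaxprob{\net{1}}{\net{2}}{\txlayer{0}}{\delta}(\class{i},\class{j}) \leq \ln\!\left(\lrpr{\net{1}}{\net{2}}{\txlayer{0}}{\delta}{\class{i},\class{j}}\right)$.

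For the second inequality, the key observation is the antisymmetry of the objective under swapping the two networks: from Lemma \ref{lem:ln:appendix}, $\ln\!\left(\rpr{\net{2}}{\net{1}}{\xlayer{0}}(\class{i},{\class{j}})\right) = -\ln\!\left(\rpr{\net{1}}{\net{2}}{\xlayer{0}}(\class{i},{\class{j}})\right)$ pointwise in $\xlayer{0}$, since the expression $(\subxlayer{\class{i}}{N_1} - \subxlayer{\class{j}}{N_1}) - (\subylayer{\class{i}}{N_2} - \subylayer{\class{j}}{N_2})$ simply changes sign. Taking the minimum over $\xlayer{0} \in \sphere{\txlayer{0}}{\delta}$ on both sides and using $\min(-g) = -\max(g)$, I would conclude via Corollary \ref{cor:guaranteed:appendix} that
\[
\minprob{\net{2}}{\net{1}}{\txlayer{0}}{\delta}(\class{i},\class{j}) = \ln\!\left(\lrpr{\net{2}}{\net{1}}{\txlayer{0}}{\delta}{\class{i},\class{j}}\right) = -\ln\!\left(\urpr{\net{1}}{\net{2}}{\txlayer{0}}{\delta}{\class{i},\class{j}}\right).
\]
Then applying soundness of the relaxation to the pair $(\net{2},\net{1})$ gives $\relaxprob{\net{2}}{\net{1}}{\txlayer{0}}{\delta}(\class{i},\class{j}) \leq \minprob{\net{2}}{\net{1}}{\txlayer{0}}{\delta}(\class{i},\class{j}) = -\ln\!\left(\urpr{\net{1}}{\net{2}}{\txlayer{0}}{\delta}{\class{i},\class{j}}\right)$, and rearranging (multiplying by $-1$ and flipping the inequality) yields $\ln\!\left(\urpr{\net{1}}{\net{2}}{\txlayer{0}}{\delta}{\class{i},\class{j}}\right) \leq -\relaxprob{\net{2}}{\net{1}}{\txlayer{0}}{\delta}(\class{i},\class{j})$, as required.

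The argument is essentially a bookkeeping exercise, so there is no serious obstacle; the only point deserving care is making the antisymmetry step fully rigorous. Specifically, I want to be sure that the feasible region of the optimization problem is itself symmetric under swapping $\net{1}$ and $\net{2}$ — which it is, because the constraints \Eqref{input}--\Eqref{N2} treat $\xlayer{0}$ and $\ylayer{0}$ symmetrically (they are tied to a common input in $\sphere{\txlayer{0}}{\delta}$) and the network-propagation constraints are independent for the two networks. Equally, I would note that the inequality $\relaxprob{} \leq \minprob{}$ holds for \emph{any} pair of compatible networks in that order, so instantiating it with the pair $(\net{2},\net{1})$ is legitimate without any further work. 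Assembling these observations in the order above gives a short, self-contained proof.
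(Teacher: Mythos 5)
Your proposal is correct and follows essentially the same route as the paper's own proof: the first inequality is soundness of the relaxation chained with the identity $\ln\left(\lrpr{\net{1}}{\net{2}}{\txlayer{0}}{\delta}{\class{i},\class{j}}\right)=\minprob{\net{1}}{\net{2}}{\txlayer{0}}{\delta}(\class{i},\class{j})$, and the second follows by the pointwise antisymmetry of the log-objective under swapping the networks, turning the minimum for the pair $(\net{2},\net{1})$ into $-\ln\left(\urpr{\net{1}}{\net{2}}{\txlayer{0}}{\delta}{\class{i},\class{j}}\right)$. Your added remark that the feasible region is symmetric under the swap is a worthwhile explicit justification of a step the paper leaves implicit.
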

\begin{proof}
 As mentioned in Section~\ref{sec:method} any lower bound obtained for the relaxed problem is guaranteed to
be smaller than a solution for the original minimization problem. Since $
 min\left\{\lnbigpi{\net{1}}{\net{2}}{\txlayer{0}}{\delta}{\class{i},\class{j}}\right\}
 \geq
 \relaxprob{\net{1}}{\net{2}}{\txlayer{0}}{\delta}(\class{i},\class{j})$,
 we get:
   \[
   \begin {array}{l}
   min\left\{\lnbigpi{\net{1}}{\net{2}}{\txlayer{0}}{\delta}{\class{i},\class{j}}\right\}  \\
   =\ln\left(\lrpr{\net{1}}{\net{2}}{\txlayer{0}}{\delta}{\class{i},\class{j}}\right) \geq \relaxprob{\net{1}}{\net{2}}{\txlayer{0}}{\delta}(\class{i},\class{j})
   \end {array}{}
   \]
 And similarly in a symmetric manner:
 \[
 \begin{array}{l}

    min\left\{\lnbigpi{\net{2}}{\net{1}}{\txlayer{0}}{\sm}{\class{i},\class{j}}\right\} \\
    =min\left\{\bigpiopenedxy{\net{2}}{\net{1}}{\txlayer{0}}{\sm}{\class{i},\class{j}}\right\}  
    =min\left\{-\bigpiopenedyx{\net{1}}{\net{2}}{\txlayer{0}}{\sm}{\class{i},\class{j}}\right\} \\
    = -max\left\{\bigpiopenedyx{\net{1}}{\net{2}}{\txlayer{0}}{\sm}{\class{i},\class{j}}\right\} 
    =-max\left\{\lnbigpi{\net{1}}{\net{2}}{\txlayer{0}}{\sm}{\class{i},\class{j}}\right\} \\
    = -\ln\left(\urpr{\net{1}}{\net{2}}{\txlayer{0}}{\sm}{\class{i},\class{j}}\right) \geq \relaxprob{\net{2}}{\net{1}}{\txlayer{0}}{\sm}(\class{i},\class{j})
 \end{array}{}
  \]

 Then we can deduce:

   \[
  \begin{array}{lllll}
    \relaxprob{\net{1}}{\net{2}}{\txlayer{0}}{\delta}(\class{i},\class{j}) 
    & \leq & 
    \ln\left(\lrpr{\net{1}}{\net{2}}{\txlayer{0}}{\delta}{\class{i},\class{j}}\right)
    & &
    \\
    \\
    \textrm{ and }
    & &
    \ln\left(\urpr{\net{1}}{\net{2}}{\txlayer{0}}{\delta}{\class{i},\class{j}}\right)
    & \leq &  
    -\relaxprob{\net{2}}{\net{1}}{\txlayer{0}}{\delta}(\class{i},\class{j})
  \end{array}
  \]
\end{proof}

\begin{corollary}
Let $(\class{i},{\class{j}})$ be a pair of classes of compatible
  \glspl{DNN} \net{1} and \net{2}.  Assume a neighborhood
  $\sphere{\txlayer{0}}{\delta}$ and let
  $\relaxprob{\net{1}}{\net{2}}{\txlayer{0}}{\delta}(\class{i},\class{j})$
  be a solution to the relaxed minimization problem corresponding to
  \gls{LRPR} of \net{1} w.r.t. \net{2}. If $\relaxprob{\net{1}}{\net{2}}{\txlayer{0}}{\delta}(\class{i},\class{j}) > 0$, then for all perturbed inputs $\xlayer{0} \in \sphere{\txlayer{0}}{\delta}$ for which $\net{2}$ correctly classifies $\xlayer{0}$, then $\net{1}$ also correctly classifies $\xlayer{0}$. That is, $\net{2} \locimplies{} \net{1}$ on the entire neighborhood $\sphere{\txlayer{0}}{\delta}$.
\end{corollary}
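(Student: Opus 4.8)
The plan is to obtain the corollary as a direct consequence of the soundness bound in Theorem~\ref{theo:relaxed_bounds:appendix}, the logit-margin identity of Lemma~\ref{lem:ln:appendix}, and the characterisation of \gls{LRPR} in Corollary~\ref{cor:guaranteed:appendix}. No new estimates are needed; the whole argument is a chain of implications carried out pointwise over the neighbourhood.

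\textbf{Step 1: from the relaxed value to a uniform bound on the log-\gls{RPR}.} Assume $\relaxprob{\net{1}}{\net{2}}{\txlayer{0}}{\delta}(\class{i},\class{j}) > 0$. By Theorem~\ref{theo:relaxed_bounds:appendix}, $\relaxprob{\net{1}}{\net{2}}{\txlayer{0}}{\delta}(\class{i},\class{j}) \leq \ln\left(\lrpr{\net{1}}{\net{2}}{\txlayer{0}}{\delta}{\class{i},\class{j}}\right)$, and by Corollary~\ref{cor:guaranteed:appendix} together with Lemma~\ref{lem:ln:appendix} the right-hand side equals $\min_{\xlayer{0} \in \sphere{\txlayer{0}}{\delta}} \ln\left(\rpr{\net{1}}{\net{2}}{\xlayer{0}}(\class{i},\class{j})\right)$. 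Hence $\ln\left(\rpr{\net{1}}{\net{2}}{\xlayer{0}}(\class{i},\class{j})\right) > 0$, i.e. $\rpr{\net{1}}{\net{2}}{\xlayer{0}}(\class{i},\class{j}) > 1$, for \emph{every} $\xlayer{0} \in \sphere{\txlayer{0}}{\delta}$. The point worth stressing is that a single strictly positive relaxed scalar already certifies this inequality uniformly over the whole region, precisely because the \gls{LP} relaxation is a sound over-approximation of the exact minimum.

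\textbf{Step 2: pointwise transfer of the correct decision.} Unfolding the definition, $\rpr{\net{1}}{\net{2}}{\xlayer{0}}(\class{i},\class{j}) > 1$ is equivalent to $\pr{\net{1}}{\xlayer{0}}(\class{i},\class{j}) > \pr{\net{2}}{\xlayer{0}}(\class{i},\class{j})$. Fix any $\xlayer{0} \in \sphere{\txlayer{0}}{\delta}$ on which $\net{2}$ is correct; then $\class{i}$ is the ground-truth label and $\net{2}$ predicts it, so $\pr{\net{2}}{\xlayer{0}}(\class{i},\class{j}) = \sm_{\class{i}}(\ylayer{N_2})/\sm_{\class{j}}(\ylayer{N_2}) \geq 1$. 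Combining, $\pr{\net{1}}{\xlayer{0}}(\class{i},\class{j}) > 1$, i.e. $\sm_{\class{i}}(\xlayer{N_1}) > \sm_{\class{j}}(\xlayer{N_1})$, so $\net{1}$ ranks $\class{i}$ above $\class{j}$. In the binary case this is already the claim; for several classes one applies Step~1 to every pair $(\class{i},\class{j})$ with $\class{j} \neq \class{i}$ — exactly the family of pairs required for full local implication, as recalled in Section~\ref{sec:eval} — and concludes $\sm_{\class{i}}(\xlayer{N_1}) > \sm_{\class{j}}(\xlayer{N_1})$ for all competitors, hence $\net{1}$ is correct on $\xlayer{0}$. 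Since $\xlayer{0}$ was arbitrary, $\net{2} \locimplies{} \net{1}$ on $\sphere{\txlayer{0}}{\delta}$.

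\textbf{Main obstacle.} The substantive content is already packaged in Theorem~\ref{theo:relaxed_bounds:appendix}, so the only care needed is the multi-class bookkeeping: the statement fixes a single pair $(\class{i},\class{j})$, and as written it literally yields only the pairwise ranking $\sm_{\class{i}}(\xlayer{N_1}) > \sm_{\class{j}}(\xlayer{N_1})$; the passage to ``$\net{1}$ correctly classifies $\xlayer{0}$'' requires the hypothesis to hold with $\class{i}$ the correct class and $\class{j}$ ranging over all alternatives. I would therefore make that quantification explicit. Everything else is routine manipulation of the definitions of \gls{PR}, \gls{RPR}, and \gls{LRPR} and needs no further work.
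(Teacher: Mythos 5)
Your proof is correct, and it reaches the conclusion by a more direct route than the paper. You argue forward: soundness of the relaxation gives $\ln\left(\rpr{\net{1}}{\net{2}}{\xlayer{0}}(\class{i},\class{j})\right)>0$, hence $\pr{\net{1}}{\xlayer{0}}(\class{i},\class{j})>\pr{\net{2}}{\xlayer{0}}(\class{i},\class{j})$, uniformly over $\sphere{\txlayer{0}}{\delta}$; then correctness of $\net{2}$ at a point gives $\pr{\net{2}}{\xlayer{0}}(\class{i},\class{j})\geq 1$ and therefore $\pr{\net{1}}{\xlayer{0}}(\class{i},\class{j})>1$. This is essentially the chain already sketched in Remark~1 of Section~2, upgraded with Theorem~\ref{theo:relaxed_bounds}. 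The paper's appendix proof instead enumerates the four combinations of ($\net{1}$ correct/incorrect, $\net{2}$ correct/incorrect) and rules out the single violating case ($\net{1}$ wrong, $\net{2}$ right) by contradiction with the positive lower bound. The two arguments have identical mathematical content; yours is tighter and avoids an infelicity in the paper's version, which writes conditions like $\pr{\net{2}}{\xlayer{0}}(\class{i},\class{j})<0$ for misclassification even though \gls{PR} is a ratio of softmax values and hence always positive (the intended threshold is $1$, or $0$ for the logit difference). You are also right to flag the multi-class quantification: a single pair $(\class{i},\class{j})$ only certifies the pairwise ranking, and "correctly classifies" requires the hypothesis for all $\class{j}\neq\class{i}$ with $\class{i}$ the ground-truth class — exactly the convention the paper adopts in Section~\ref{sec:eval} but leaves implicit in the corollary and its proof. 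Making that quantification explicit, as you propose, is an improvement.
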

\begin{proof}

Suppose $\relaxprob{\net{1}}{\net{2}}{\txlayer{0}}{\delta}(\class{i},\class{j}) > 0$ on a neighborhood around a sample $\txlayer{0}$. Four possible scenarios are possible:
\begin{itemize}
    \item If a sample $\xlayer{0}$ in the neighborhood of $\txlayer{0}$ is classified incorrectly by both networks. We still have $\pr{\net{1}}{\xlayer{0}}(\class{i},\class{j}) > \pr{\net{2}}{\xlayer{0}}(\class{i},\class{j})$ for $\xlayer{0}$, which means that $\net{1}$ is closer to correctly classifying $\xlayer{0}$ than $\net{2}$. So, you can trust $\net{1}$ if you did trust $\net{2}$.

    \item If a sample $\xlayer{0}$ in the neighborhood of $\txlayer{0}$ is classified correctly by $\net{1}$, i.e., $\pr{\net{1}}{\xlayer{0}}(\class{i},\class{j}) > 0$, and incorrectly by $\net{2}$, i.e., $\pr{\net{2}}{\xlayer{0}}(\class{i},\class{j}) < 0$, then again you can trust $\net{1}$ if you did trust $\net{2}$ as $\net{1}$ is correct despite $\net{2}$ making an incorrect prediction.

    \item If a sample $\xlayer{0}$ in the neighborhood of $\txlayer{0}$ is classified incorrectly by $\net{1}$ and correctly by $\net{2}$, then $\net{1}$ is doing worse than $\net{2}$. Since $\net{1}$ classifies the $\xlayer{0}$ incorrectly, then $\pr{\net{1}}{\xlayer{0}}(\class{i},\class{j}) < 0$. Also, $\net{2}$ classifies the sample $\xlayer{0}$ correctly implies $\pr{\net{2}}{\xlayer{0}}(\class{i},\class{j}) > 0$. As a result, the lower bound $\relaxprob{\net{1}}{\net{2}}{\txlayer{0}}{\delta}(\class{i},\class{j})$  on the neighborhood has to be negative, which is excluded by our assumption $\relaxprob{\net{1}}{\net{2}}{\txlayer{0}}{\delta}(\class{i},\class{j}) > 0$ .

    \item If sample $\xlayer{0}$ in the neighborhood is classified correctly by both networks with $\relaxprob{\net{1}}{\net{2}}{\txlayer{0}}{\delta}(\class{i},\class{j}) > 0$, then we know that the ratio of probabilities associated to the correct decision for $\xlayer{0}$ by $\net{1}$ is larger than the ratio of probabilities associated by $\net{2}$ for the same correct decision on $\xlayer{0}$. Hence, you can trust $\net{1}$ if you trusted $\net{2}$.
\end{itemize}

\end{proof}

\subsection{A Sound Over-Approximation of \glspl{DNN} Behavior}
\label{appendix:relaxation}

Solving the original minimization problem is not trivial.
Indeed, the activation functions result in nonlinear constraints for
Equations~(\ref{N1})~and~(\ref{N2}).
Our analysis targets \gls{ReLU} layers, it can be generalized to
accommodate any nonlinear activation function that can be represented
in a piece-wise linear form \citep{Ehler:linear}.
\gls{ReLU} functions are the most widely used activation
functions in \glspl{DNN}.
Recall the last layer is a softmax layer, but we are only interested
in the possible values of its inputs.
We explain in the following how to over-approximate the values
computed at each layer using linear inequalities. The goal is to make
possible the computation of a tight lower bound for the minimization problem
from Section~\ref{sec:method}.

A \gls{ReLU} compounds two linear segments, resulting in a piece-wise
linear function. Consider \(\hat{x}_i^{(k)} = {\mW}_{i,:}^{(k)}
\vx^{(k-1)} + {b}_i^{(k)}\), the value of the $i^{th}$ neuron in the
$k^{th}$ layer before applying the activation function.
The output $x_i^{(k)}$ of the \gls{ReLU} of $\hat{x}_i^{(k)}$ is $\hat{x}_i^{(k)}$
if $\hat{x}_i^{(k)} \geq 0 $ and $0$ otherwise.
When considering a $\delta$-neighborhood as inputs, each neuron
\(\hat{x}_i^{(k)}\) gets lower and upper bounds, denoted as
\(\hat{\underline{x}}_i^{(k)}\) and \(\overline{\hat{x}}_i^{(k)}\),
respectively.
Applying \gls{ReLU} to each neuron \(\hat{x}_i^{(k)}\) results in the
neuron being always active 
when both lower and upper bounds are
positive (i.e., \gls{ReLU} coincides with the identity relation),
and always inactive when both are negative (i.e., \gls{ReLU} coincides with zero).
There is a third situation where lower and upper bounds have different signs.
To adapt \gls{ReLU} to our optimization framework, we
consider as in \citep{Ehler:linear} the minimum convex area bounded by
\(\hat{\underline{x}}_i^{(k)}\) and
\(\overline{\hat{x}}_i^{(k)}\).
The convex is given by the three
inequalities:
\begin{gather*}
x^{(k)}_{i} \leq \overline{\hat{x}}^{(k)}_{i}.\dfrac{\hat{x}^{(k)}_{i} - \hat{\underline{x}}^{(k)}_{i}}{\overline{\hat{x}}^{(k)}_{i} - \hat{\underline{x}}^{(k)}_{i}} ,\;\;\;
x^{(k)}_{i} \geq \hat{x}^{(k)}_{i}, \;\;\;
x^{(k)}_{i} \geq 0.
\label{eqrelu}
\end{gather*}

Lower and upper bounds of each neuron can be calculated by propagating
through the network, starting from the input layer w.r.t. the
perturbation \(\delta\). In fact, our proposed framework can manage various layers including,
 but not limited to, convolution, zero-padding, max-pooling, permute,
 and flattening layers.
 For instance, a max-pooling layer with a pool size of $p_k$ can be approximated
 with $p_k + 1$ inequalities as follows. 
 Let $J=\{(i-1)p_k+1, \dots, ip_k \}$, use: 
\begin{equation*}
\begin{gathered}
{x}^{(k)}_{i} \geq {x}_{j}^{(k-1)}, \forall j \in J, \\ 
\sum_{j\in J}{x}_{j}^{(k-1)} \geq {x}^{(k)}_{i} + \sum_{j\in J} \underline{x}_{j}^{(k-1)} - \max_{j\in J} \underline{x}_{j}^{(k-1)}.
\end{gathered}
\end{equation*}

Other nonlinear layers used in Equations~(\ref{N1})~and~(\ref{N2})
can also be over-approximated using linear inequalities.

\subsection{Joint vs Independent Analysis}

\begin{proof}
 Recall:
 {
 \begin{itemize}
 \item \(
\minprob{\net{1}}{\net{2}}{\txlayer{2}}{\delta}(\class{i},\class{j})
=   min\left\{
    {\left((\subxlayer{\class{i}}{N_1} - \subxlayer{\class{j}}{N_1}) -  (\subylayer{\class{i}}{N_2} - \subylayer{\class{j}}{N_2})\right)}
    ~|~{\xlayer{0}\in\sphere{\txlayer{0}}{\delta}}
    \right\}
\)
 \item \(
\minprob{\net{1}}{\zeronet}{\txlayer{0}}{\delta}(\class{i},\class{j})
=   min\left\{
    {\left((\subxlayer{\class{i}}{N_1} - \subxlayer{\class{j}}{N_1})\right)}
    ~|~{\xlayer{0}\in\sphere{\txlayer{0}}{\delta}}
    \right\}
\)
 \item \(
\minprob{\zeronet}{\net{2}}{\txlayer{0}}{\delta}(\class{i},\class{j})
=   min\left\{
    {\left(-(\subylayer{\class{i}}{N_2} - \subylayer{\class{j}}{N_2})\right)}
    ~|~{\ylayer{0}\in\sphere{\txlayer{0}}{\delta}}
    \right\}
    \)
\end{itemize}
}

Let \(\subxlayer{\class{i}}{N_1}, \subxlayer{\class{j}}{N_1}, \subylayer{\class{i}}{N_2}, \subylayer{\class{j}}{N_2}\)
be the logits values obtained in the solution \(\minprob{\net{1}}{\net{2}}{\txlayer{2}}{\delta}(\class{i},\class{j})\).

Let \({\subxlayer{\class{i}}{N_1}}', {\subxlayer{\class{j}}{N_1}}'\)
be the logits values obtained in the solution \(\minprob{\net{1}}{\zeronet}{\txlayer{2}}{\delta}(\class{i},\class{j})\).

Let \({\subylayer{\class{i}}{N_2}}', {\subylayer{\class{j}}{N_2}}'\)
be the logits values obtained in the solution \(\minprob{\zeronet}{\net{2}}{\txlayer{2}}{\delta}(\class{i},\class{j})\).

By definitions: 
\begin{itemize}
\item \( ({\subxlayer{\class{i}}{N_1}}' - {\subxlayer{\class{j}}{N_1}}') \leq (\subxlayer{\class{i}}{N_1} - \subxlayer{\class{j}}{N_1})\)
\item \( -({\subylayer{\class{i}}{N_2}}' - {\subylayer{\class{j}}{N_2}}') \leq -(\subylayer{\class{i}}{N_2} - \subylayer{\class{j}}{N_2})\)
\end{itemize}

Hence:
\[
({\subxlayer{\class{i}}{N_1}}' - {\subxlayer{\class{j}}{N_1}}') 
-({\subylayer{\class{i}}{N_2}}' - {\subylayer{\class{j}}{N_2}}')
\leq 
(\subxlayer{\class{i}}{N_1} - \subxlayer{\class{j}}{N_1})
 - (\subylayer{\class{i}}{N_2} - \subylayer{\class{j}}{N_2})
\]
and
\[
\minprob{\net{1}}{\zeronet}{\txlayer{0}}{\delta}(\class{i},\class{j})+\minprob{\zeronet}{\net{2}}{\txlayer{0}}{\delta}(\class{i},\class{j})
         \leq 
     \minprob{\net{1}}{\net{2}}{\txlayer{0}}{\delta}(\class{i},\class{j})
     \]
 \end{proof}
 
\subsection{Transitivity Property of \glspl{LRPR}}

\begin{proof}
The proof follows directly from Theorem \ref{theo:relaxed_bounds:appendix}.
\end{proof}

\subsection{Additional Tables and Figures for Section ~\ref{sec:eval}}

\subsubsection{Information on Neural Networks} \label{info_NN}

\paragraph{Original Networks.} For the MNIST and CIFAR10 datasets, we use \glspl{DNN} from~\citep{ugare2022proof}, which have undergone robust training as outlined in~\citep{chiang2020certified}. The fully-connected \glspl{DNN} share the same structure, consisting of seven dense layers with 200 neurons each. The convolutional \gls{DNN} trained on the MNIST dataset includes two convolutional layers, each preceded by a zero-padding layer, followed by five dense layers, each with 256 neurons. The convolutional \gls{DNN} trained on the CIFAR10 dataset includes two additional pairs of convolutional and zero-padding layers compared to the convolutional \gls{DNN} for MNIST. The experimental results for the convolutional \glspl{DNN} are presented in the following sections. Patients in the CHB-MIT and MIT-BIH datasets have personalized convolutional \glspl{DNN}~\citep{baninajjarvnn}. For each patient in the CHB-MIT dataset, the \gls{DNN} has 2048 input neurons, two convolution layers followed by max-pooling layers with 3 and 5 filters, kernel sizes of 100 and 200, and a dense layer with 40 neurons. The accuracy ($\mu \pm {\sigma}$) is $85.7\% \pm 14.8\%$. For each patient in the MIT-BIH dataset, the \gls{DNN} has an input layer with 320 neurons, a convolution layer with a 64-size kernel and 3 filters, and a dense layer with 40 neurons. The accuracy is  $92.2\% \pm 9.1\%$.

\paragraph{Compact Networks.} Here, we explain the architecture and design of compact networks. Our experiments involve pruning, quantization, and knowledge distillation. These techniques derive compact \glspl{DNN} enabling energy-efficient inference on limited resources, and improving generalization and interoperability.

\textbf{Pruned Networks} are created through a pruning procedure applied to \glspl{DNN}, where certain weights and biases are selectively nullified. For the MNIST and CIFAR10 datasets, we use pruned networks generated by~\citep{ugare2022proof} via post-training pruning. Each pruned network removes the smallest weights/biases in each layer, a process called \gls{MBP}, resulting in nine pruned networks with pruning rates ranging from 10\% to 90\%. For the CHB-MIT and MIT-BIH datasets, we apply \gls{MBP} pruning by setting values below 10\% of the maximum weight/bias to zero. The resulting accuracies are $84.1\% \pm 17.2\%$ and $90.7\% \pm 10.9\%$, respectively.

\textbf{\glspl{VNN}} are generated by optimizing weights and biases to maintain their functionality while reducing the number of non-zero weights~\citep{baninajjarvnn}. For the MNIST and CIFAR10 datasets, we use \glspl{VNN} generated by~\citep{baninajjarvnn}. On the CHB-MIT and MIT-BIH datasets, \glspl{VNN} from~\citep{baninajjarvnn} achieve accuracies of $82.5\% \pm 9.6\%$ and $92.0\% \pm 9.1\%$, respectively.

\textbf{Quantized Networks} are obtained by reducing the precision of network weights, converting them from 32-bit floating-point to lower precision. The MNIST and CIFAR10 networks are quantized using post-training float16, int16, int8, and int4 methods~\citep{ugare2022proof}. The same quantization is applied to the CHB-MIT and MIT-BIH networks, and the accuracy of the quantized networks is presented in Table~\ref{acc_all_quant}.

\textbf{Distilled Networks} are compact networks trained via knowledge distillation to mimic teacher networks' behavior~\citep{hinton2015distilling}. The architecture differs from the original networks, and the temperature parameter influences distillation complexity. We evaluate nine temperatures, i.e., $T=1,\dots,9$, and produce distilled networks for the MNIST and CIFAR10 datasets following~\citep{hinton2015distilling}, as~\citep{ugare2022proof} did not provide distilled networks for these datasets. For all distilled networks, we use a single layer with 20 neurons. 
The accuracy of distilled networks is found in Table~\ref{acc_all_dist}.

\begin{table}[b]
\caption{Accuracy of quantized networks trained on CHB-MIT and MIT-BIH datasets.}\label{acc_all_quant}
  \centering
  \scalebox{0.75}{
  \begin{tabular}{ c c c c c}
  \toprule
   & float16 & int16 & int8 & int4  \\ \midrule
  CHB-MIT & $85.7 \pm 14.8$ & $81.6 \pm 14.8$ & $80.9 \pm 15.0$  & $85.1 \pm 15.1$ \\
  MIT-BIH & $92.2 \pm 9.1$ & $91.9 \pm 10.1$ & $91.8 \pm 10.1$ & $90.9 \pm 10.7$ \\
  \bottomrule
\end{tabular}
  }
\end{table}

\begin{table}[t]
\caption{Accuracy of distilled networks trained on CHB-MIT and MIT-BIH datasets.} \label{acc_all_dist}
  \centering
  \scalebox{0.75}{
  \centering
  \begin{tabular}{c c c c c c c c c c c}
  \toprule
   & T1 & T2 & T3 & T4 & T5 & T6 & T7 & T8 & T9 \\ \midrule
  CHB-MIT & $73.4 \pm 9.5$ & $72.5 \pm 11.2$ & $73.0 \pm 9.4$ & $71.0 \pm 9.0$ & $71.6 \pm 8.7$ &  $71.5 \pm 9.6$ & $73.5 \pm 9.7$ & $73.5 \pm 8.6$ & $71.5 \pm 9.5$ \\
  MIT-BIH & $91.9 \pm 6.7$ & $90.6 \pm 9.2$ & $90.0 \pm 11.2$ & $89.1 \pm 12.7$ & $89.7 \pm 11.2$ & $90.3 \pm 10.2$ & $90.6 \pm 10.3$ & $90.6 \pm 10.2$ & $90.2 \pm 10.0$ \\
  \bottomrule
\end{tabular}
}
  
\end{table}

\subsubsection{Additional Experiments and Figures} \label{app_extra_exp}

\paragraph{MNIST and CIFAR10 Datasets.} Figure~\ref{CNN_photo_relative_robustness} presents the results of formal local implication for convolutional \glspl{DNN} trained on the MNIST and CIFAR10 datasets with $\delta = 0.001$. The figures illustrate distinct patterns for different compact networks. Figures~\ref{conv_scatter} and~\ref{photo_scatter} demonstrate the minimum and maximum values of \glspl{LRPR} achieved by our method with joint analysis, compared to those obtained through independent analysis of the networks when $\delta = 0.01$.

\begin{figure}[ht]
  \centering  
  \begin{subfigure}{0.32\textwidth}
    \includegraphics[width=\linewidth]{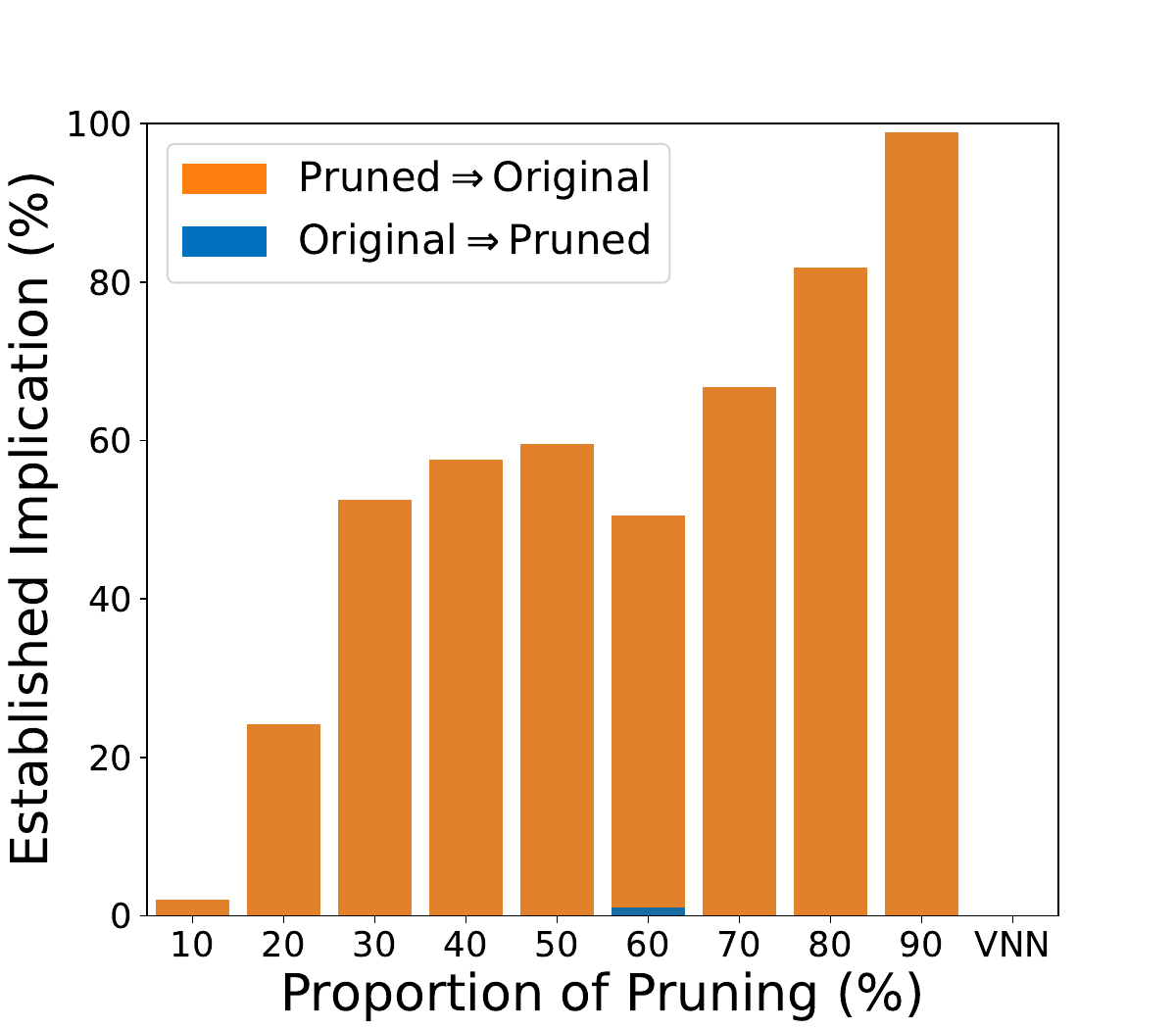}
    \caption{MNIST, Pruned~\citep{ugare2022proof}, ~\citep{baninajjarvnn}}
    \label{CNN_pruned_mnist}
  \end{subfigure}
  \begin{subfigure}{0.32\textwidth}
    \includegraphics[width=\linewidth]{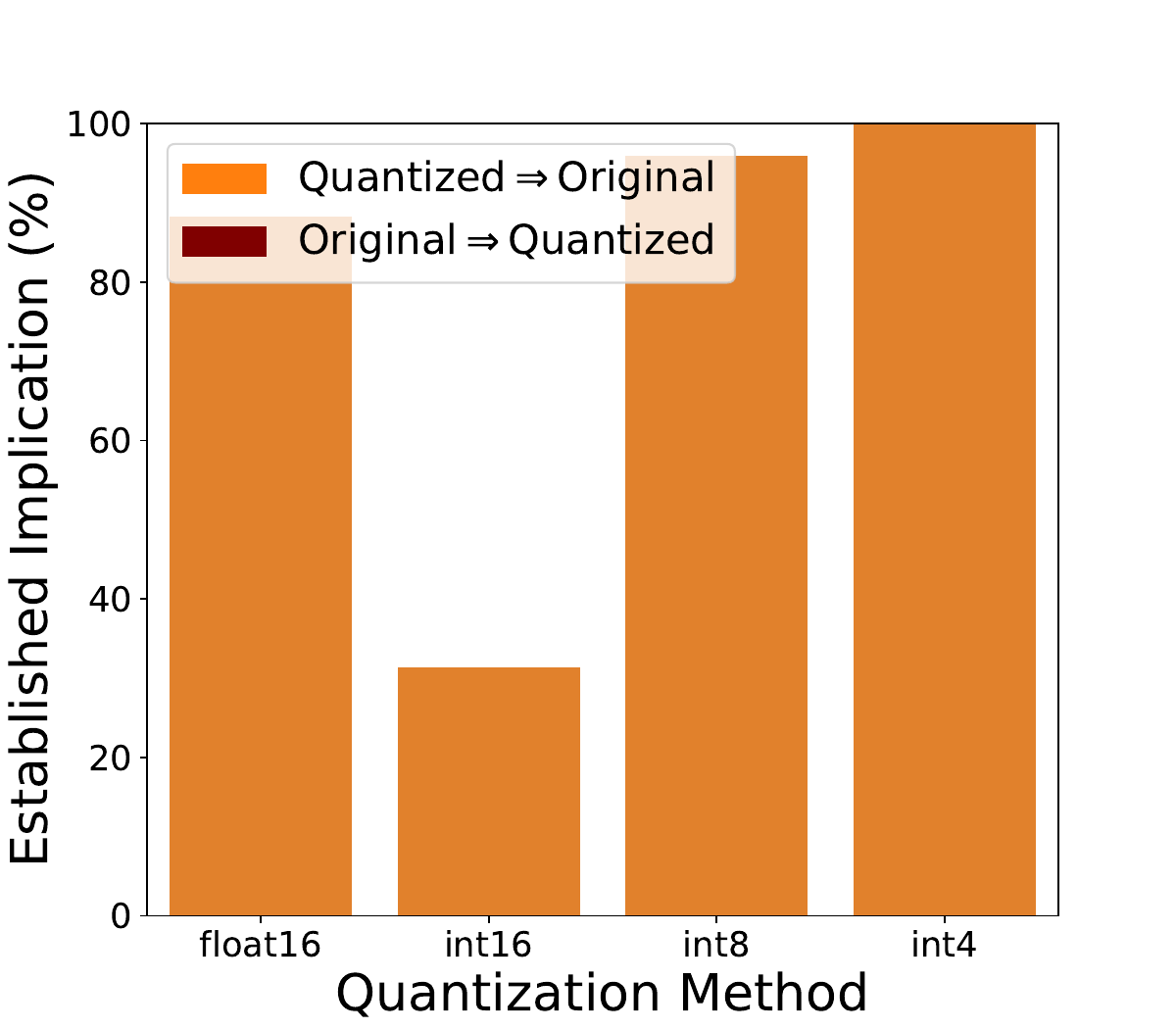}
    \caption{MNIST, Quantized  \citep{ugare2022proof}}
    \label{CNN_quant_mnist}
  \end{subfigure}
  \begin{subfigure}{0.32\textwidth}
    \includegraphics[width=\linewidth]{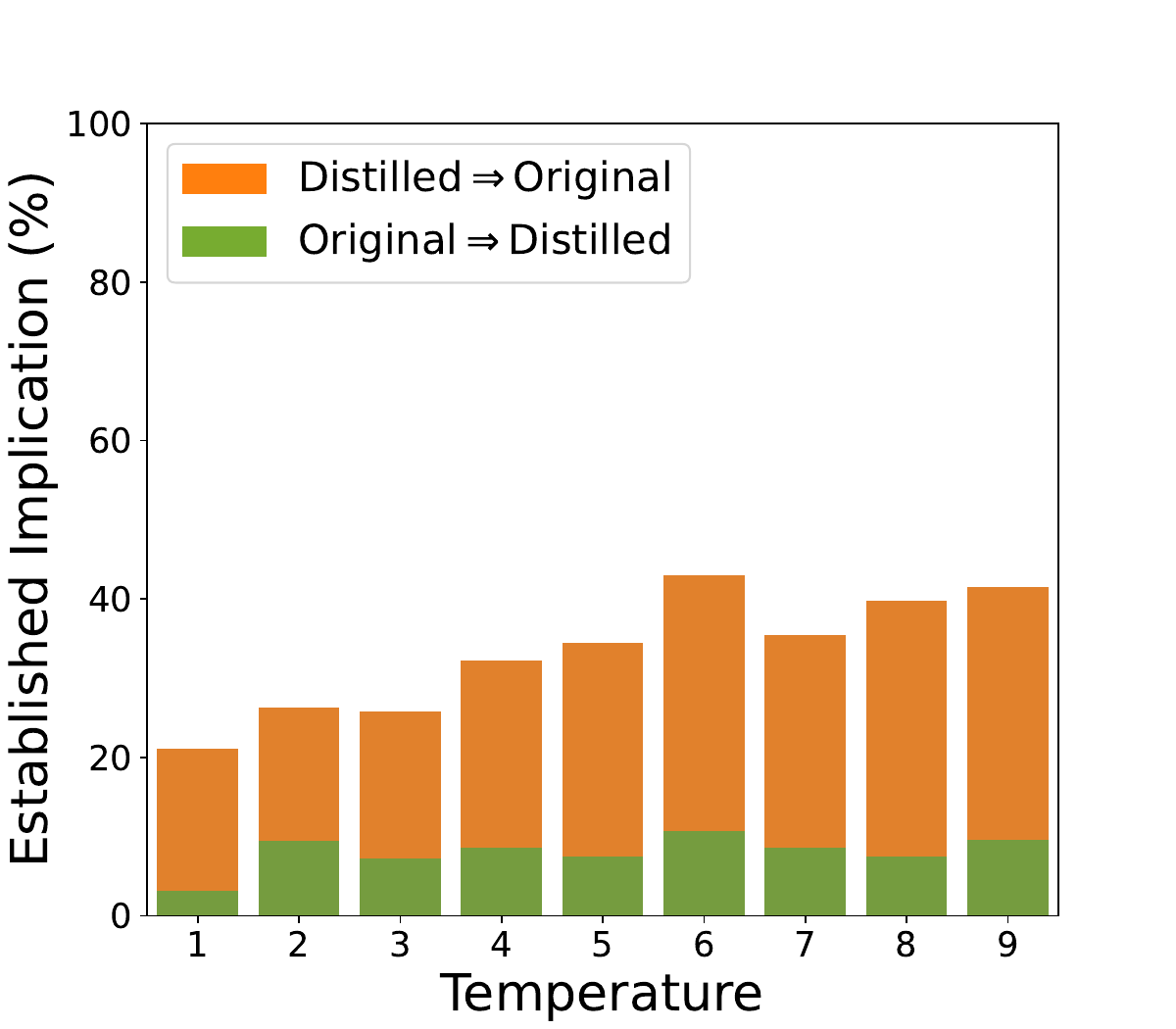}
    \caption{MNIST, Distilled \citep{hinton2015distilling}}
    \label{CNN_dist_mnist}
  \end{subfigure}

\begin{subfigure}{0.32\textwidth}
    \includegraphics[width=\linewidth]{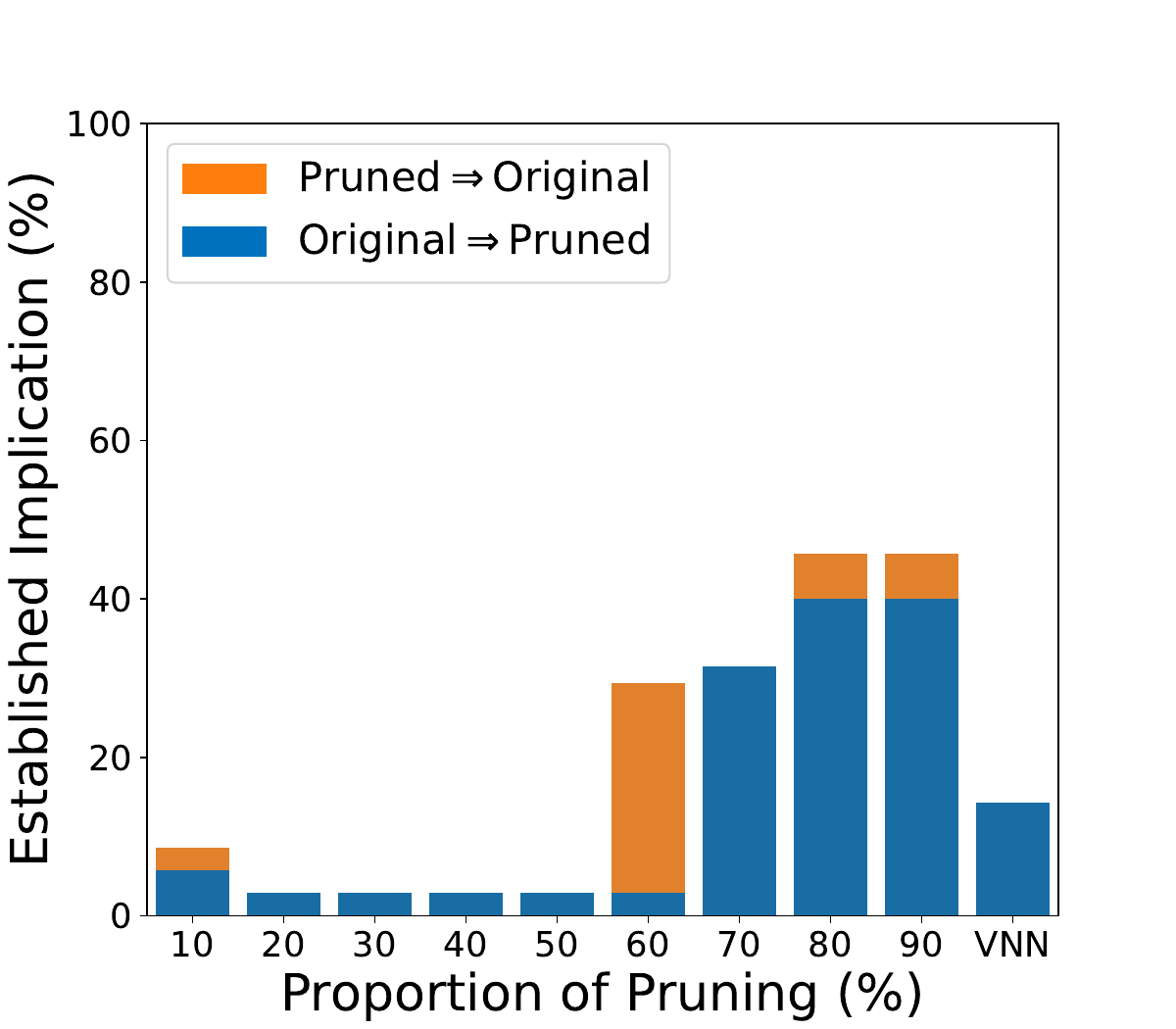}
    \caption{CIFAR10, Pruned~\citep{ugare2022proof}, ~\citep{baninajjarvnn}}
    \label{CNN_pruned_cifar}
  \end{subfigure}
  \begin{subfigure}{0.32\textwidth}
    \includegraphics[width=\linewidth]{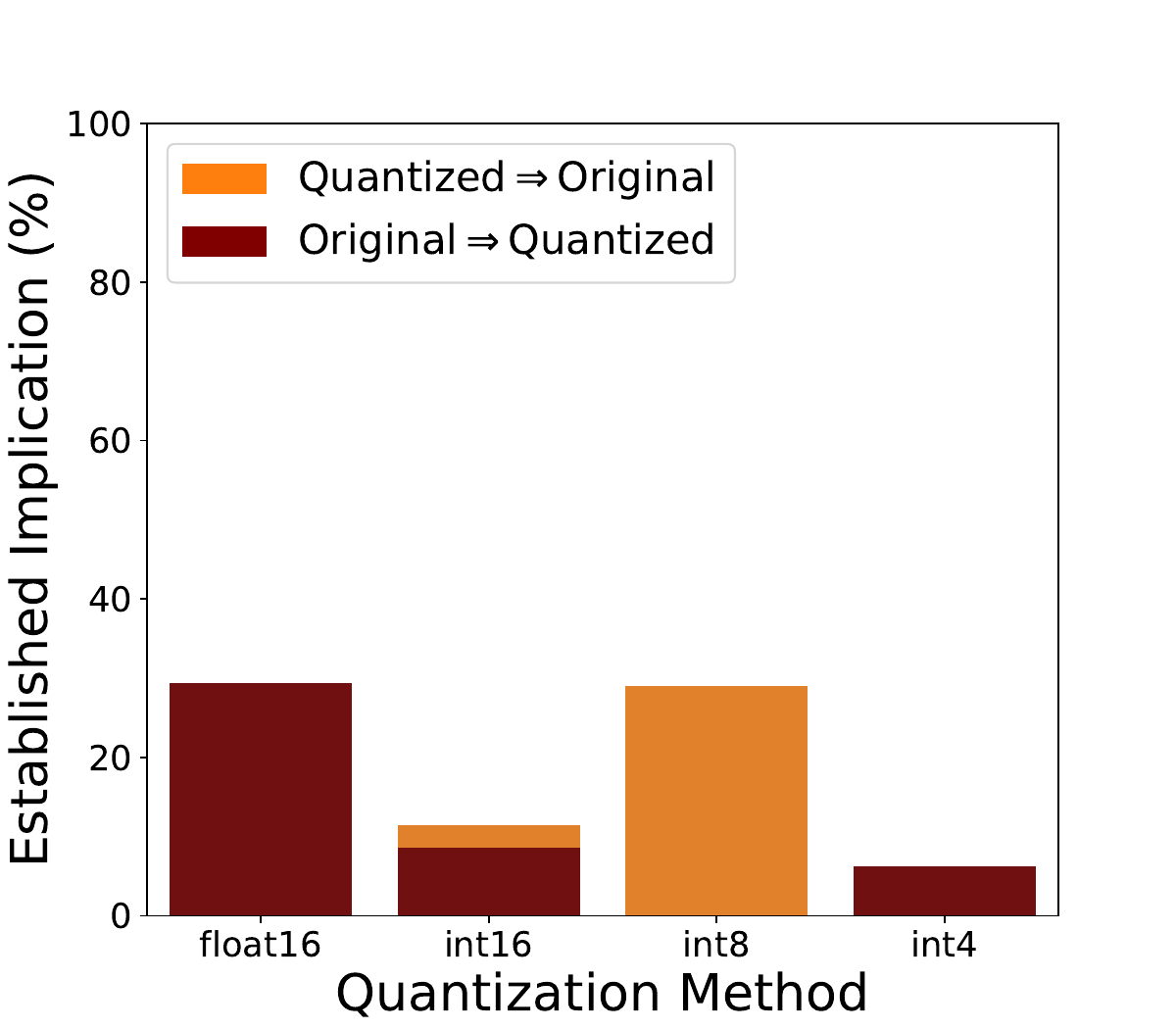}
    \caption{CIFAR10, Quantized \citep{ugare2022proof}}
    \label{CNN_quant_cifar}
  \end{subfigure}
  \begin{subfigure}{0.32\textwidth}
    \includegraphics[width=\linewidth]{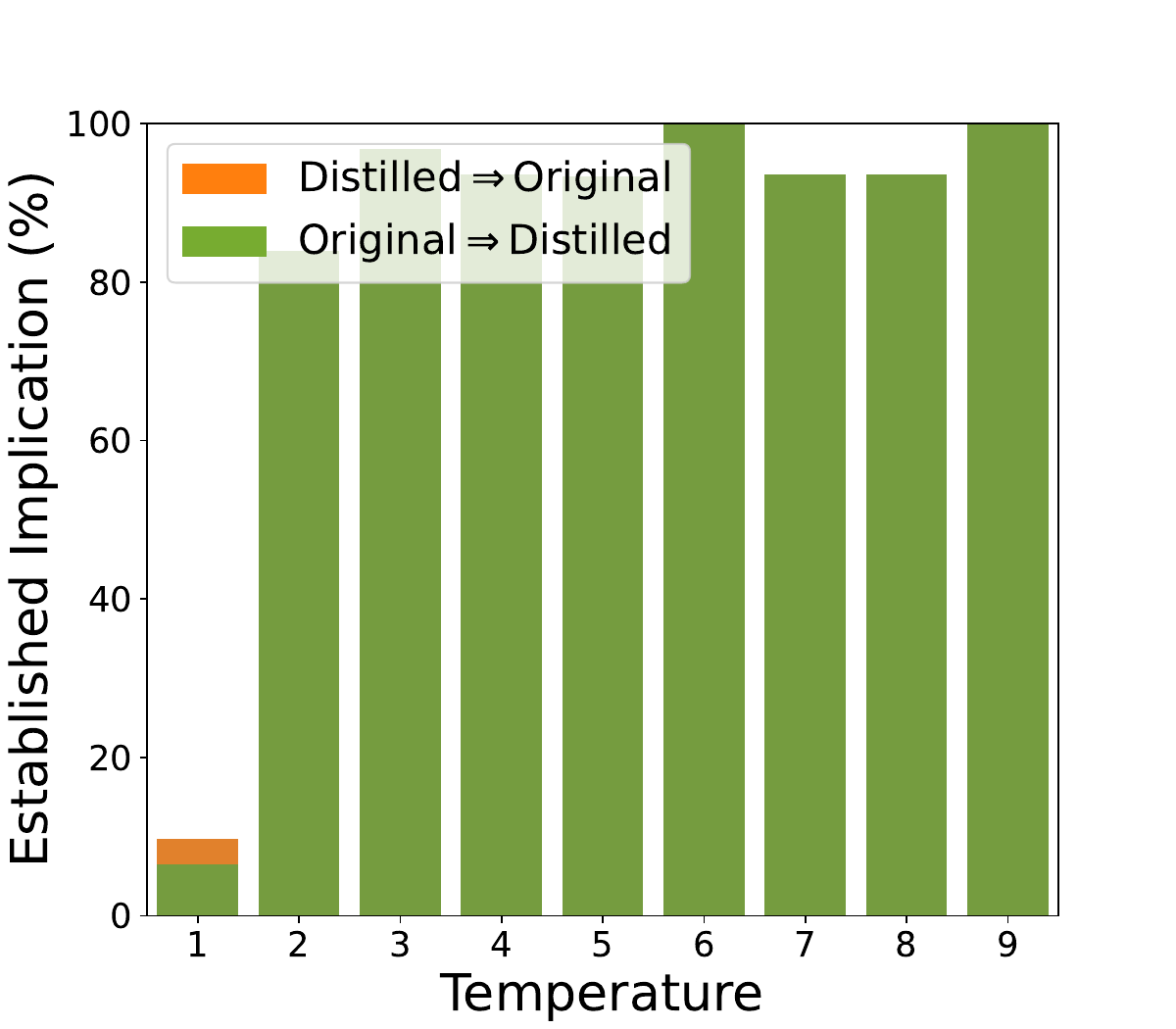}
    \caption{CIFAR10, Distilled  \citep{hinton2015distilling}}
    \label{CNN_dist_cifar}
  \end{subfigure}
  \caption{Stacked bar plots illustrate the established implication of convolutional \glspl{DNN} trained on the MNIST and CIFAR10 datasets with $\delta = 0.001$. The y-axis represents the percentage of samples in the dataset, showing the proportion for which the Compact network implies the Original network (Compact $\implies$ Original) and vice versa. 
 }
  \label{CNN_photo_relative_robustness}
\end{figure}

\begin{figure}[t]
 \centering

\begin{subfigure}{0.24\textwidth}
\includegraphics[trim={0cm 0cm 1cm 0.5cm}, clip,width=\linewidth]{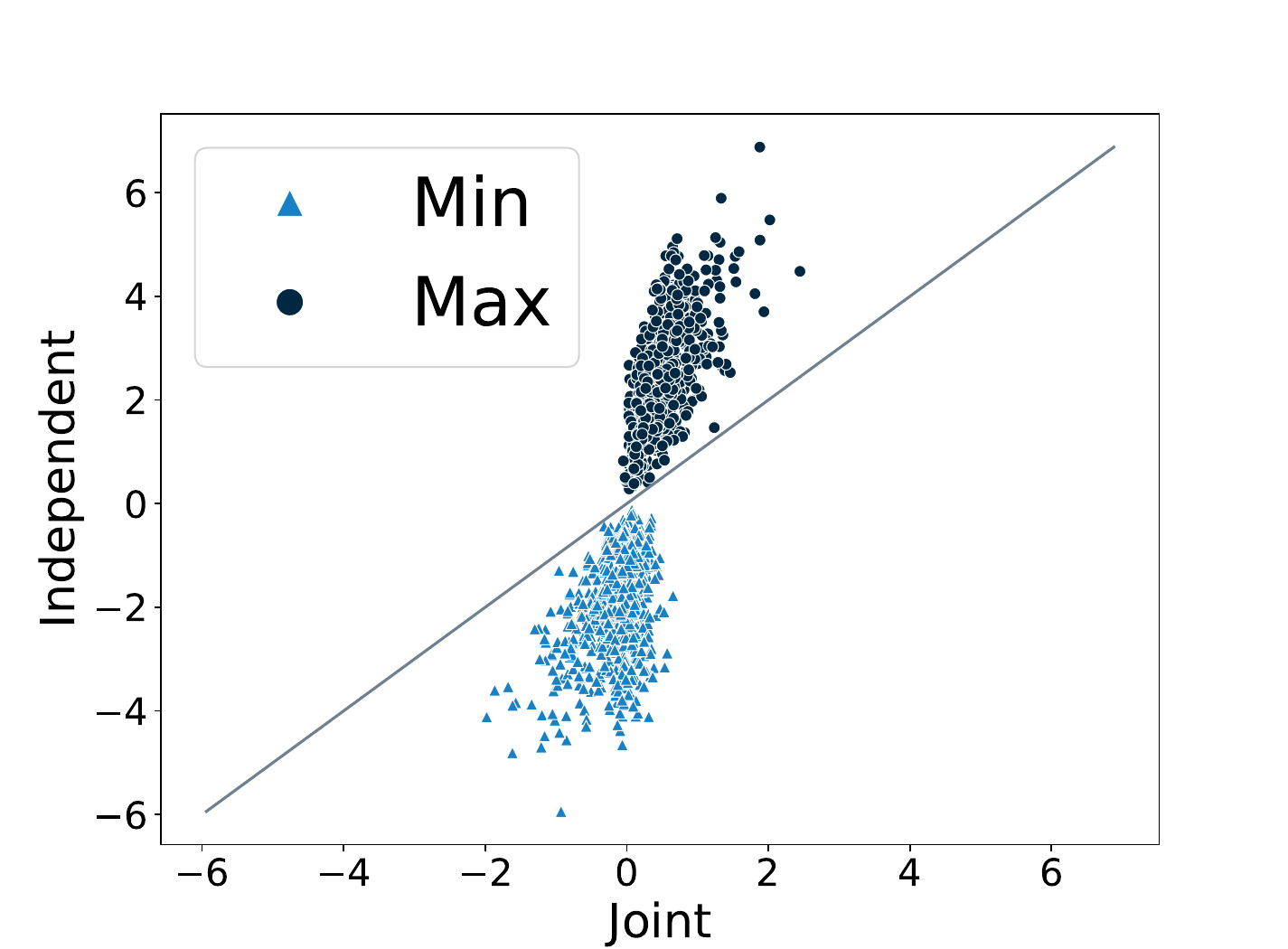}
    \caption{MNIST, Pruned \citep{ugare2022proof}}
    \label{prune50_CNN_mnist}
  \end{subfigure}
\begin{subfigure}{0.24\textwidth}
\includegraphics[trim={0cm 0cm 1cm 0.5cm}, clip,width=\linewidth]{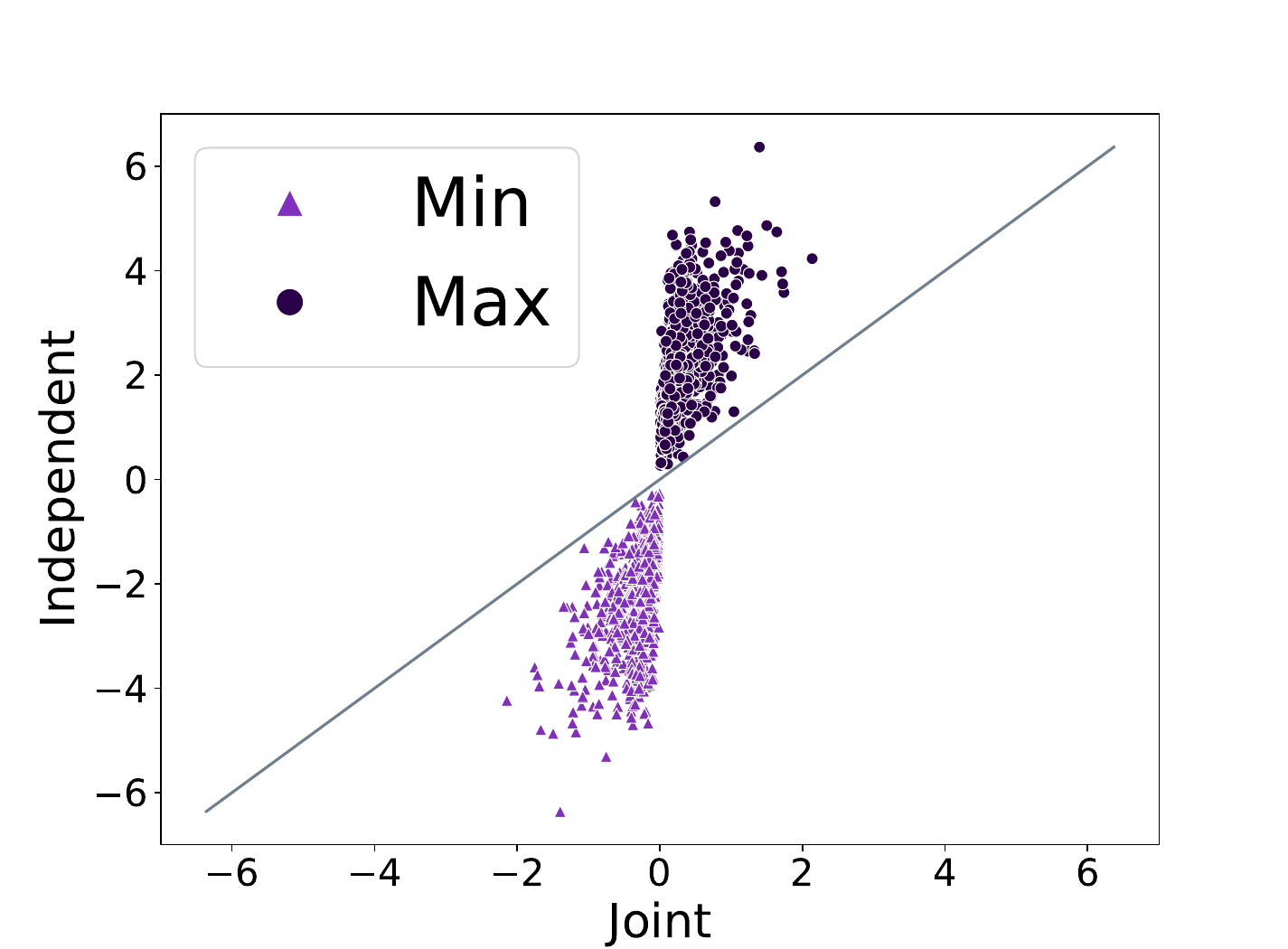}
    \caption{MNIST, \gls{VNN} \citep{baninajjarvnn}}
    \label{VNN_CNN_mnist}
  \end{subfigure}
  \begin{subfigure}{0.24\textwidth}
\includegraphics[trim={0cm 0cm 1cm 0.5cm}, clip,width=\linewidth]{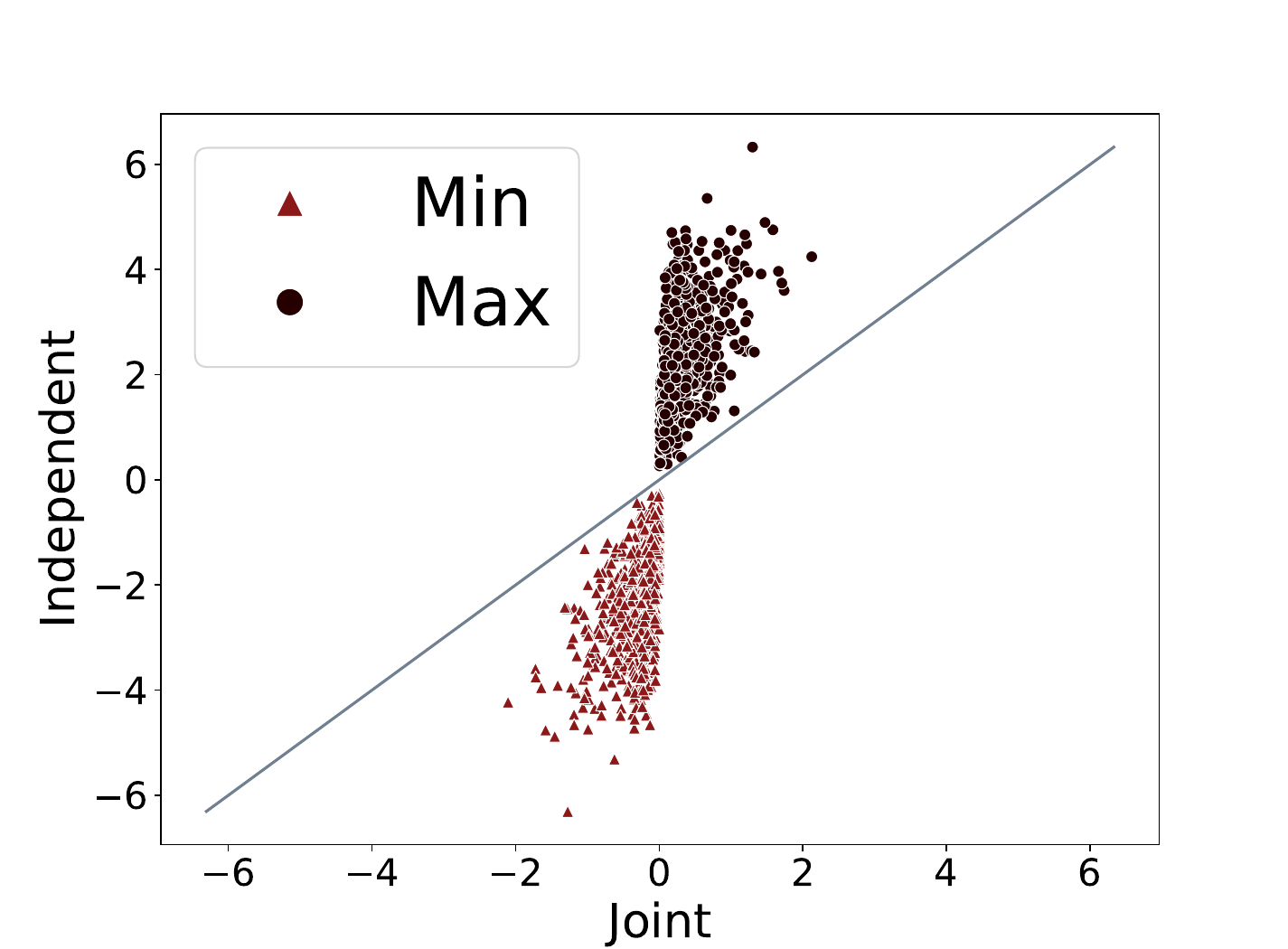}
    \caption{MNIST, Quantized \citep{ugare2022proof}}
    \label{quant16_CNN_mnist}
  \end{subfigure}
\begin{subfigure}{0.24\textwidth}
\includegraphics[trim={0cm 0cm 1cm 0.5cm}, clip,width=\linewidth]{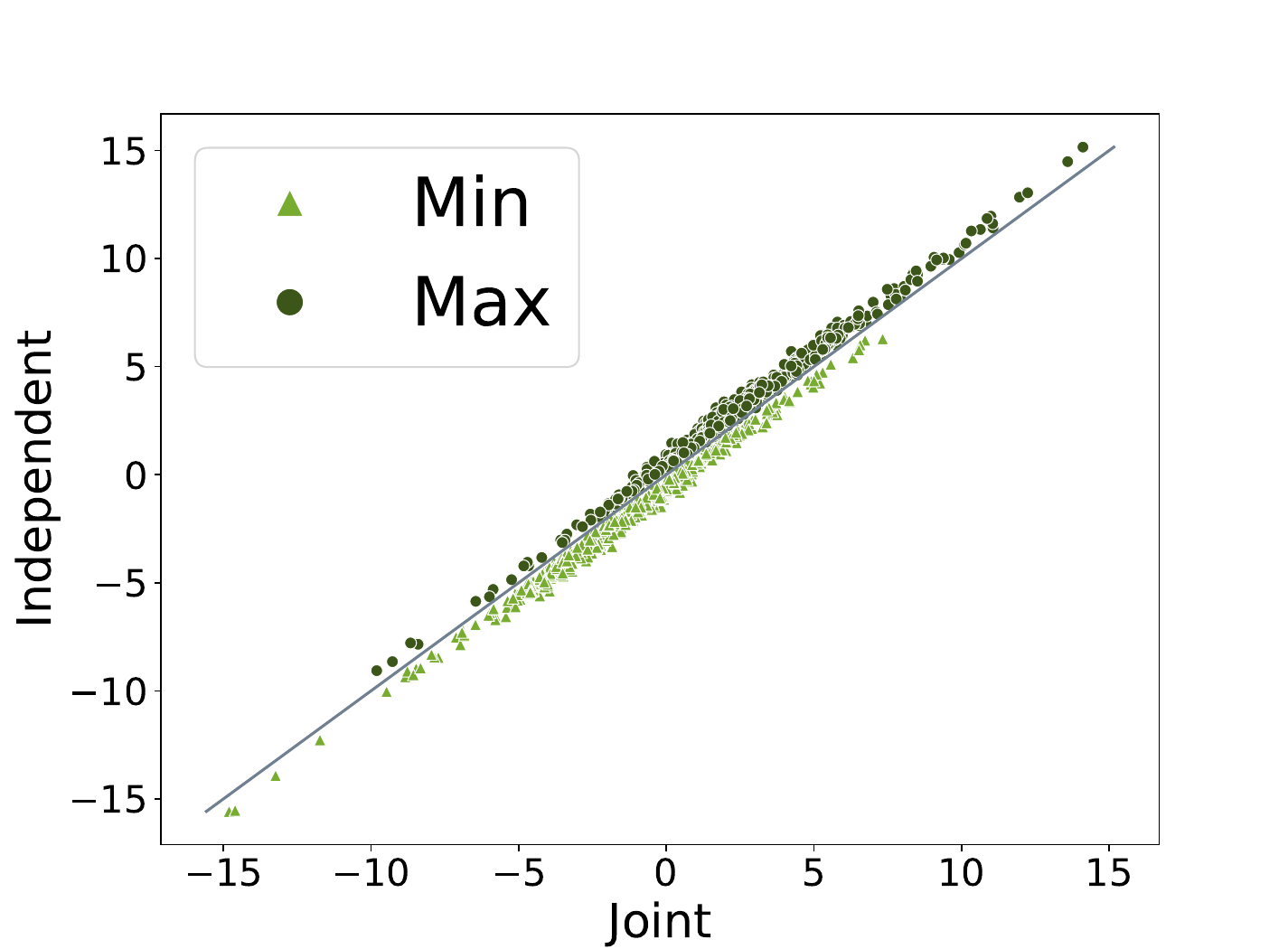}
    \caption{MNIST, Distilled \citep{hinton2015distilling}}
    \label{dist5_CNN_mnist}
  \end{subfigure}

\begin{subfigure}{0.24\textwidth}
\includegraphics[trim={0cm 0cm 1cm 0.5cm}, clip,width=\linewidth]{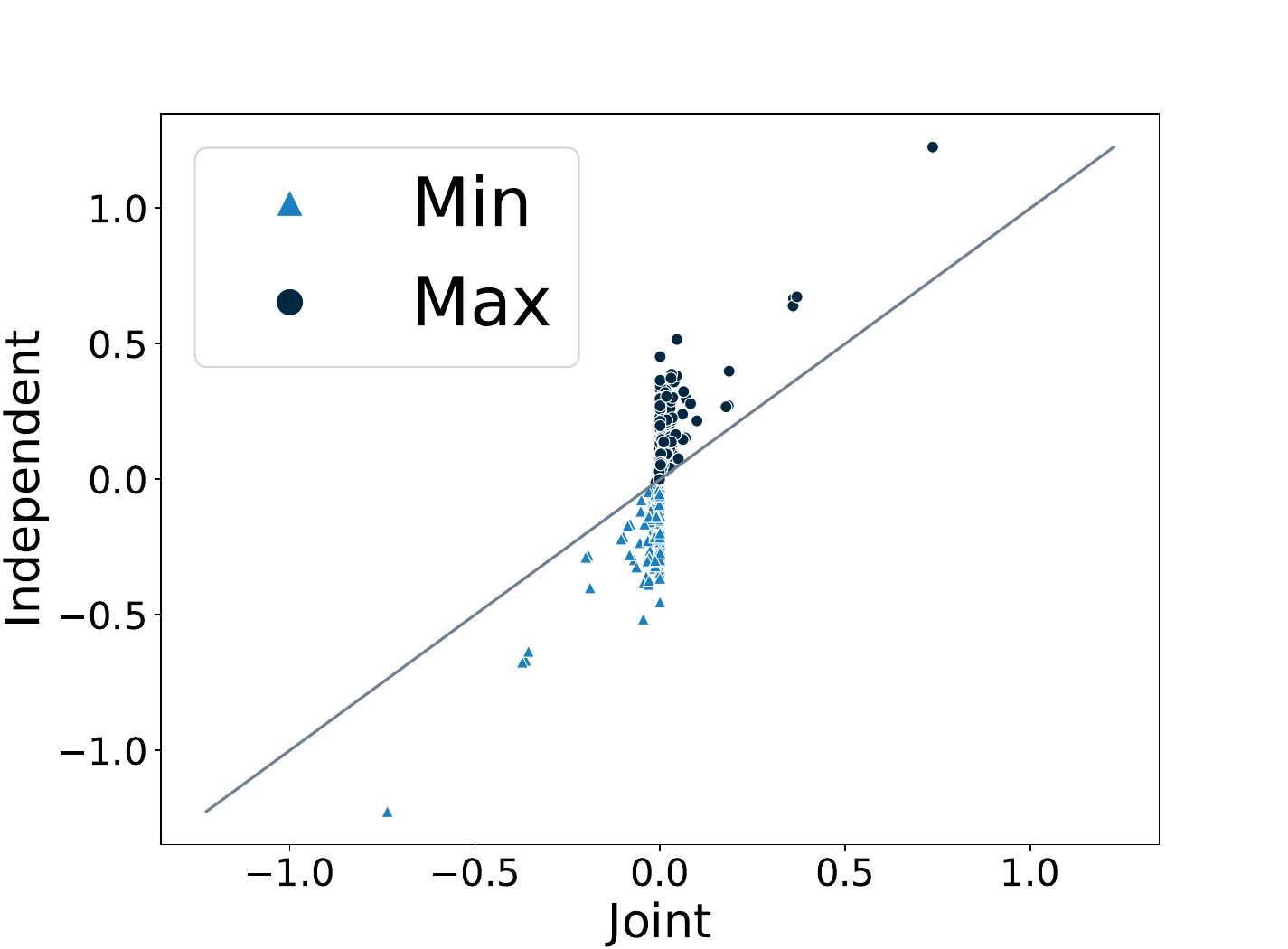}
    \caption{CIFAR10, Pruned \citep{ugare2022proof}}
    \label{prune50_CNN_cifar}
  \end{subfigure}
\begin{subfigure}{0.24\textwidth}
\includegraphics[trim={0cm 0cm 1cm 0.5cm}, clip,width=\linewidth]{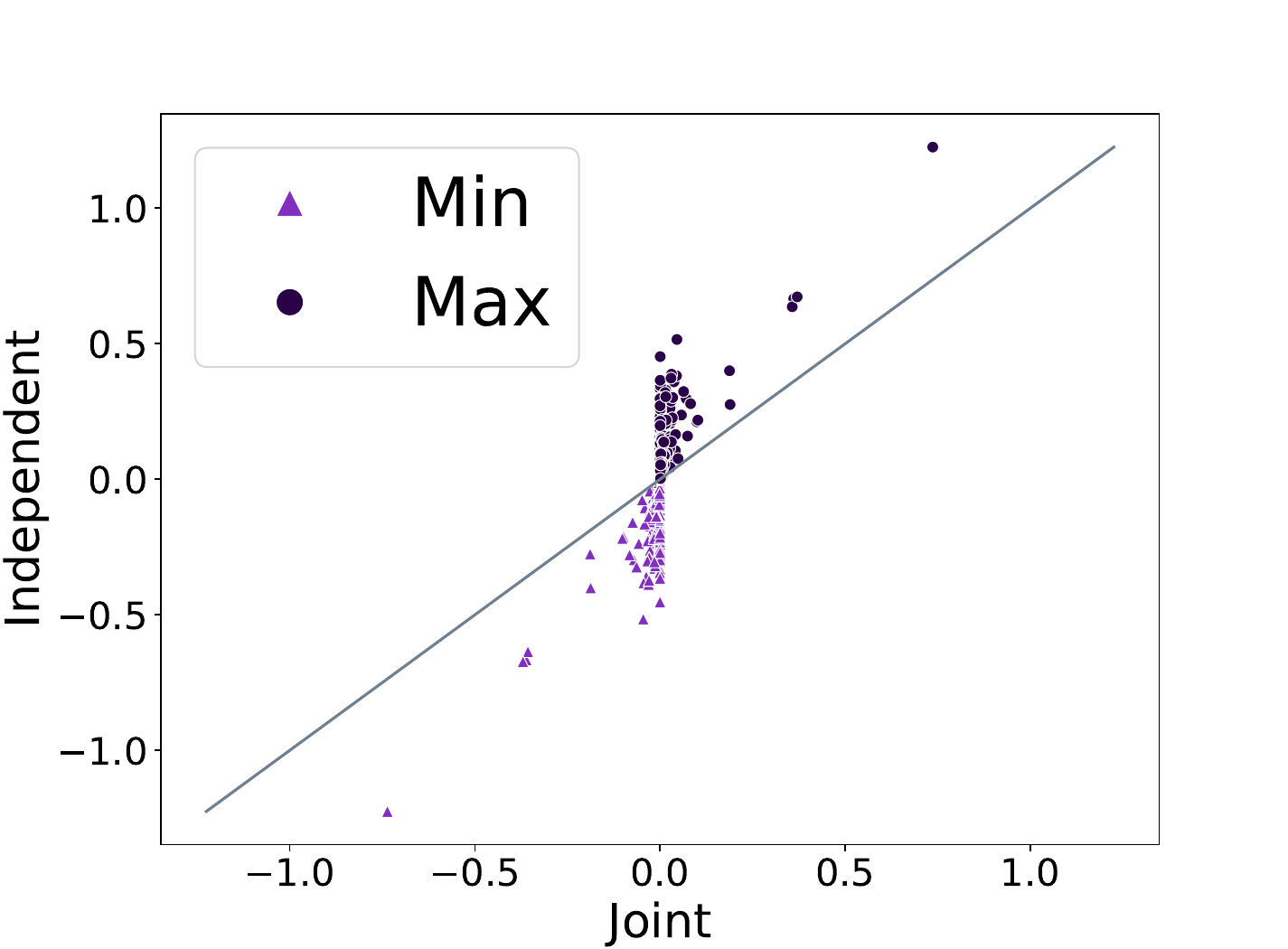}
    \caption{CIFAR10, \gls{VNN} \citep{baninajjarvnn}}
    \label{VNN_CNN_cifar}
  \end{subfigure}
  \begin{subfigure}{0.24\textwidth}
\includegraphics[trim={0cm 0cm 1cm 0.5cm}, clip,width=\linewidth]{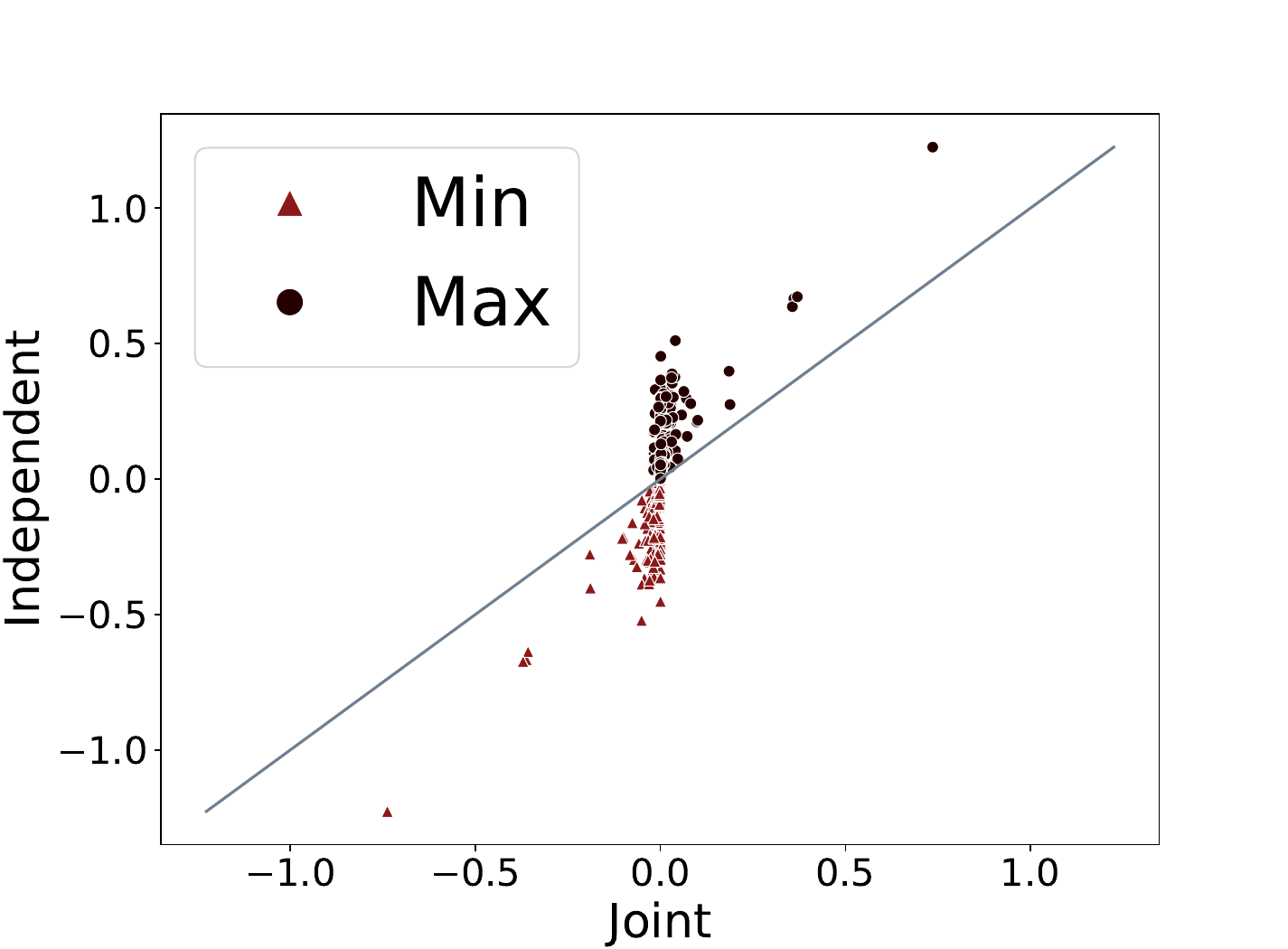}
    \caption{CIFAR10, Quantized \citep{ugare2022proof}}
    \label{quant16_CNN_cifar}
  \end{subfigure}
\begin{subfigure}{0.24\textwidth}
\includegraphics[trim={0cm 0cm 1cm 0.5cm}, clip,width=\linewidth]{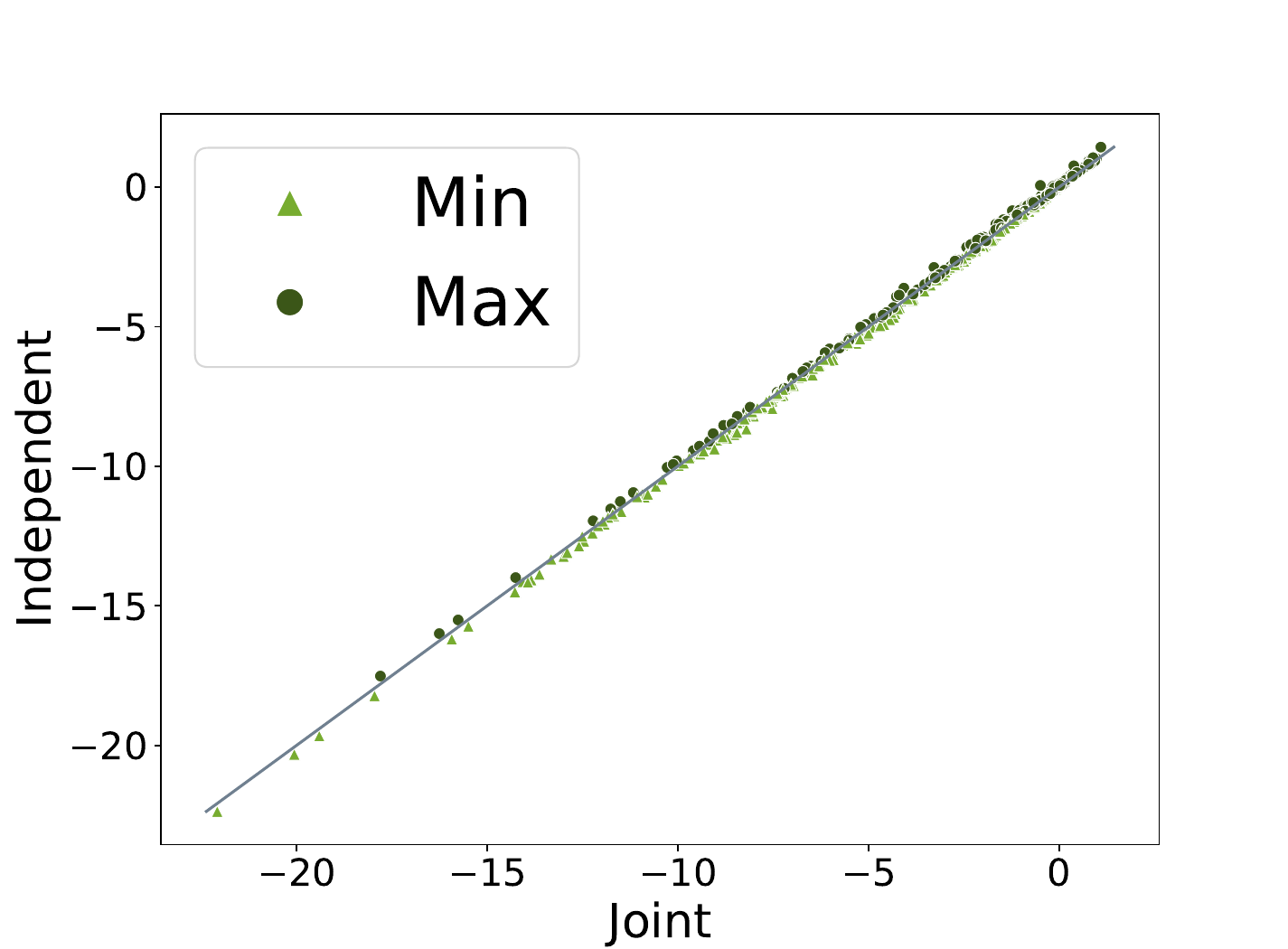}
    \caption{CIFAR10, Distilled \citep{hinton2015distilling}}
    \label{dist5_CNN_cifar}
  \end{subfigure}
  
 \caption{The minimum and maximum \glspl{LRPR} obtained with joint analysis are compared to those from independent analysis, where Compact networks imply Original networks (Compact $\implies$ Original), with $\delta = 0.01$, on convolutional \glspl{DNN}.}
  \label{conv_scatter}
\end{figure}

\paragraph{CHB-MIT and MIT-BIH Datasets.} We assess the established implication of quantized and distilled networks derived from convolutional \glspl{DNN} trained on the CHB-MIT and MIT-BIH datasets, as shown in Figures~\ref{CHB__BIH_quantized_appendix} and~\ref{CHB_BIH_distilled_appendix}. These networks have different precision and temperature settings compared to those discussed in Section~\ref{sec:eval}.


\begin{figure*}[t]
 \centering
  \begin{subfigure}{0.238\textwidth}
\includegraphics[trim={0cm 0cm 1cm 0.5cm}, clip,width=\linewidth]{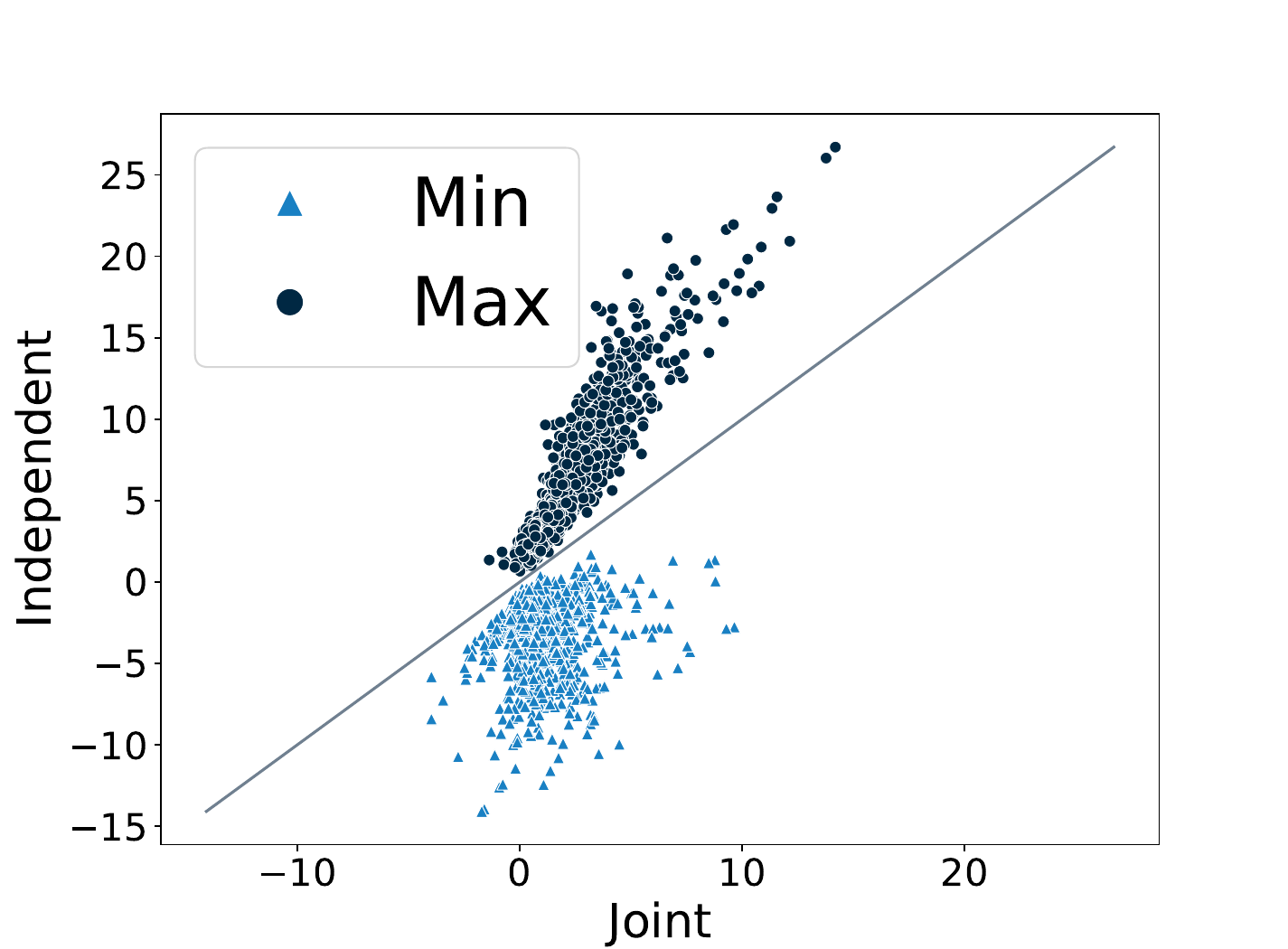}
    \caption{MNIST, Pruned \citep{ugare2022proof}}
    \label{prune50_FNN_mnist}
  \end{subfigure}
\begin{subfigure}{0.238\textwidth}
\includegraphics[trim={0cm 0cm 1cm 0.5cm}, clip,width=\linewidth]{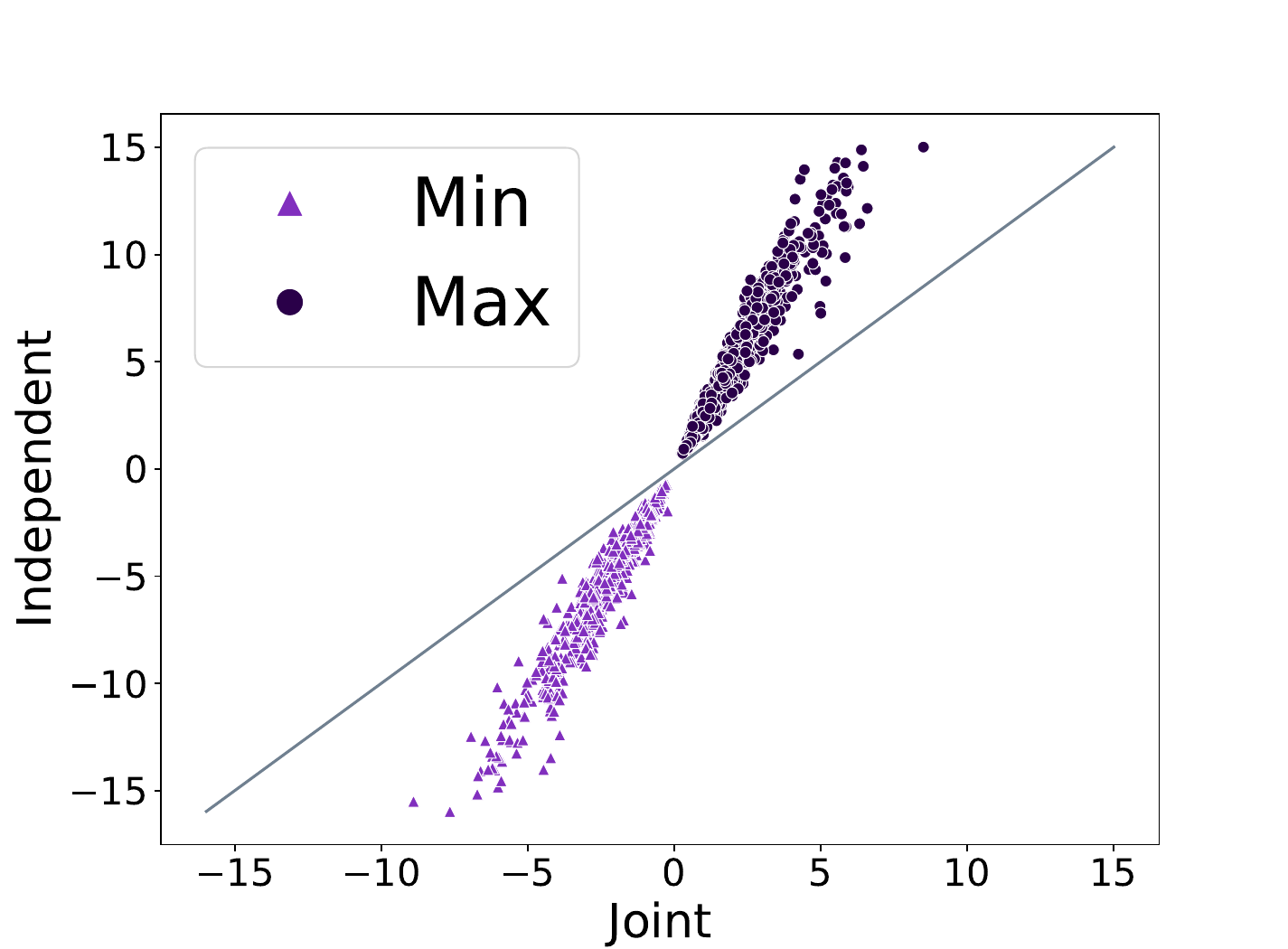}
    \caption{MNIST, \gls{VNN} \citep{baninajjarvnn}}
    \label{VNN_FNN_mnist}
  \end{subfigure}
  \begin{subfigure}{0.238\textwidth}
\includegraphics[trim={0cm 0cm 1cm 0.5cm}, clip,width=\linewidth]{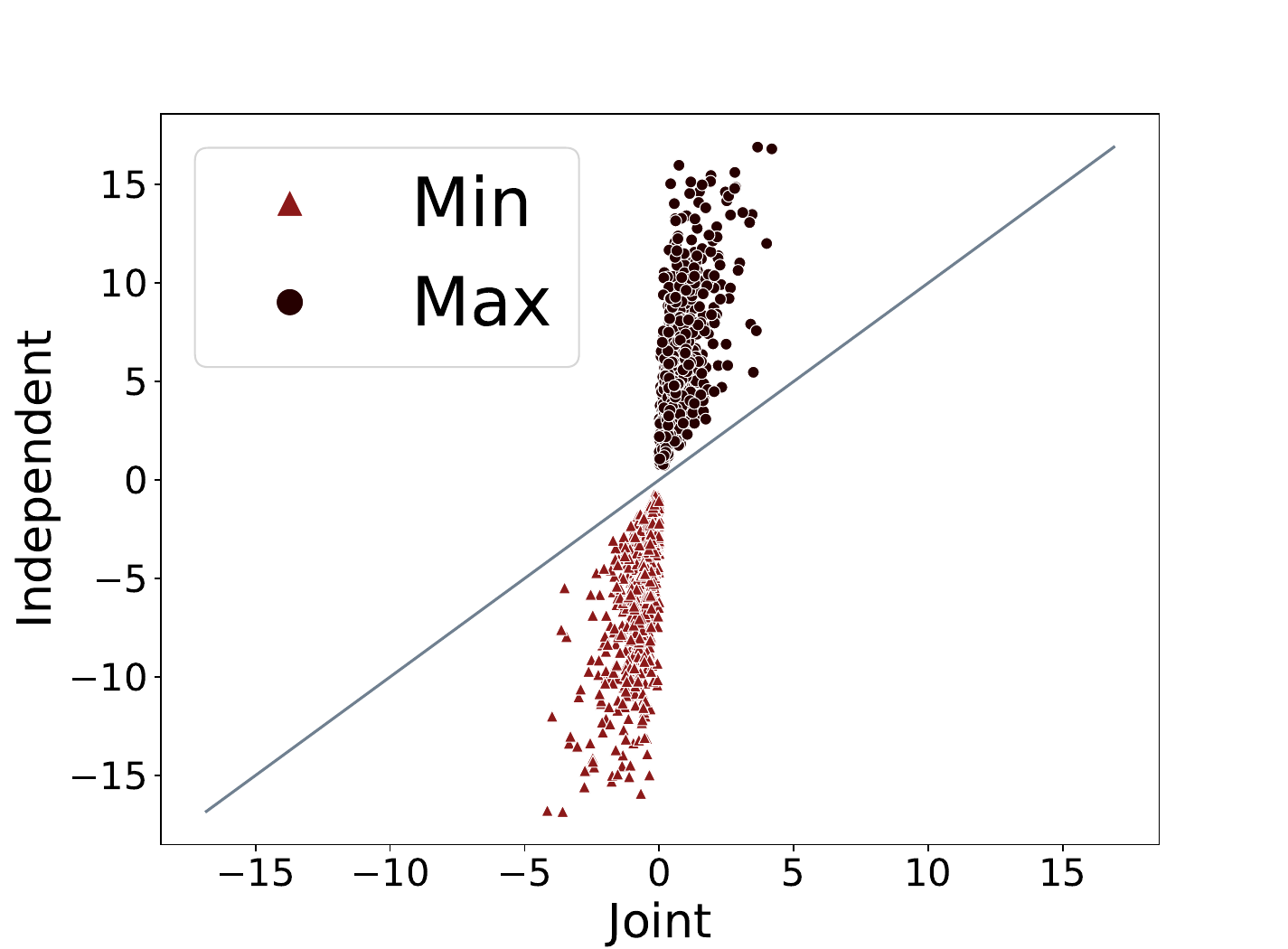}
    \caption{MNIST, Quantized \citep{ugare2022proof}}
    \label{quant16_FNN_mnist}
  \end{subfigure}
\begin{subfigure}{0.238\textwidth}
\includegraphics[trim={0cm 0cm 1cm 0.5cm}, clip,width=\linewidth]{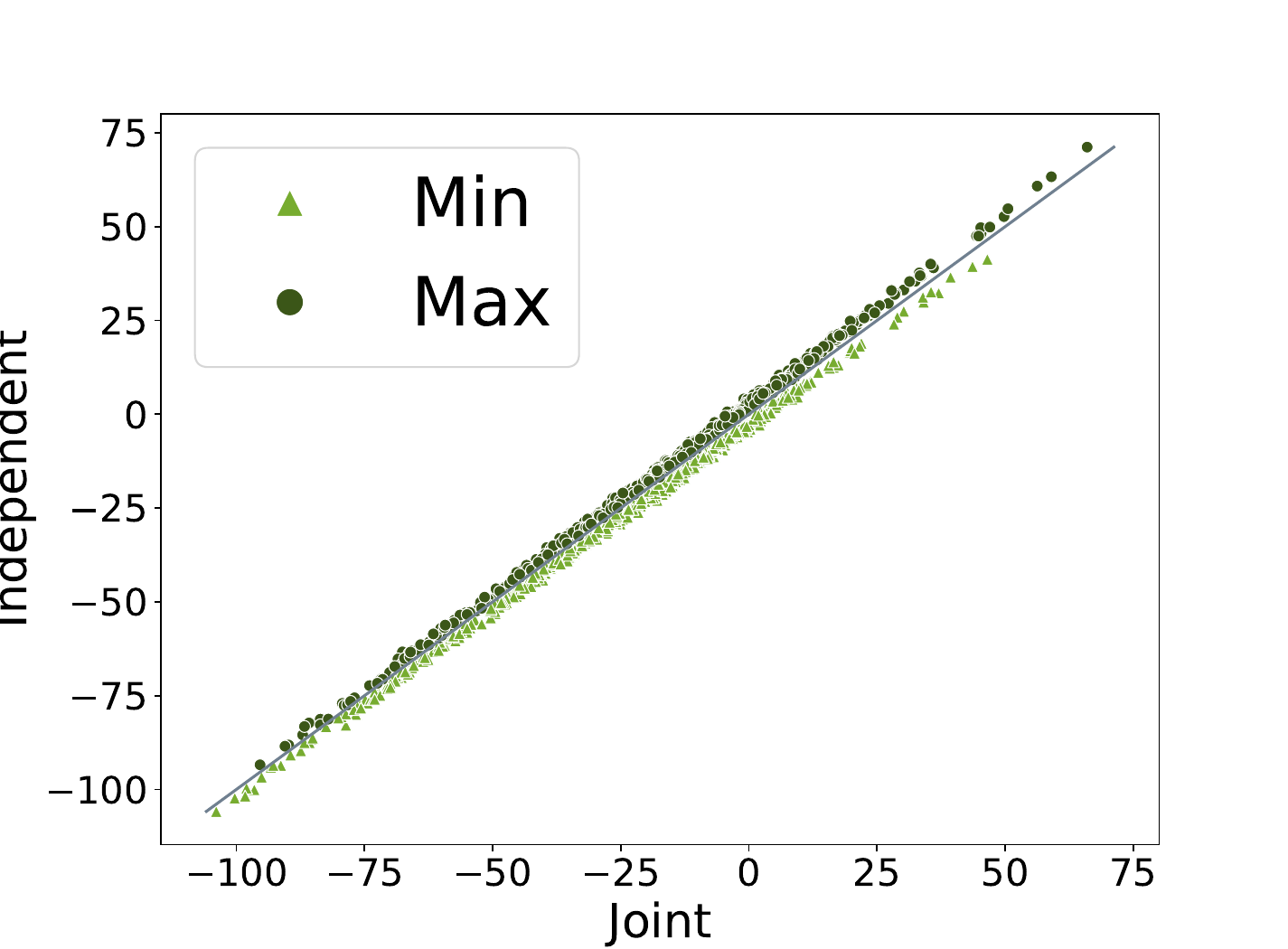}
    \caption{MNIST, Distilled \citep{hinton2015distilling}}
    \label{dist5_FNN_mnist}
  \end{subfigure}

  \begin{subfigure}{0.238\textwidth}
\includegraphics[trim={0cm 0cm 1cm 0.5cm}, clip,width=\linewidth]{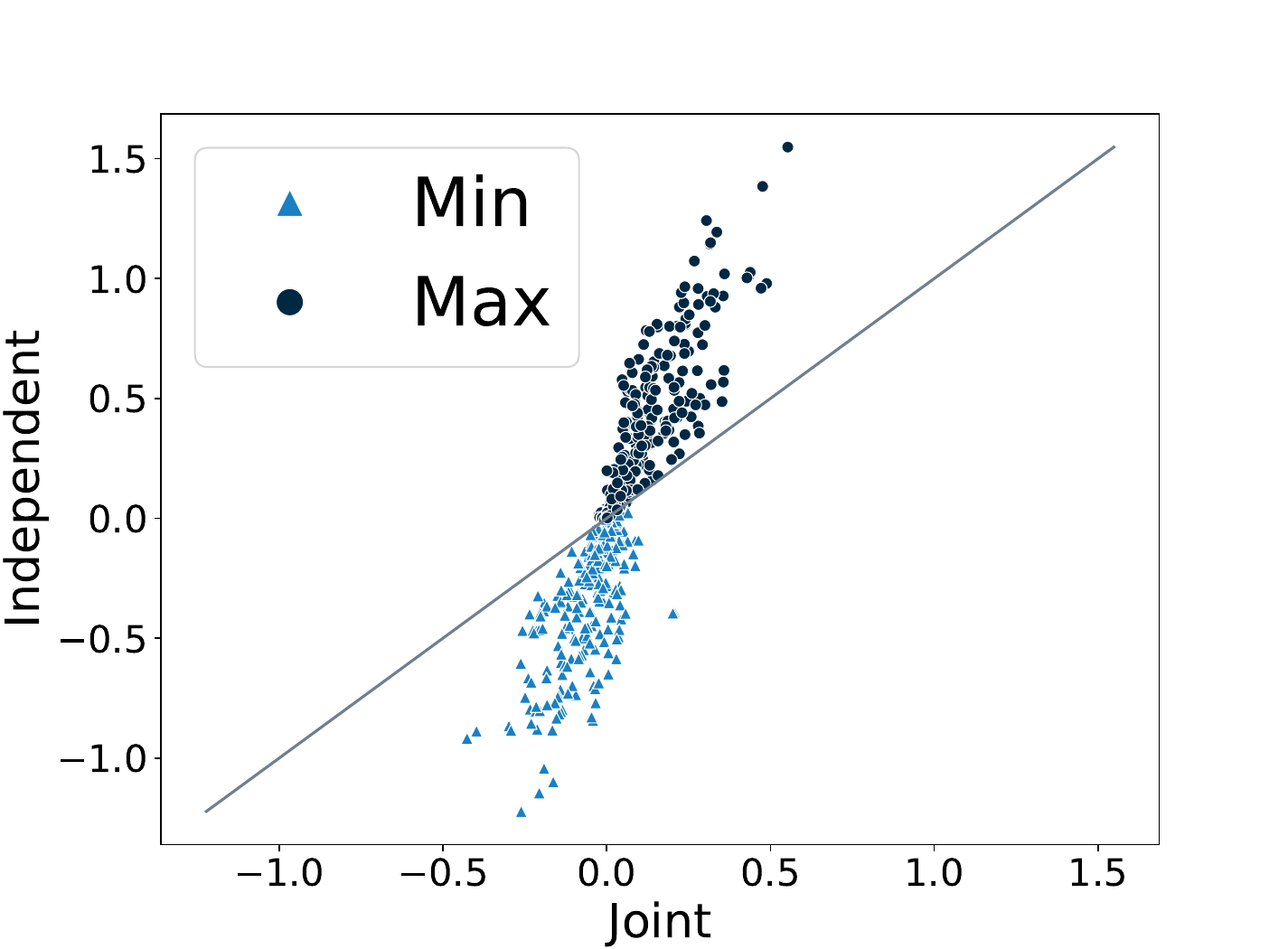}
    \caption{CIFAR10, Pruned \citep{ugare2022proof}}
    \label{prune50_FNN_cifar}
  \end{subfigure}
\begin{subfigure}{0.238\textwidth}
\includegraphics[trim={0cm 0cm 1cm 0.5cm}, clip,width=\linewidth]{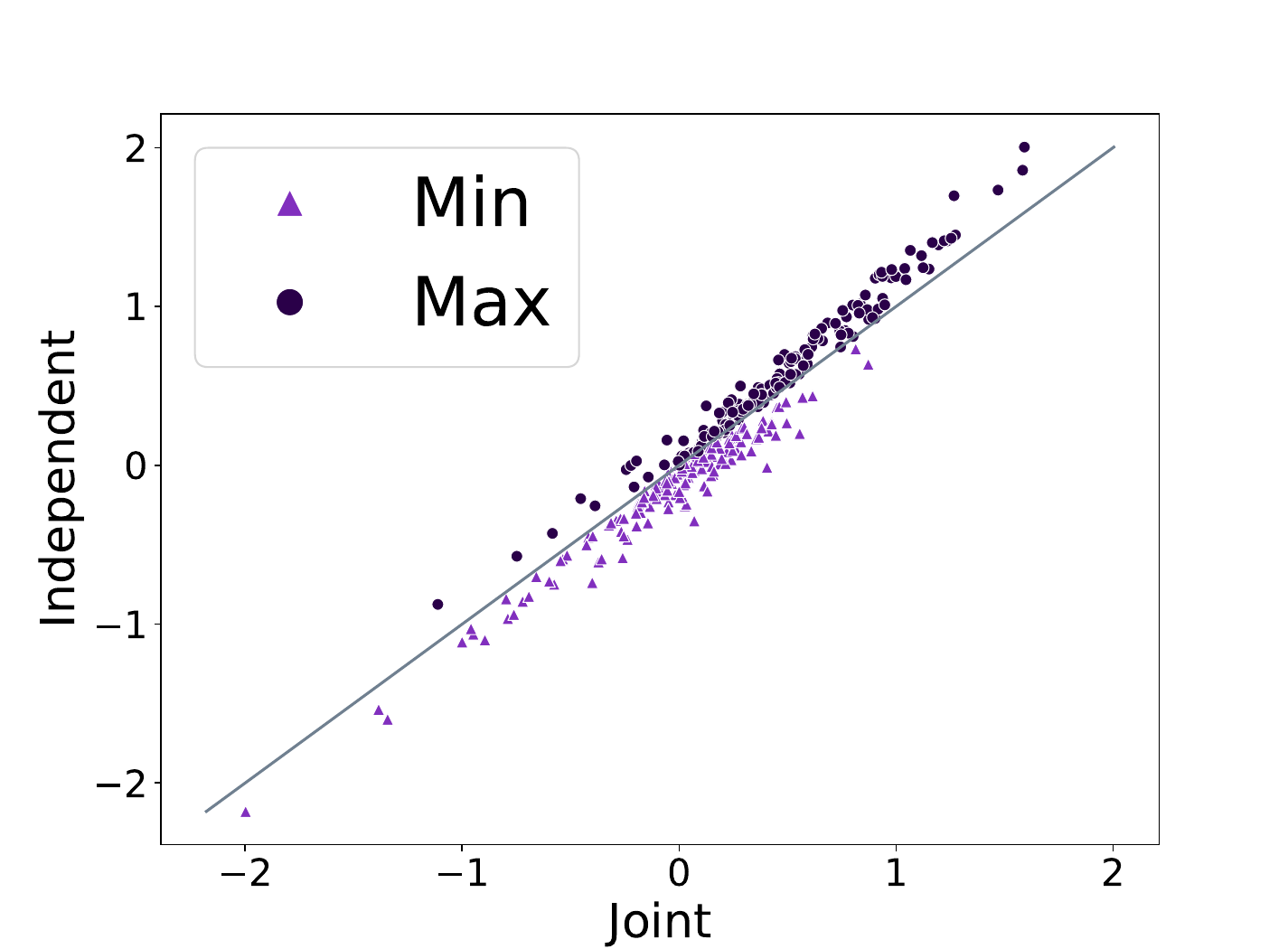}
    \caption{CIFAR10, \gls{VNN} \citep{baninajjarvnn}}
    \label{VNN_FNN_cifar}
  \end{subfigure}
  \begin{subfigure}{0.238\textwidth}
\includegraphics[trim={0cm 0cm 1cm 0.5cm}, clip,width=\linewidth]{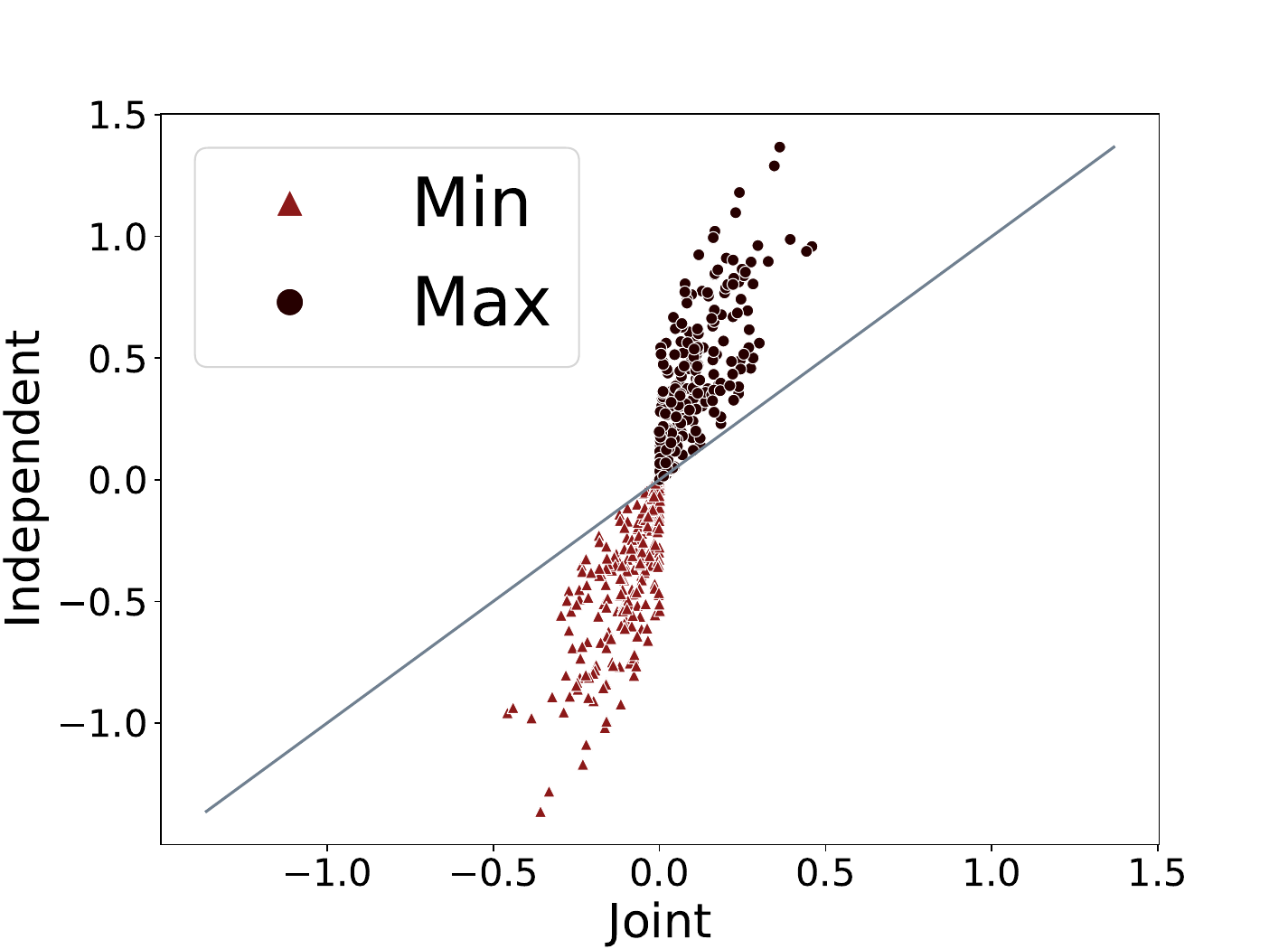}
    \caption{CIFAR10, Quantized \citep{ugare2022proof}}
    \label{quant16_FNN_cifar}
  \end{subfigure}
\begin{subfigure}{0.238\textwidth}
\includegraphics[trim={0cm 0cm 1cm 0.5cm}, clip,width=\linewidth]{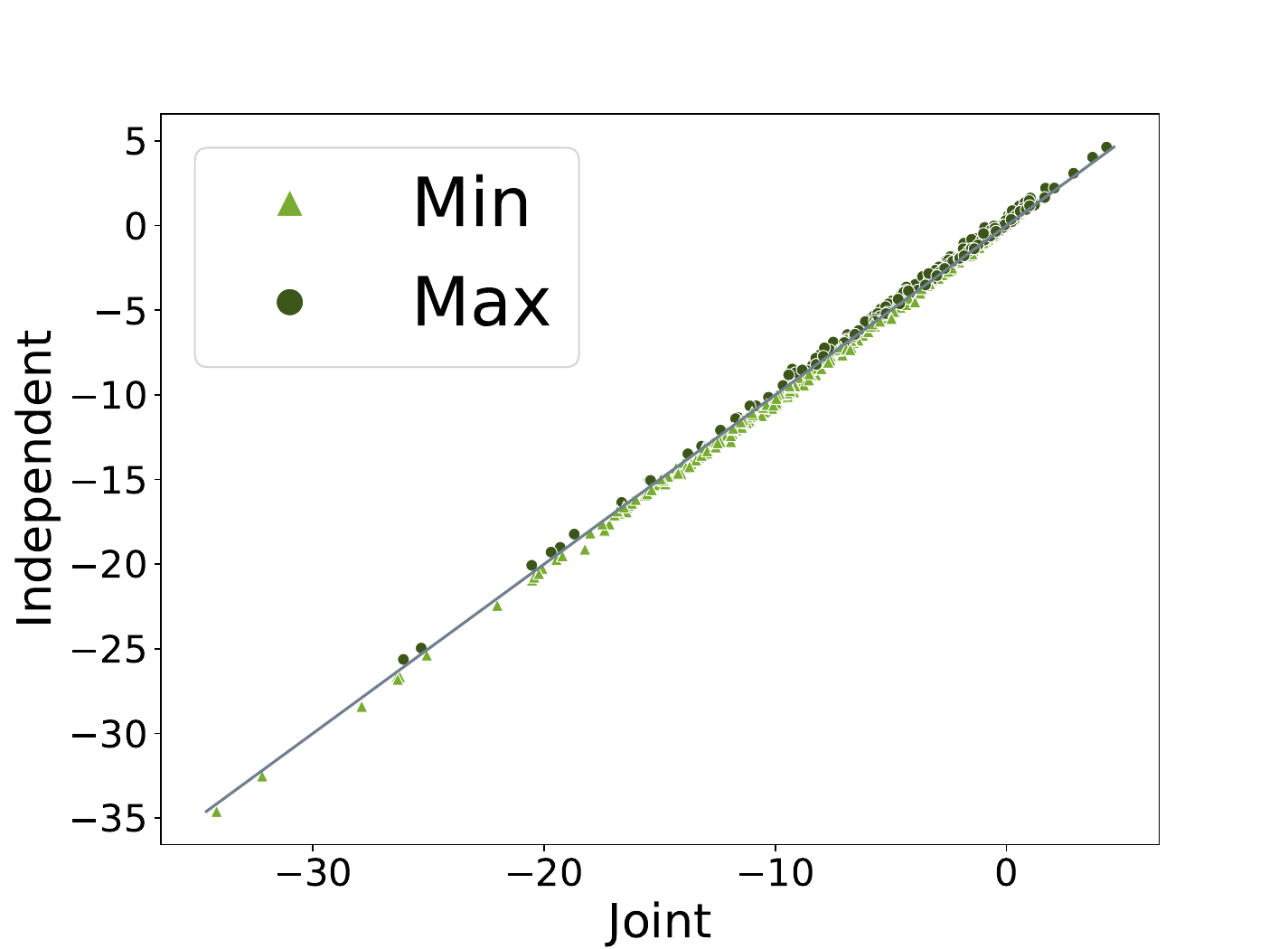}
    \caption{CIFAR10, Distilled \citep{hinton2015distilling}}
    \label{dist5_FNN_cifar}
  \end{subfigure}

 \caption{The minimum and maximum \glspl{LRPR} obtained with joint analysis are compared to those from independent analysis, where Compact networks imply Original networks (Compact $\implies$ Original), with $\delta = 0.01$, on fully-connected \glspl{DNN}.}
  \label{photo_scatter}
\end{figure*}

\begin{figure}[ht]
  \centering

  \begin{subfigure}{0.32\textwidth}
    \includegraphics[width=\linewidth]{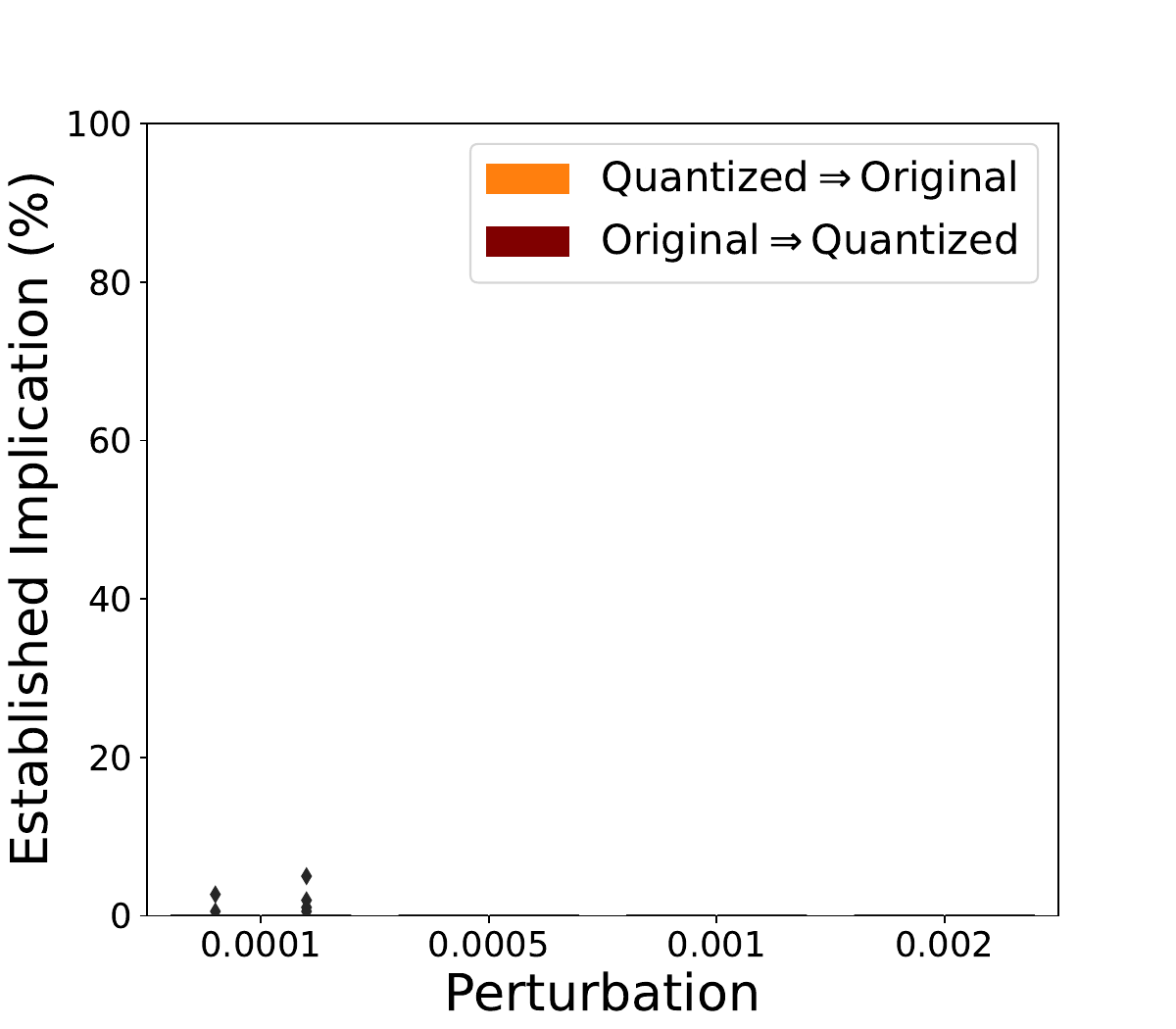}
    \caption{CHB, float}
  \end{subfigure}\label{CHB float}
  \begin{subfigure}{0.32\textwidth}
    \includegraphics[width=\linewidth]{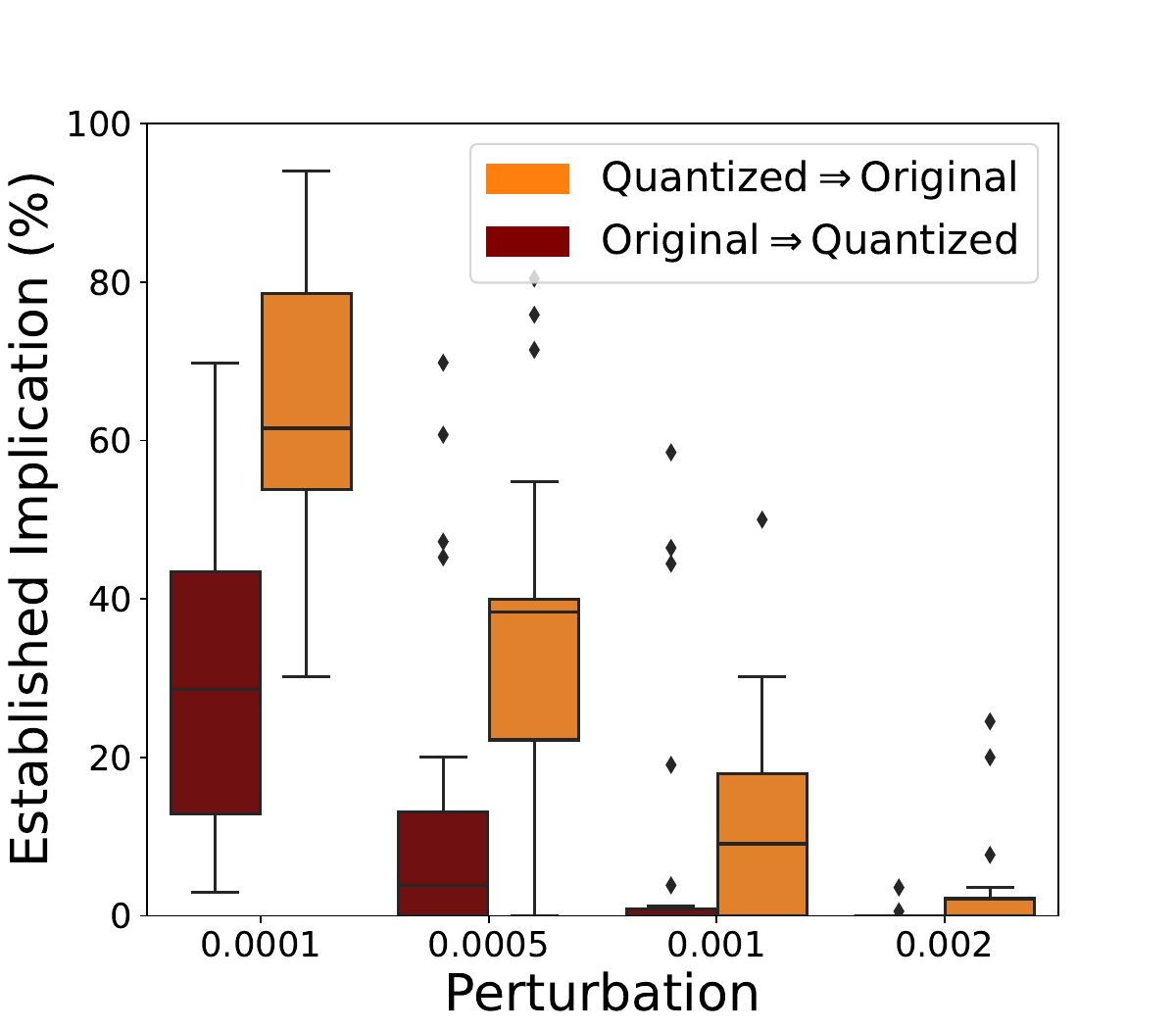}
    \caption{CHB, int8}
  \end{subfigure}\label{CHB int8}
  \begin{subfigure}{0.32\textwidth}
    \includegraphics[width=\linewidth]{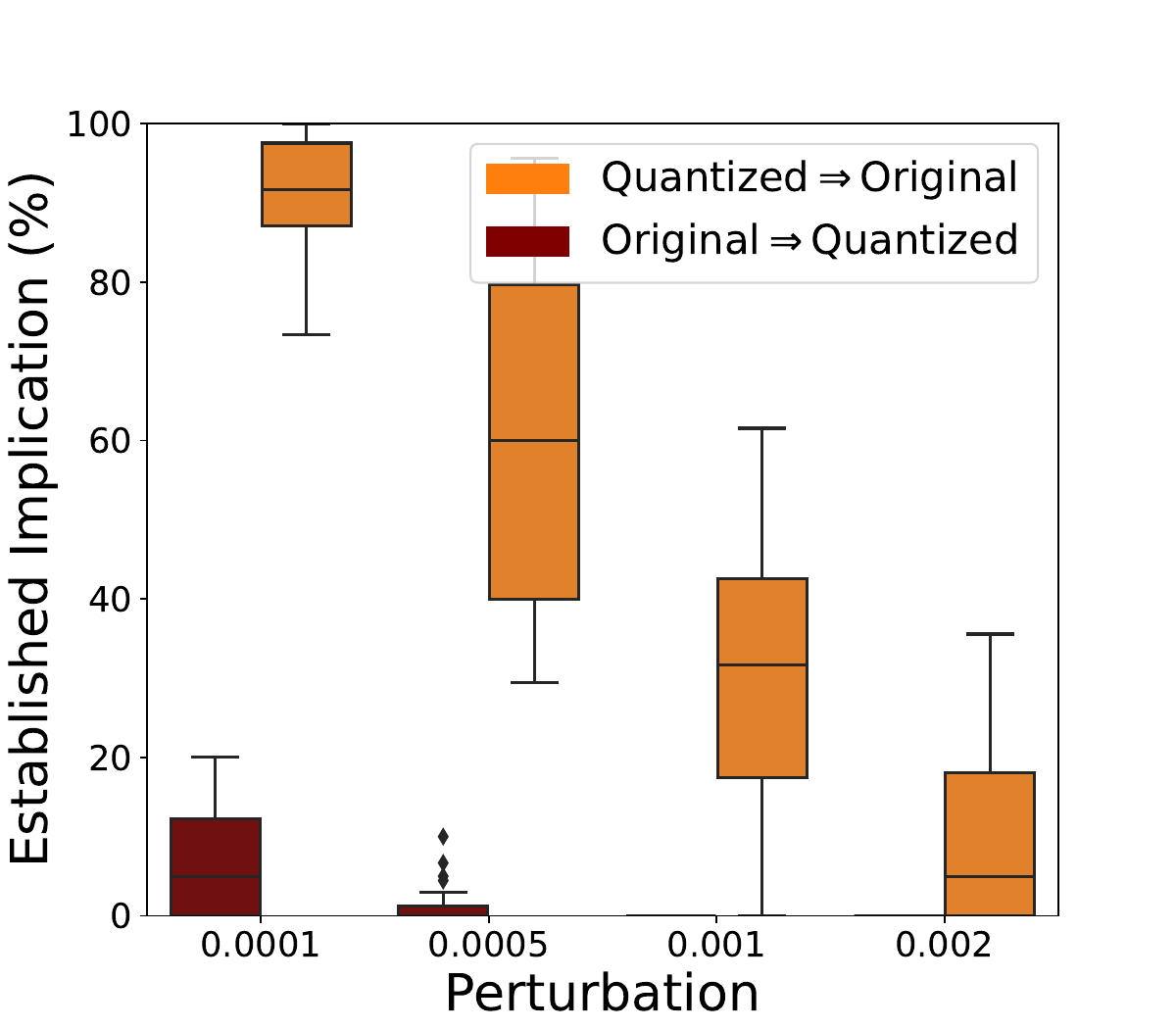}
    \caption{CHB, int4}
  \end{subfigure}\label{CHB int4}

  \begin{subfigure}{0.32\textwidth}
    \includegraphics[width=\linewidth]{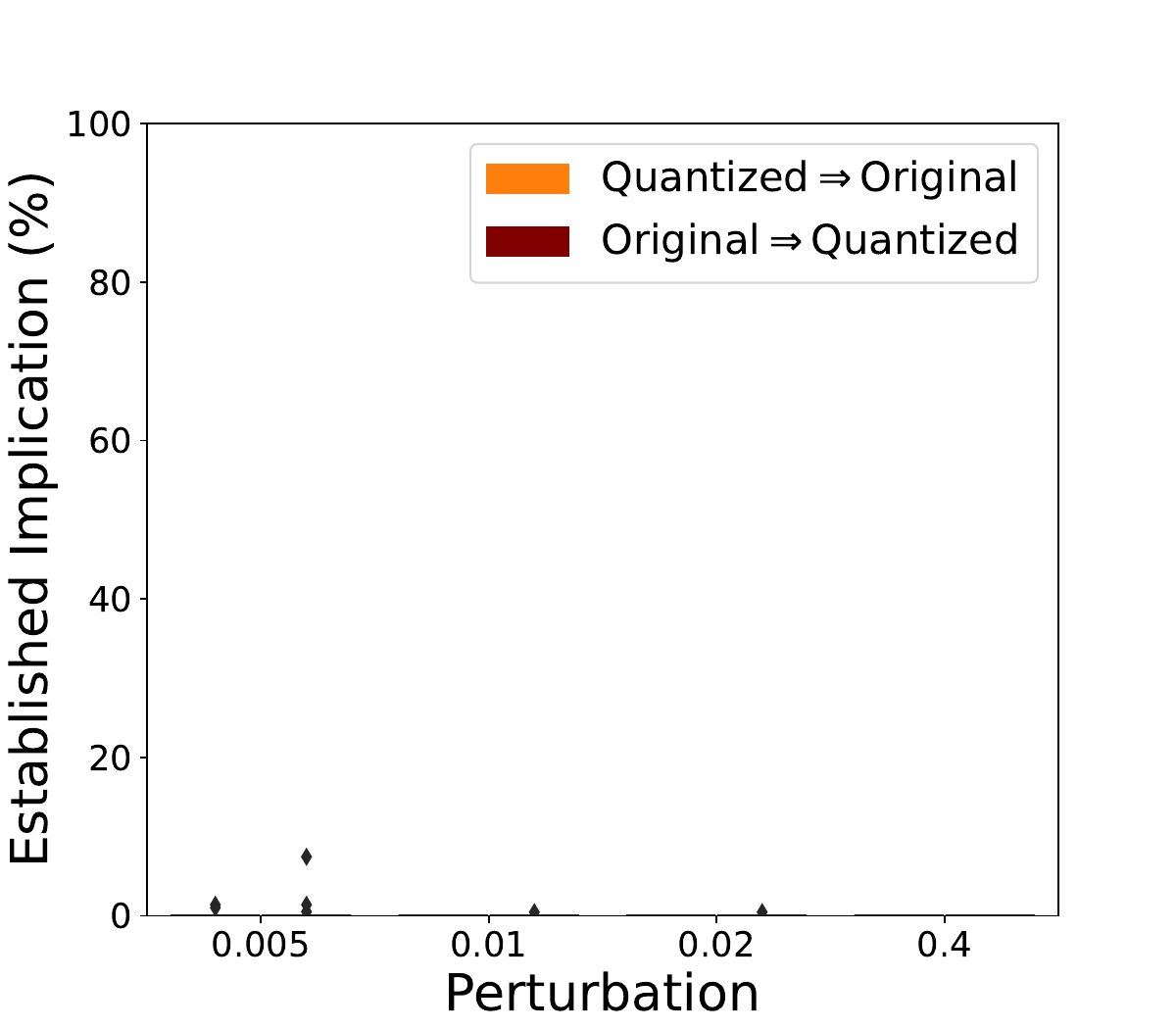}
    \caption{BIH, float}
  \end{subfigure}\label{BIH, float}
  \begin{subfigure}{0.32\textwidth}
    \includegraphics[width=\linewidth]{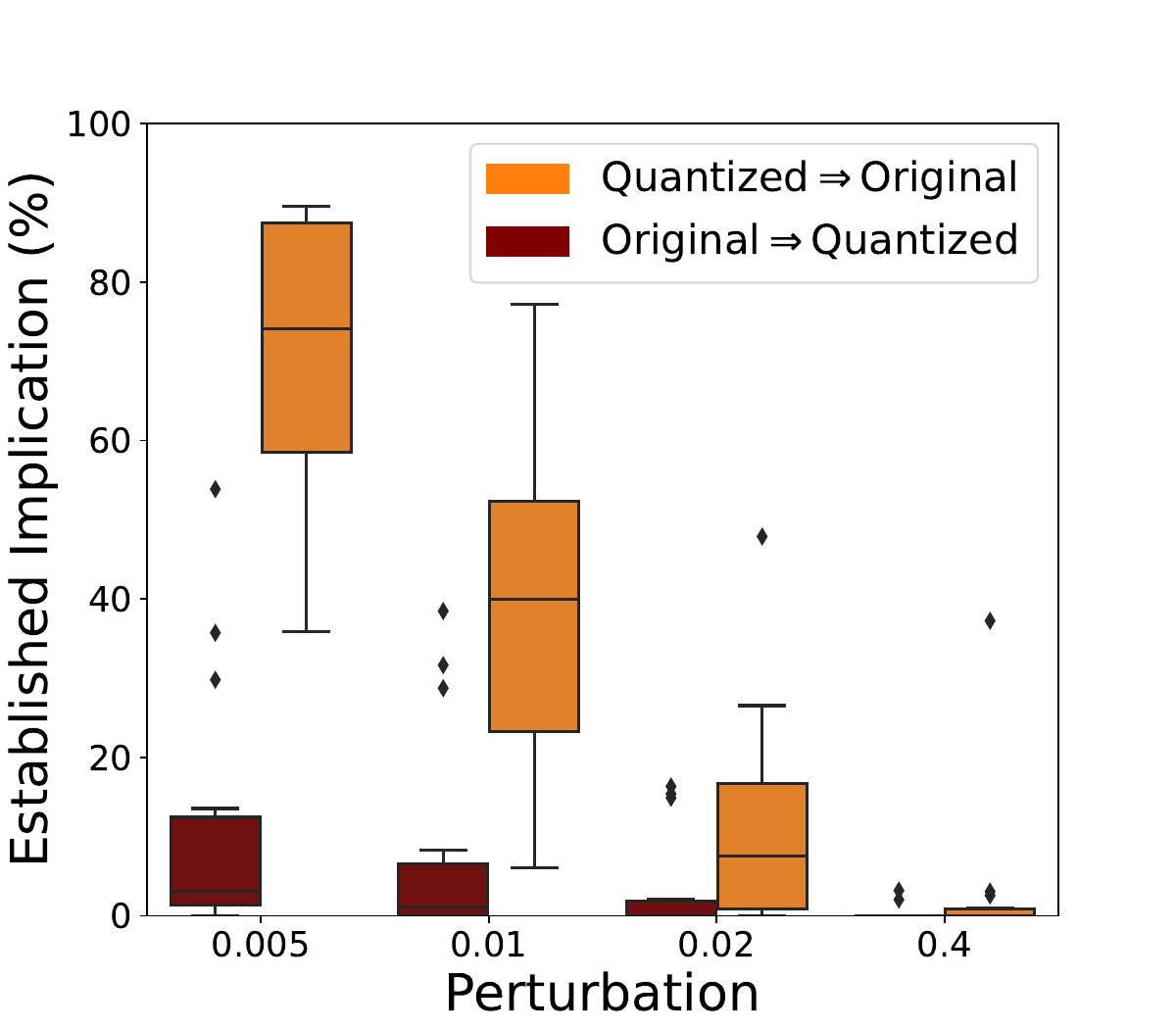}
    \caption{BIH, int8}
  \end{subfigure}\label{BIH int8}
  \begin{subfigure}{0.32\textwidth}
    \includegraphics[width=\linewidth]{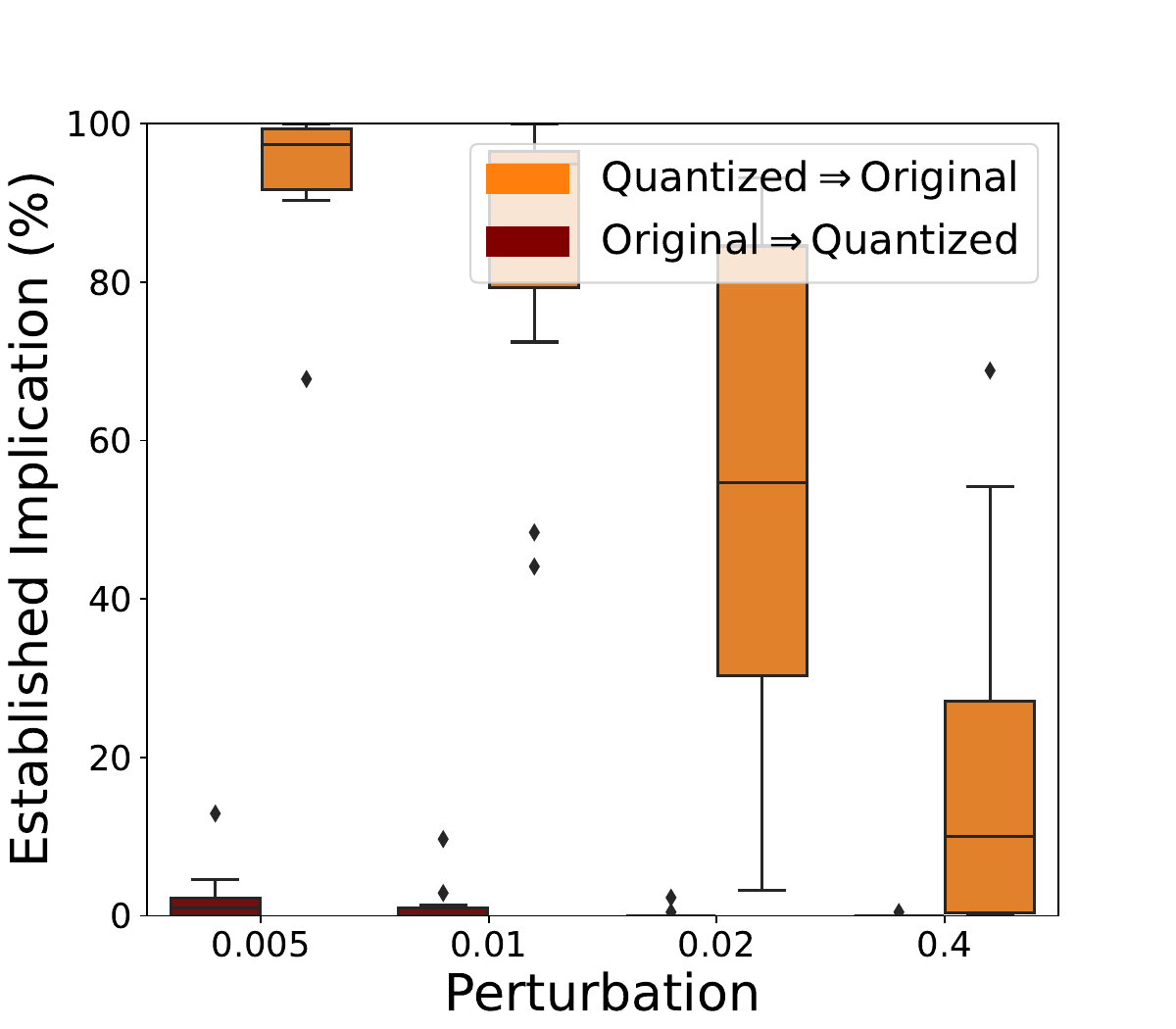}
    \caption{BIH, int4}
  \end{subfigure}\label{BIH int4}

  \caption{Box plots illustrate the established implication of convolutional \glspl{DNN} trained on all patients in the CHB-MIT and MIT-BIH datasets for Original and Quantized networks. For each patient in the dataset, we evaluate the implication between an Original and a Quantized network using the patient’s own data and aggregate the results across all patients to present them in the box plots.}
  \label{CHB__BIH_quantized_appendix}
\end{figure}

\begin{figure}[ht]
  \centering

  \begin{subfigure}{0.24\textwidth}
    \includegraphics[width=\linewidth]{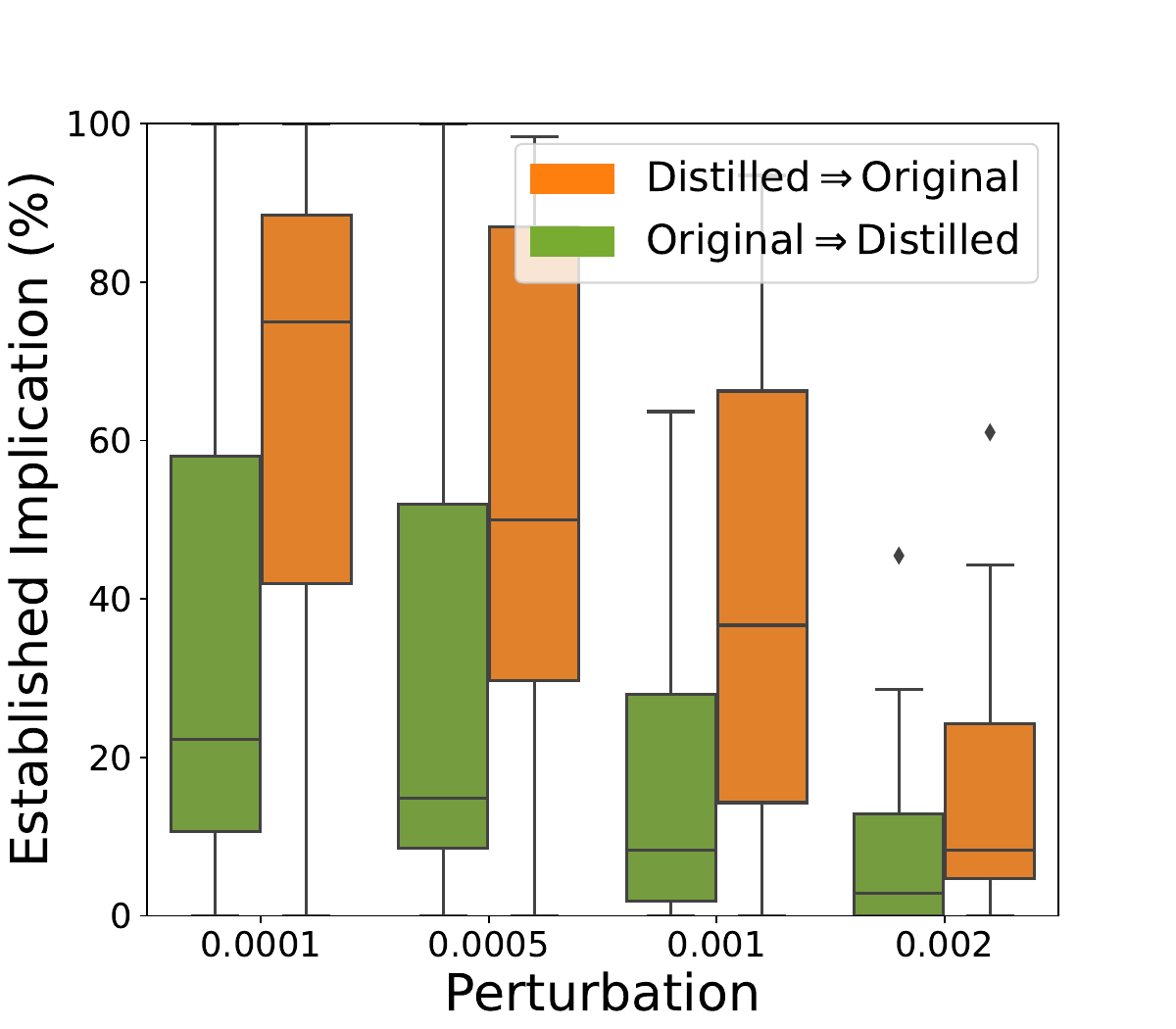}
    \caption{CHB, T = 1}
  \end{subfigure}
  \hfill
  \begin{subfigure}{0.24\textwidth}
    \includegraphics[width=\linewidth]{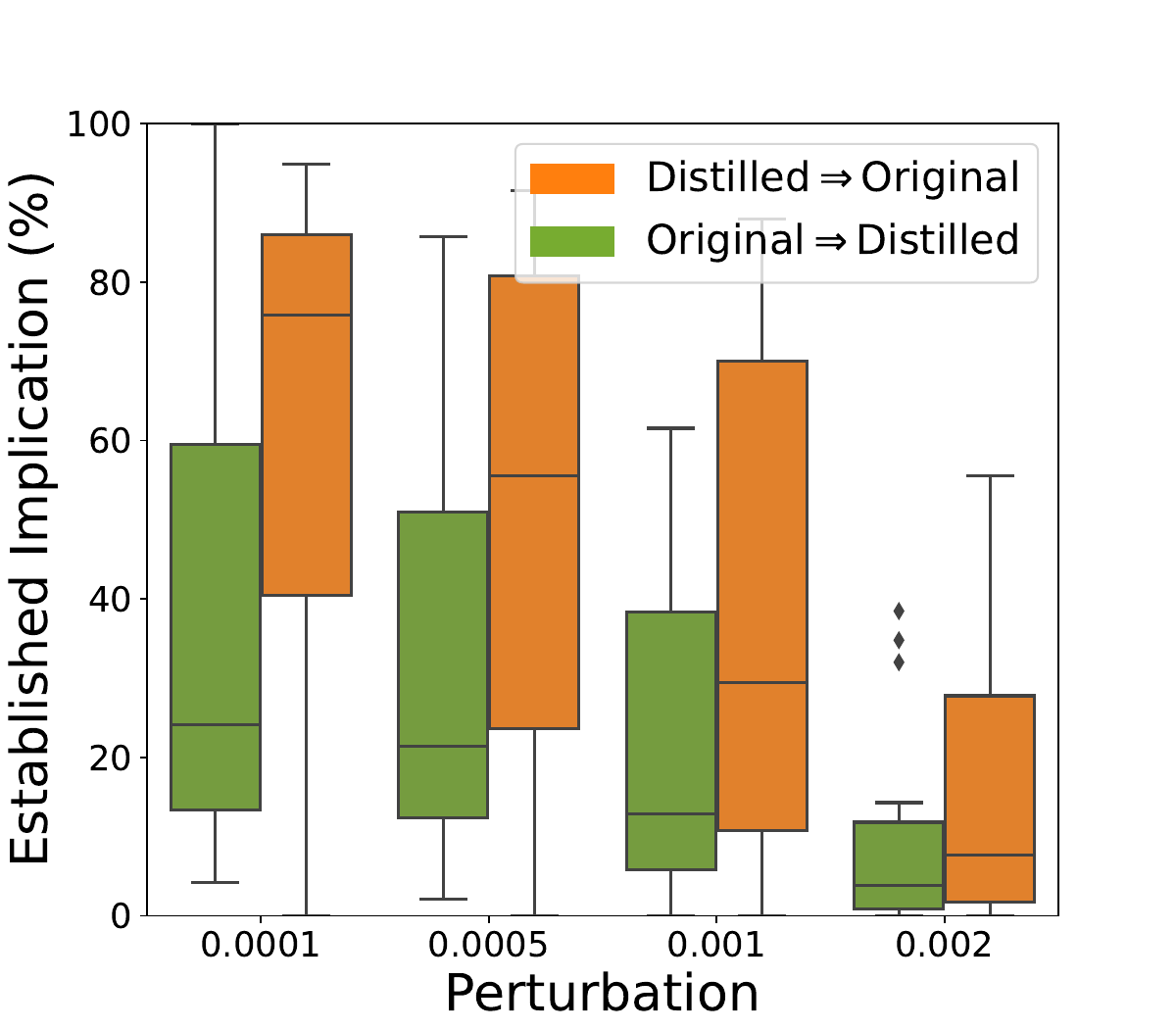}
    \caption{CHB, T = 2}
  \end{subfigure}
  \hfill
  \begin{subfigure}{0.24\textwidth}
    \includegraphics[width=\linewidth]{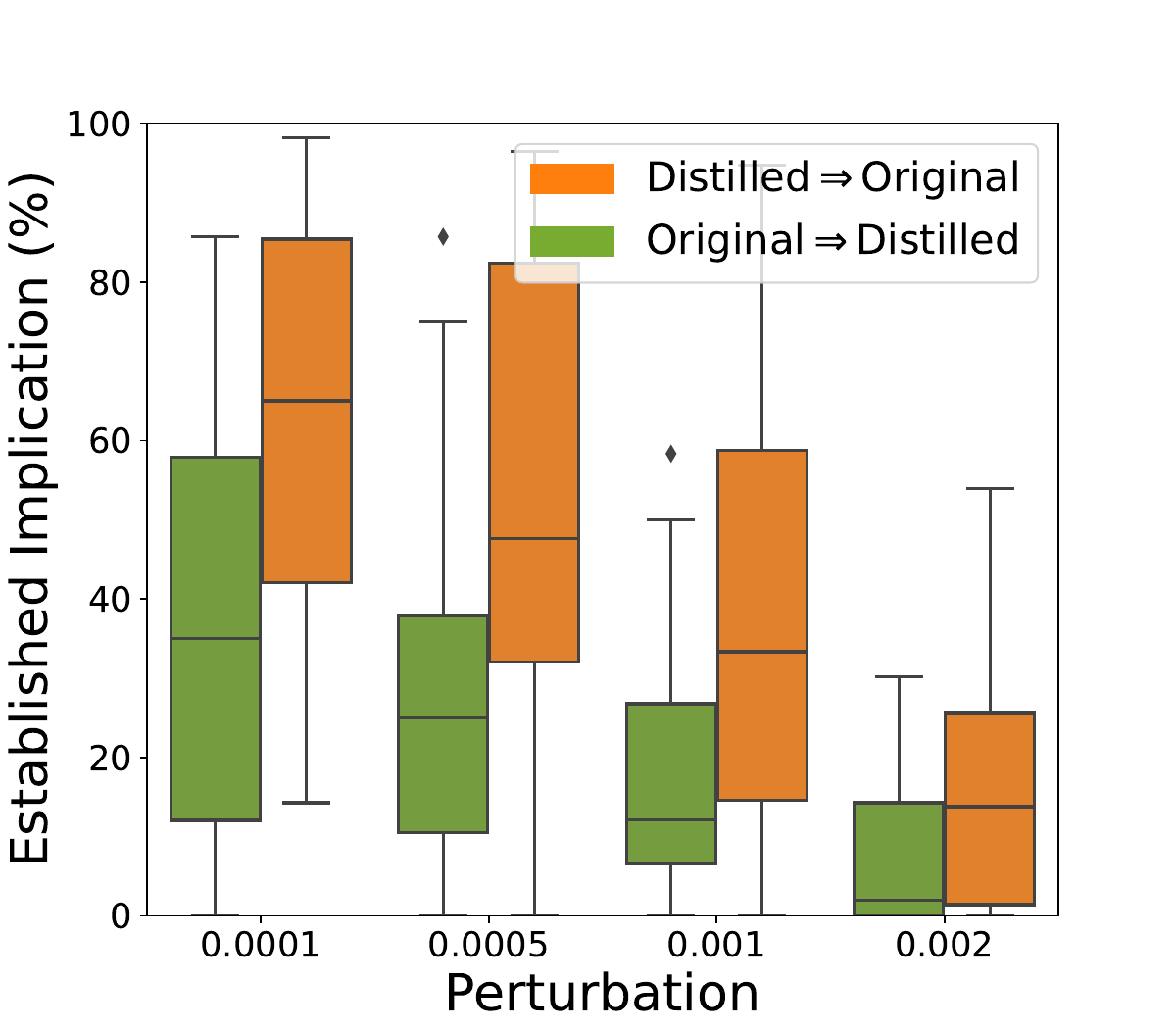}
    \caption{CHB, T = 3}
  \end{subfigure}
  \hfill
  \begin{subfigure}{0.24\textwidth}
    \includegraphics[width=\linewidth]{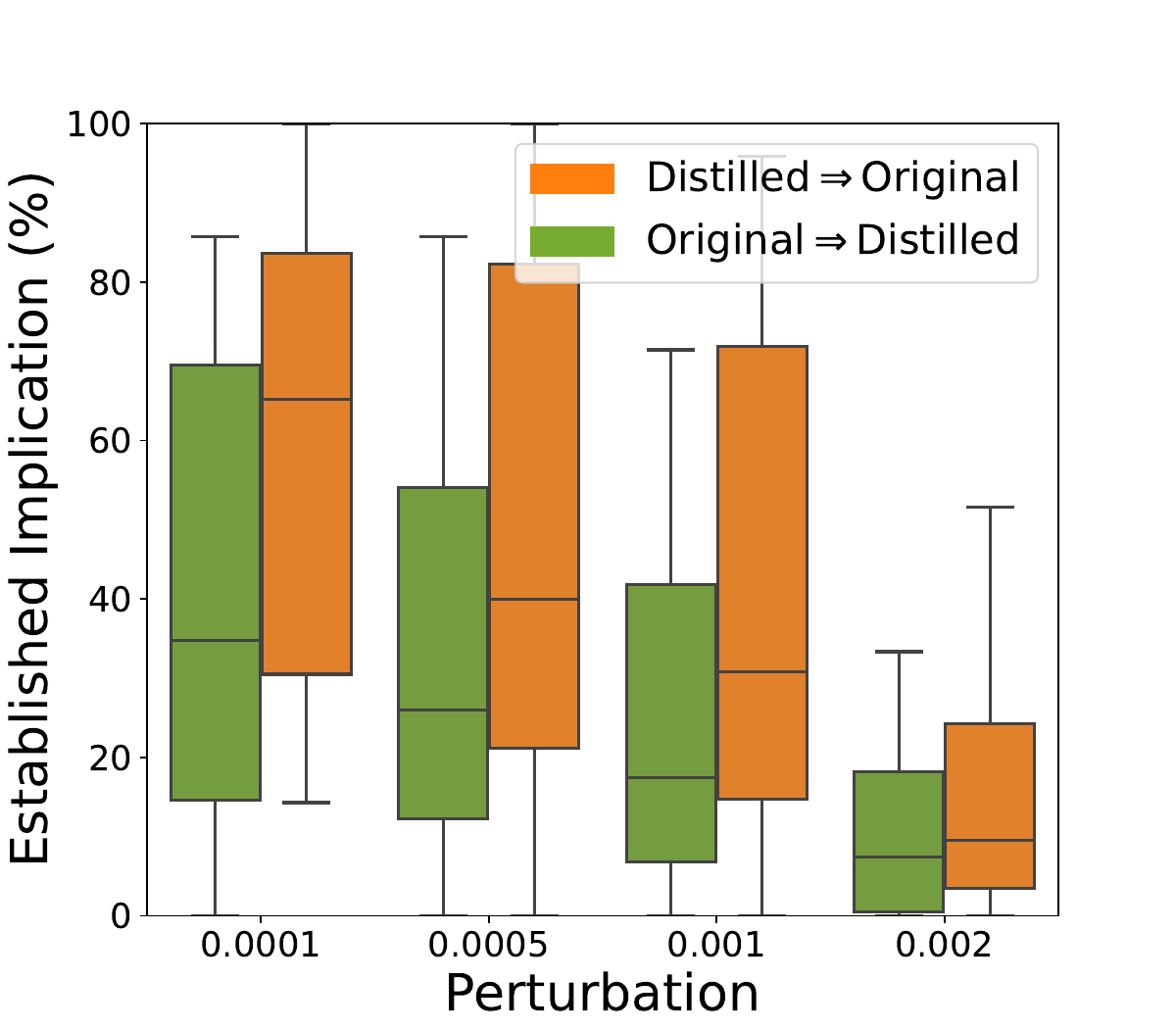}
    \caption{CHB, T = 4}
  \end{subfigure}


  \begin{subfigure}{0.24\textwidth}
    \includegraphics[width=\linewidth]{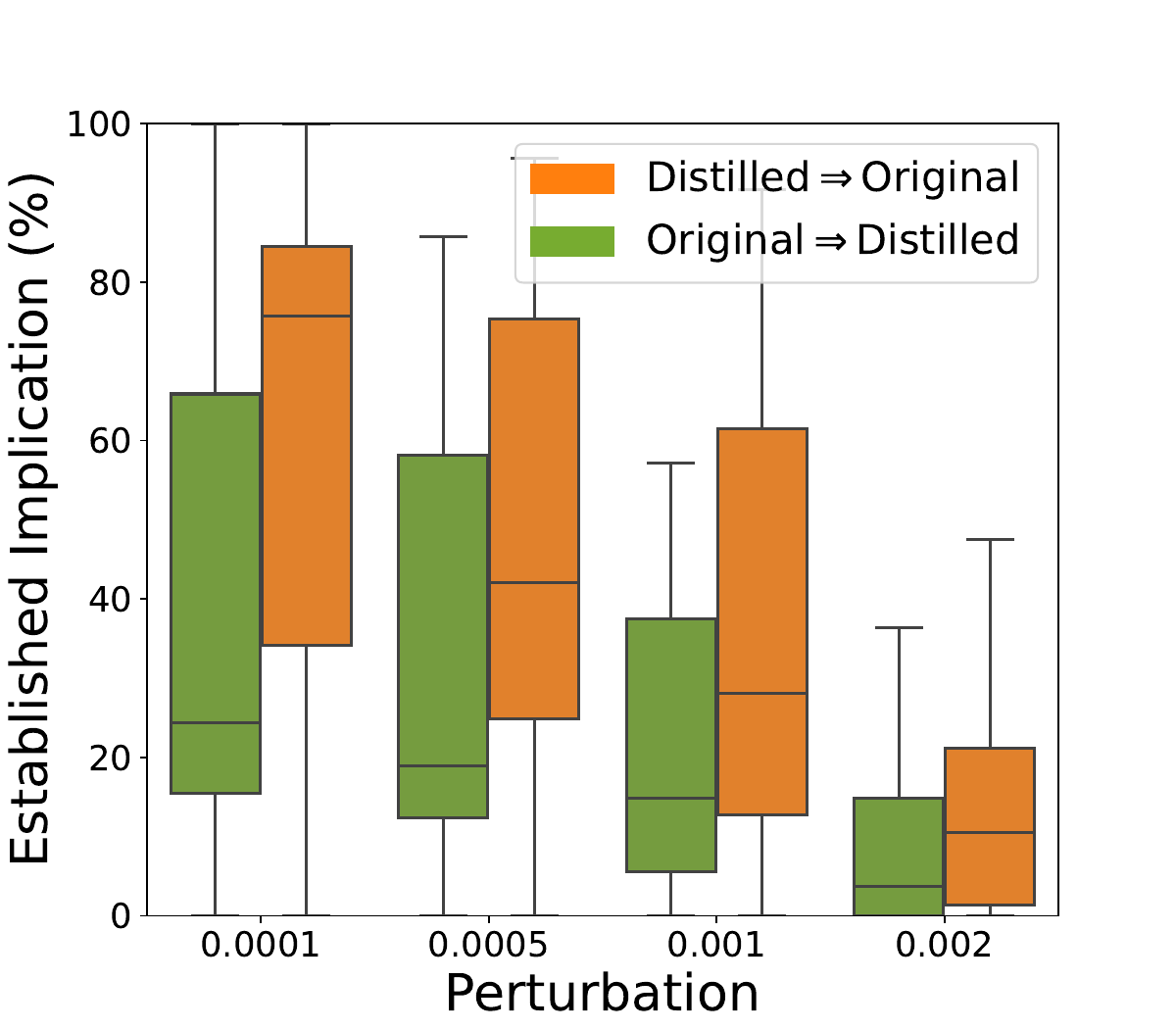}
    \caption{CHB, T = 6}
  \end{subfigure}
  \hfill
  \begin{subfigure}{0.24\textwidth}
    \includegraphics[width=\linewidth]{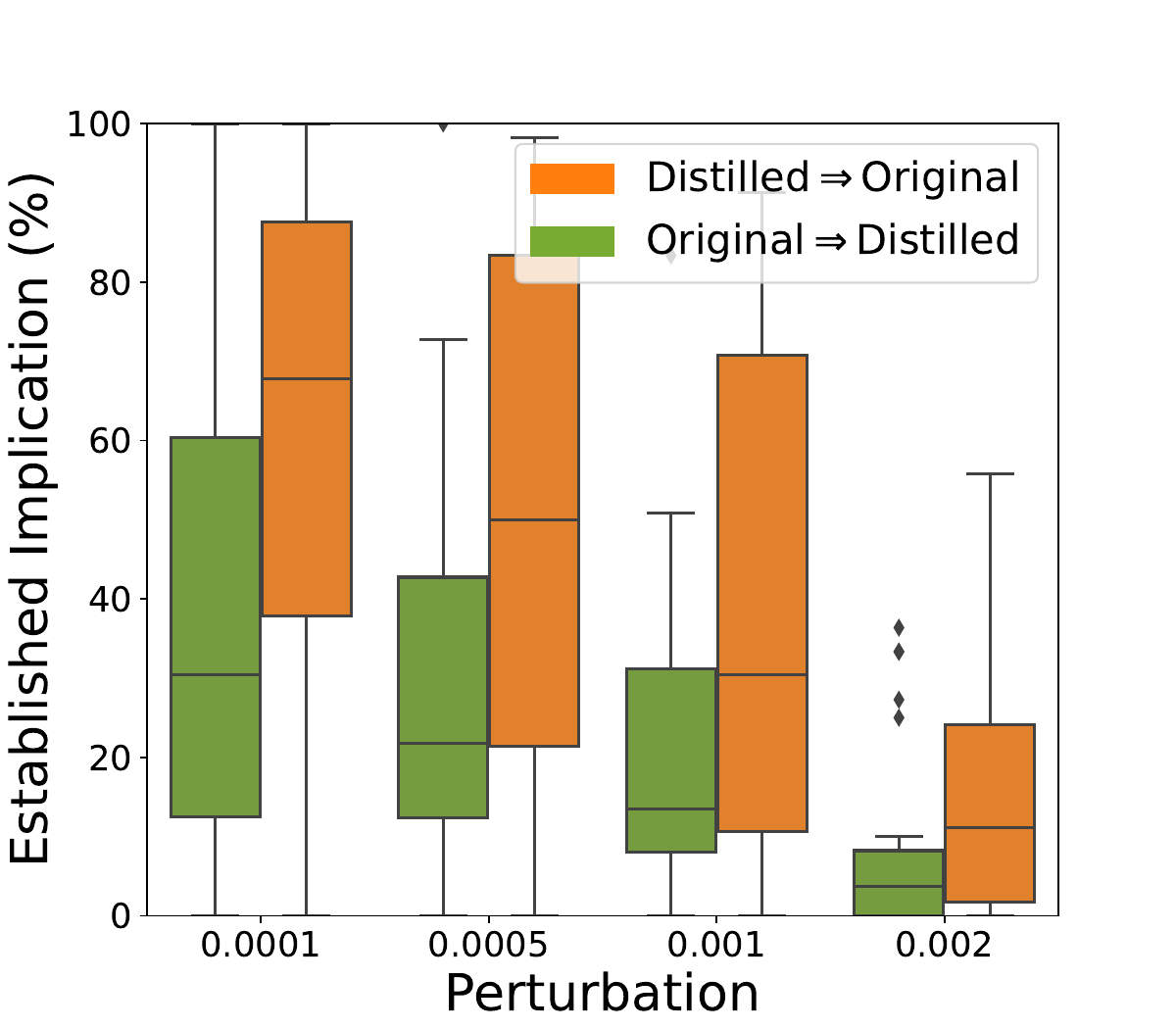}
    \caption{CHB, T = 7}
  \end{subfigure}
  \hfill
  \begin{subfigure}{0.24\textwidth}
    \includegraphics[width=\linewidth]{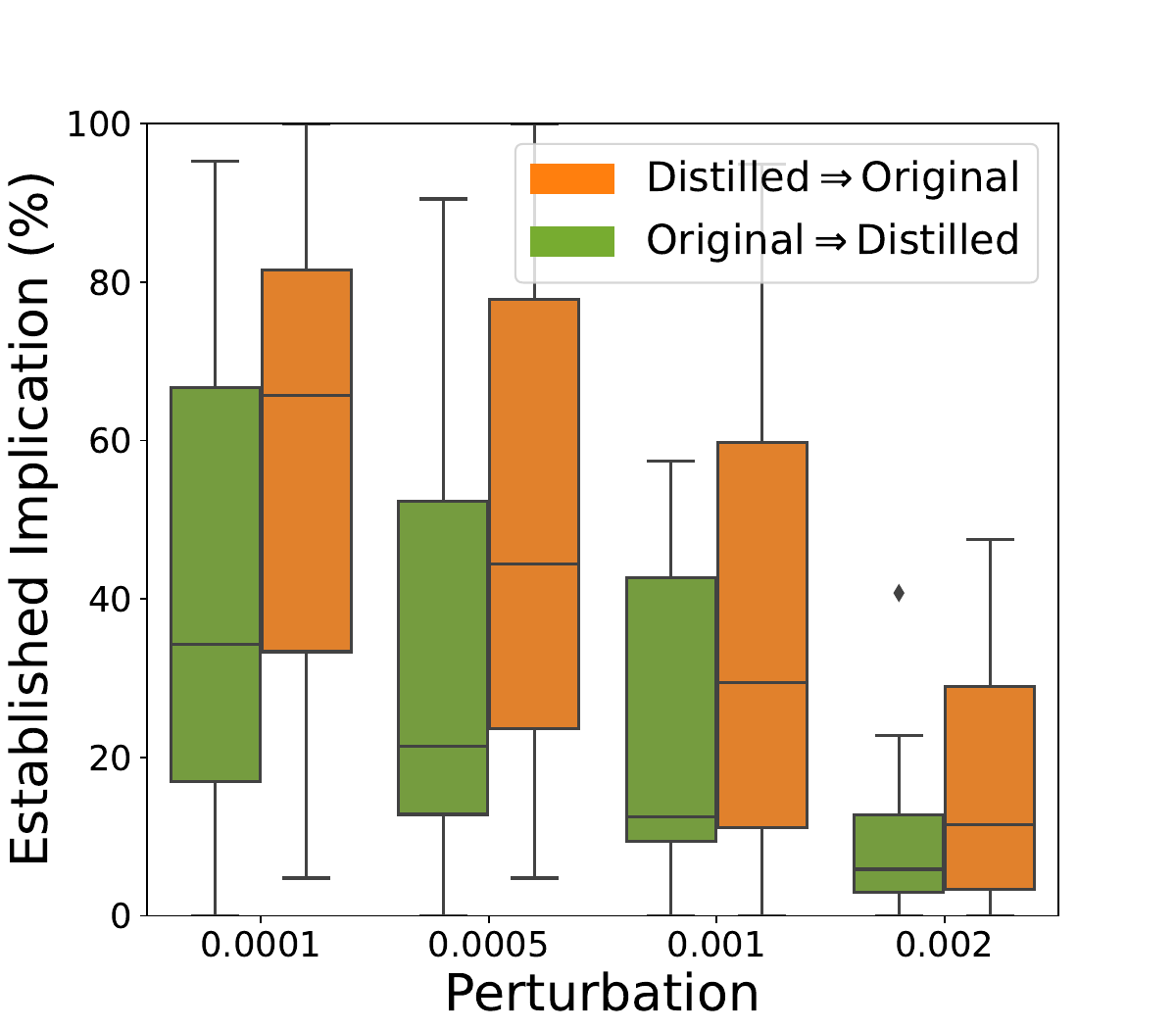}
    \caption{CHB, T = 8}
  \end{subfigure}
  \hfill
  \begin{subfigure}{0.24\textwidth}
    \includegraphics[width=\linewidth]{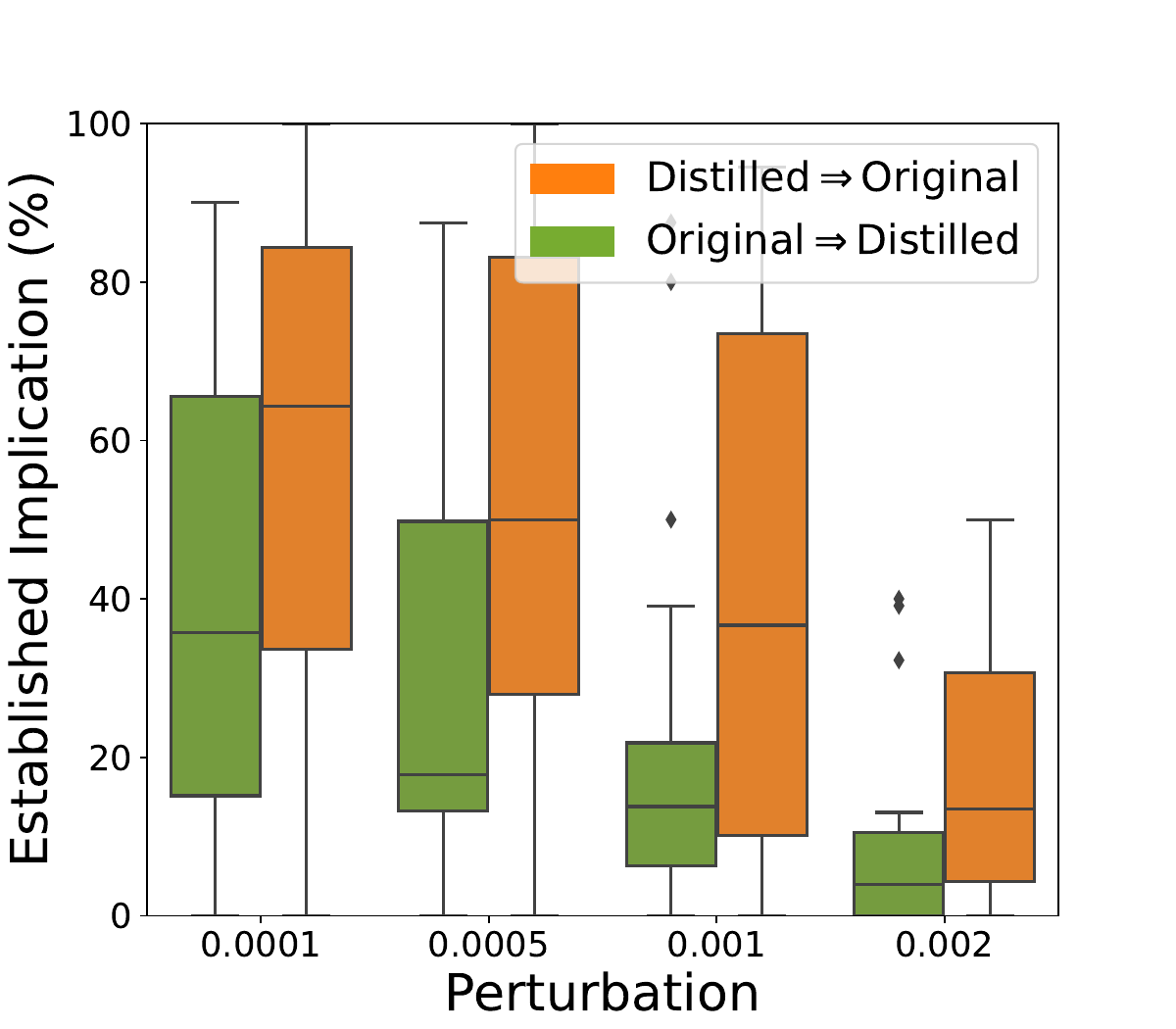}
    \caption{CHB, T = 9}
  \end{subfigure}

    \begin{subfigure}{0.24\textwidth}
    \includegraphics[width=\linewidth]{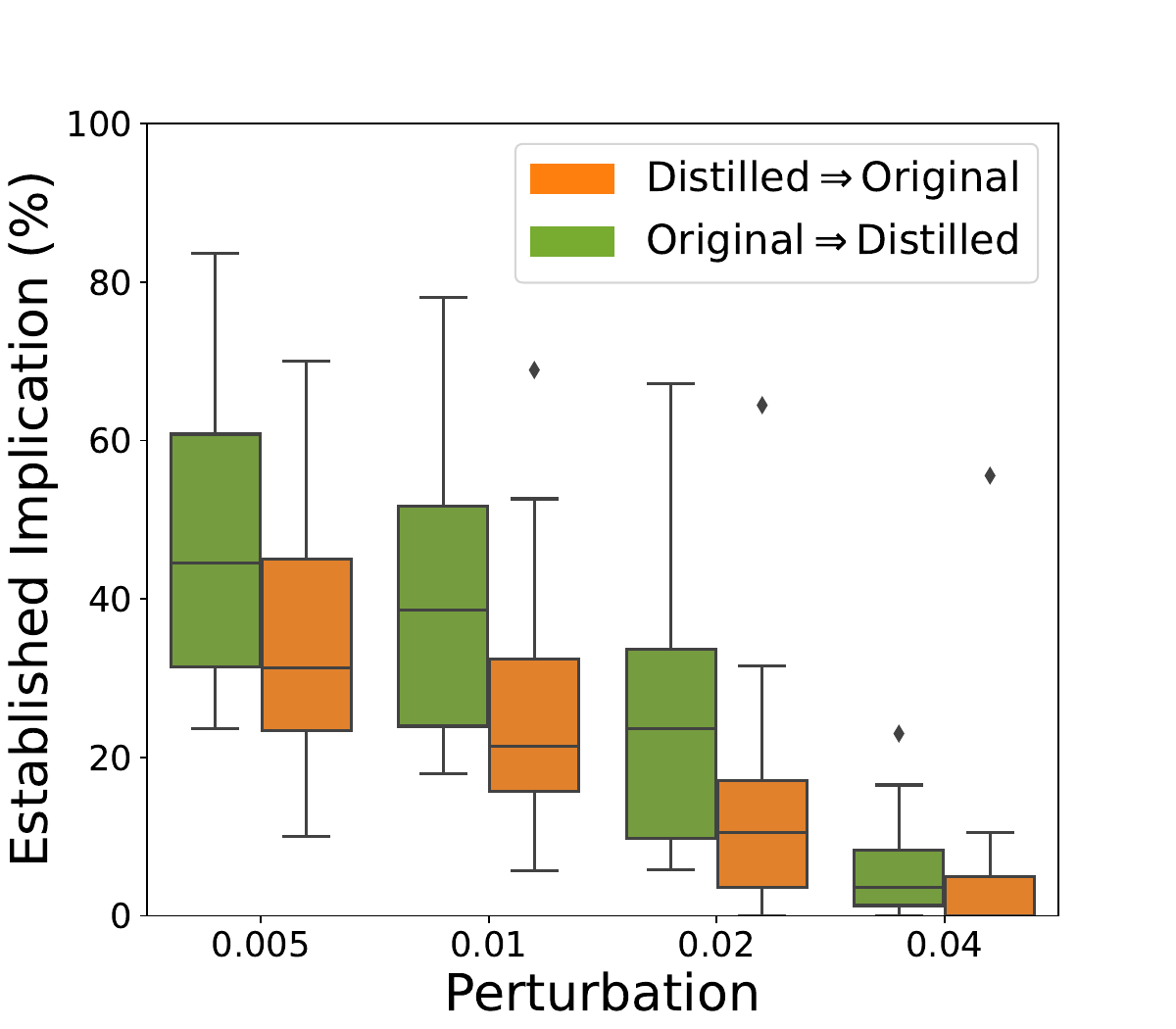}
    \caption{BIH, T = 1}
  \end{subfigure}
  \hfill
  \begin{subfigure}{0.24\textwidth}
    \includegraphics[width=\linewidth]{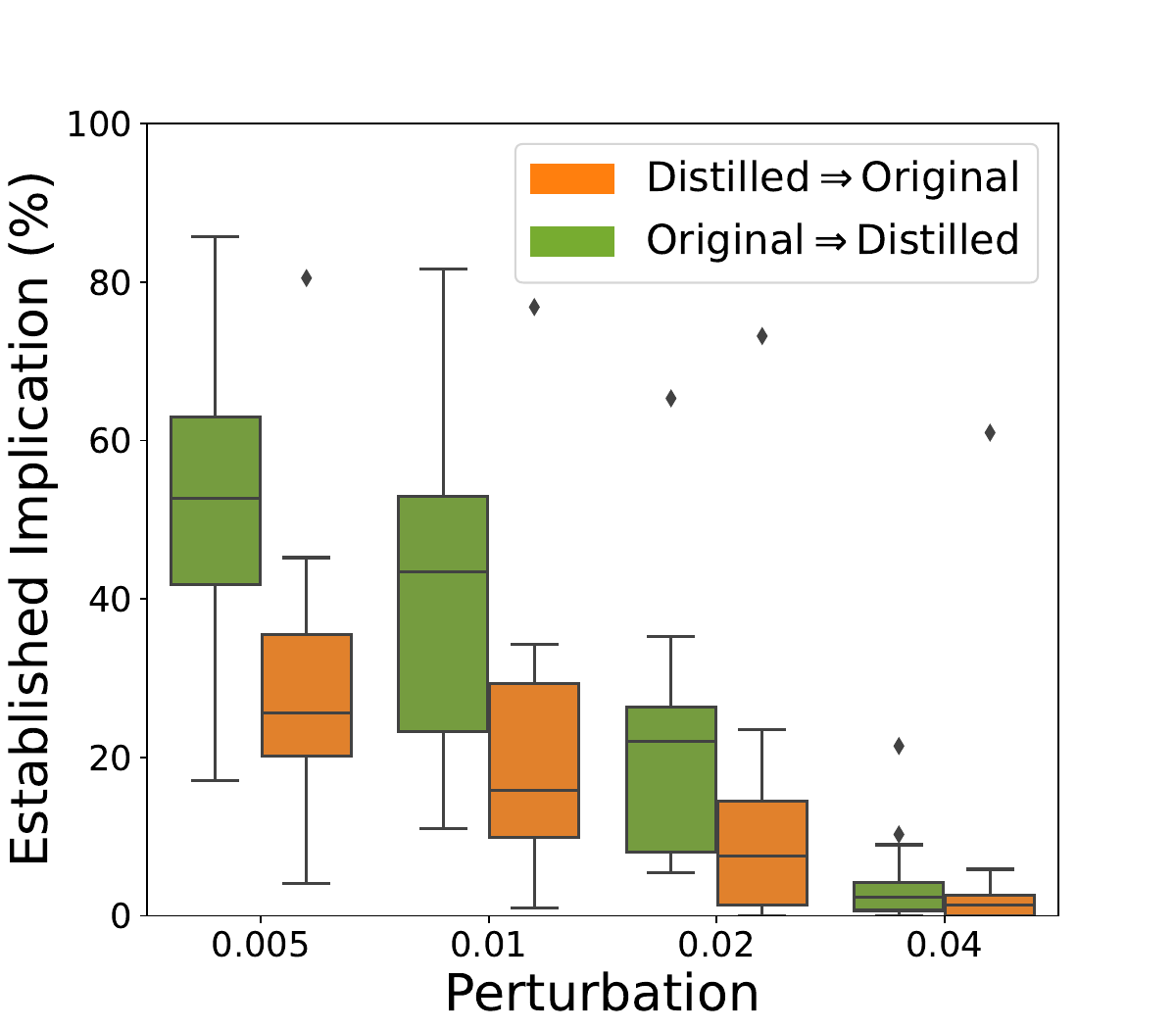}
    \caption{BIH, T = 2}
  \end{subfigure}
  \hfill
  \begin{subfigure}{0.24\textwidth}
    \includegraphics[width=\linewidth]{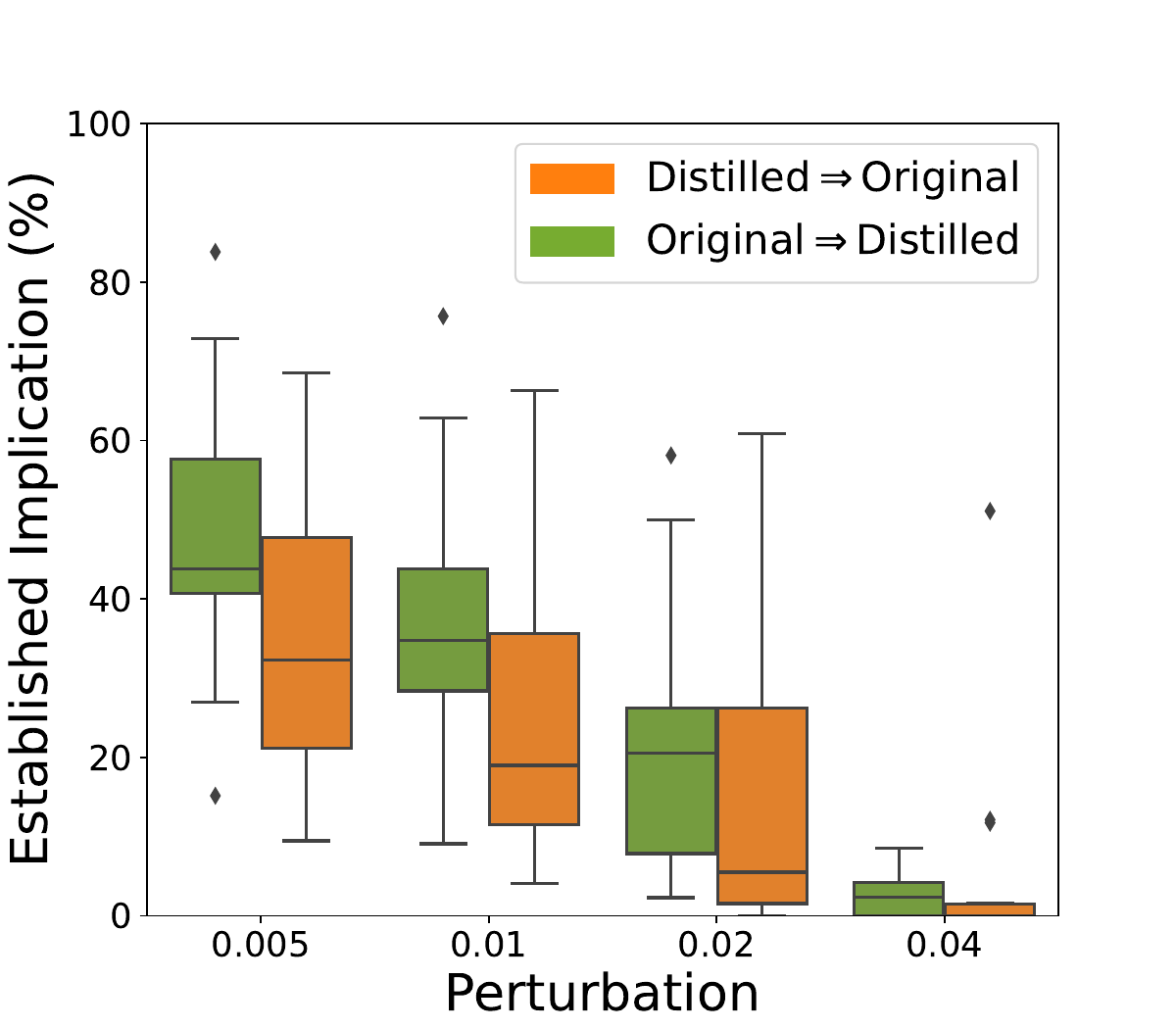}
    \caption{BIH, T = 3}
  \end{subfigure}
  \hfill
  \begin{subfigure}{0.24\textwidth}
    \includegraphics[width=\linewidth]{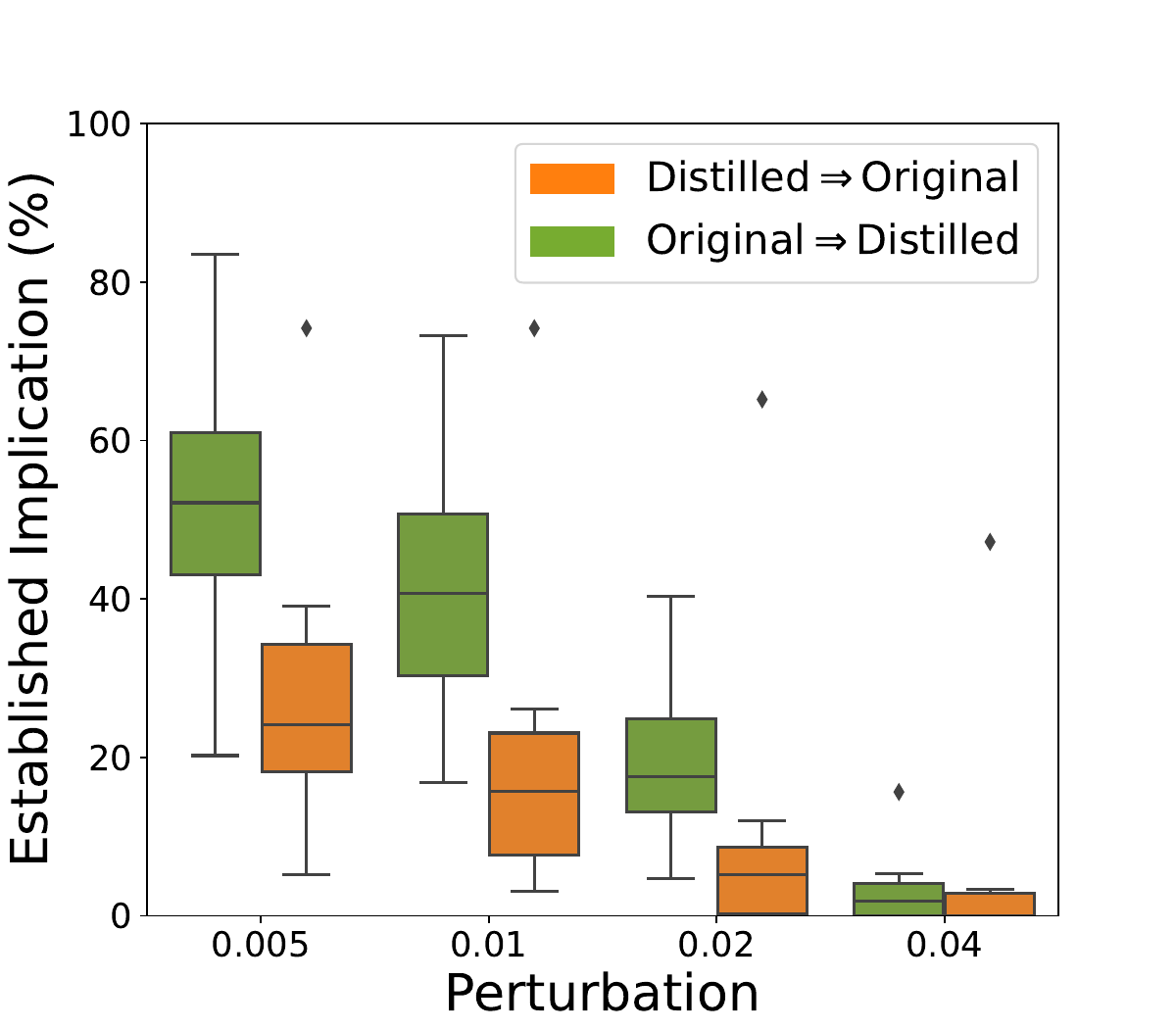}
    \caption{BIH, T = 4}
  \end{subfigure}


  \begin{subfigure}{0.24\textwidth}
    \includegraphics[width=\linewidth]{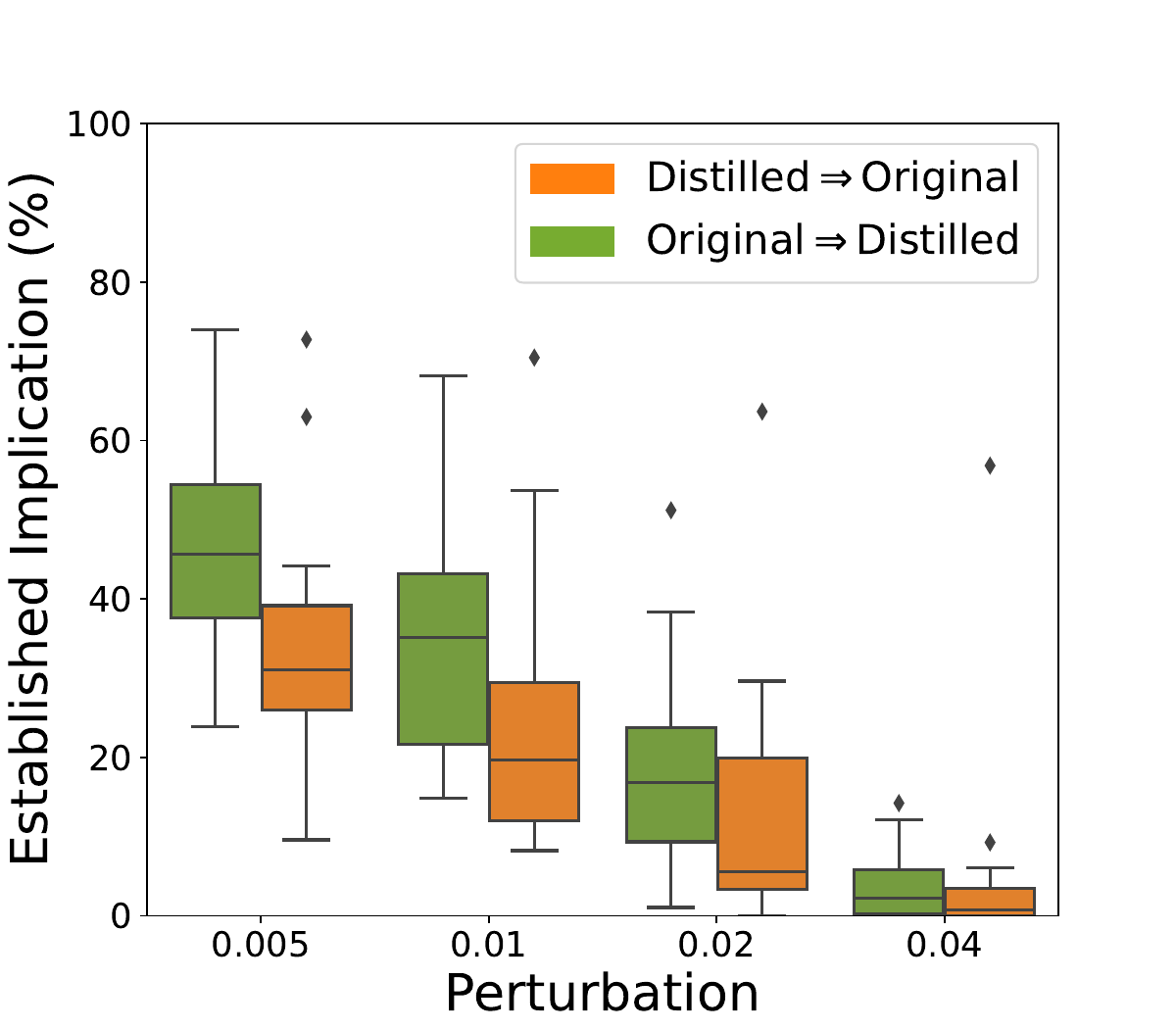}
    \caption{BIH, T = 6}
  \end{subfigure}
  \hfill
  \begin{subfigure}{0.24\textwidth}
    \includegraphics[width=\linewidth]{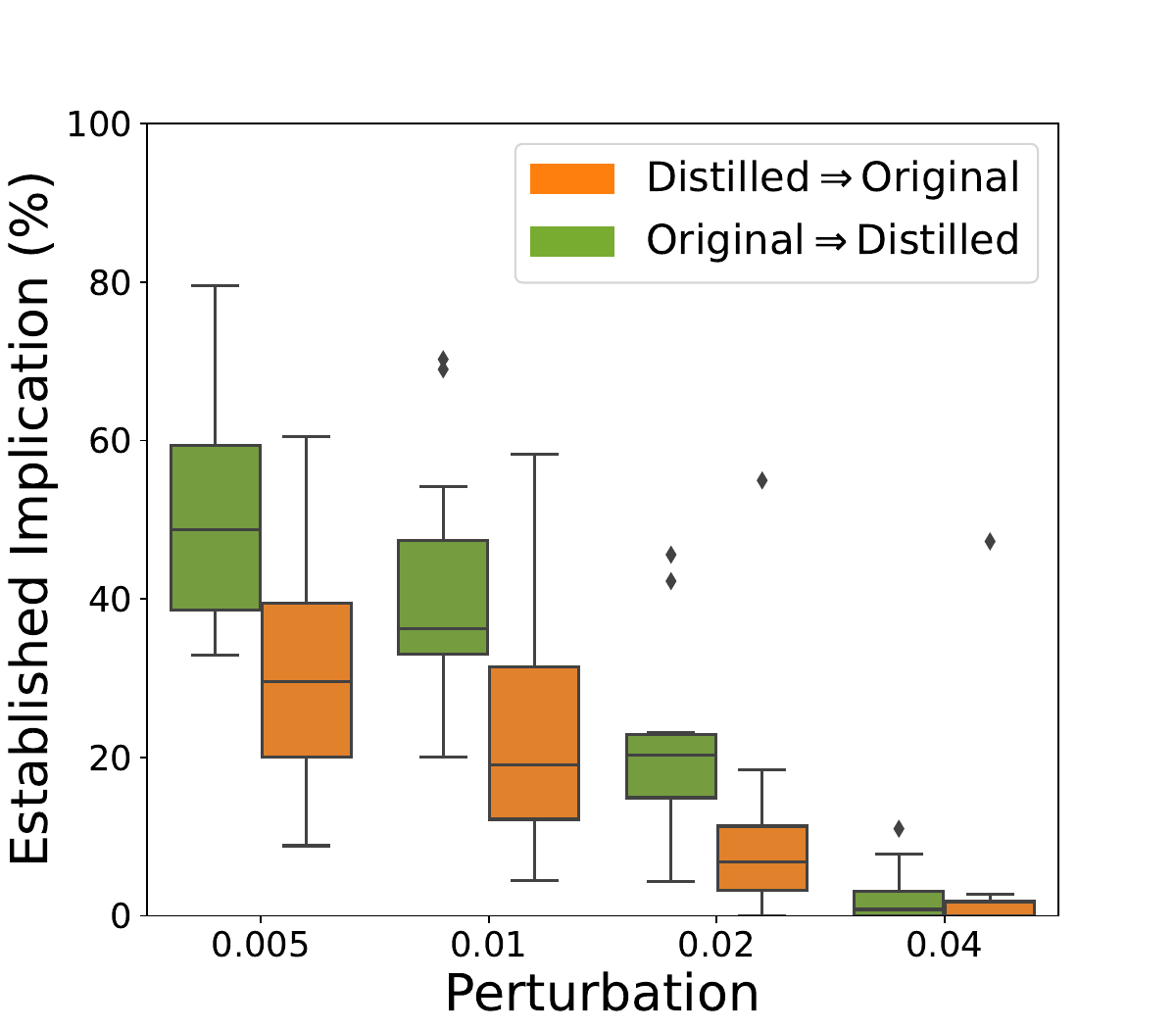}
    \caption{BIH, T = 7}
  \end{subfigure}
  \hfill
  \begin{subfigure}{0.24\textwidth}
    \includegraphics[width=\linewidth]{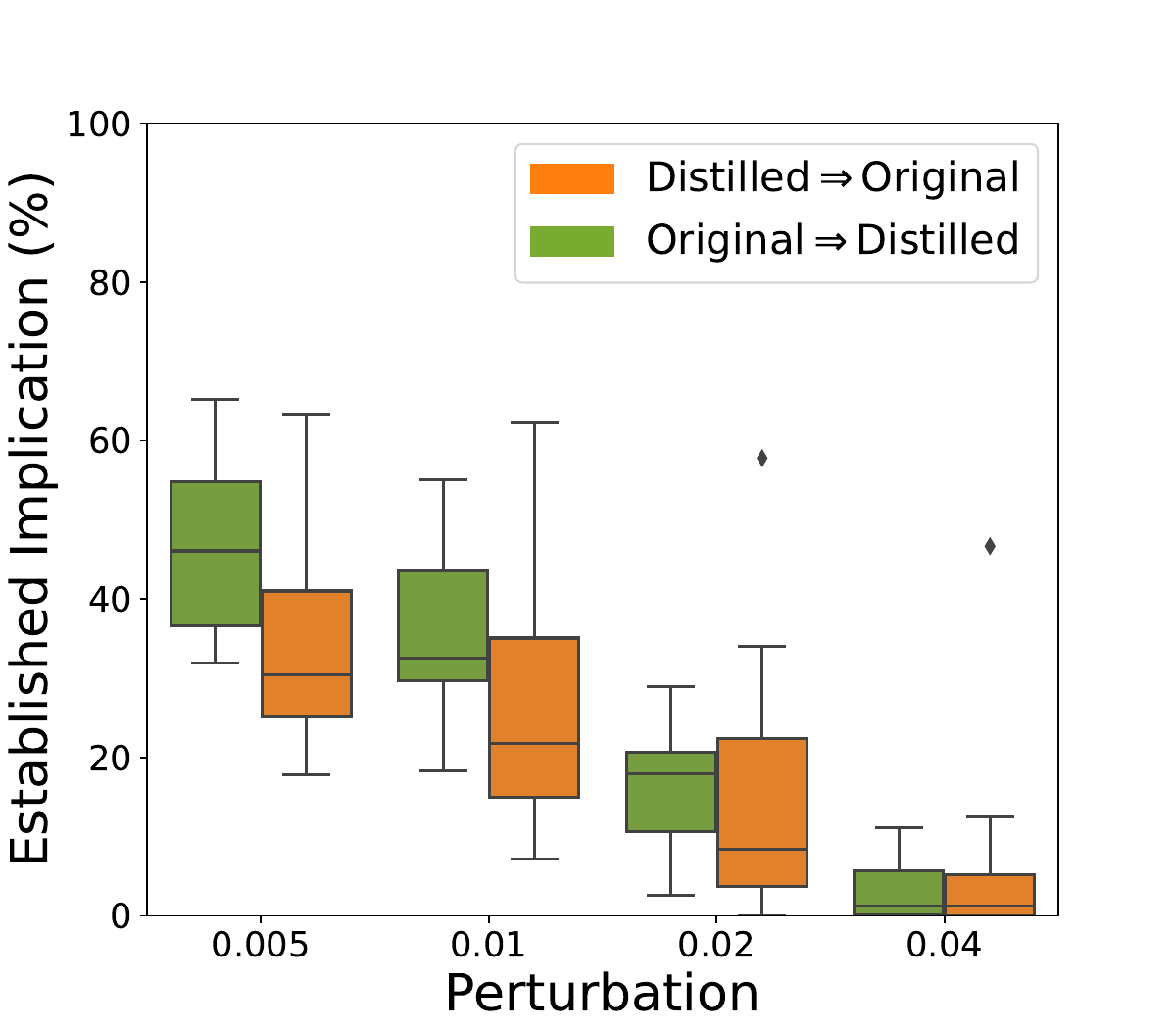}
    \caption{BIH, T = 8}
  \end{subfigure}
  \hfill
  \begin{subfigure}{0.24\textwidth}
    \includegraphics[width=\linewidth]{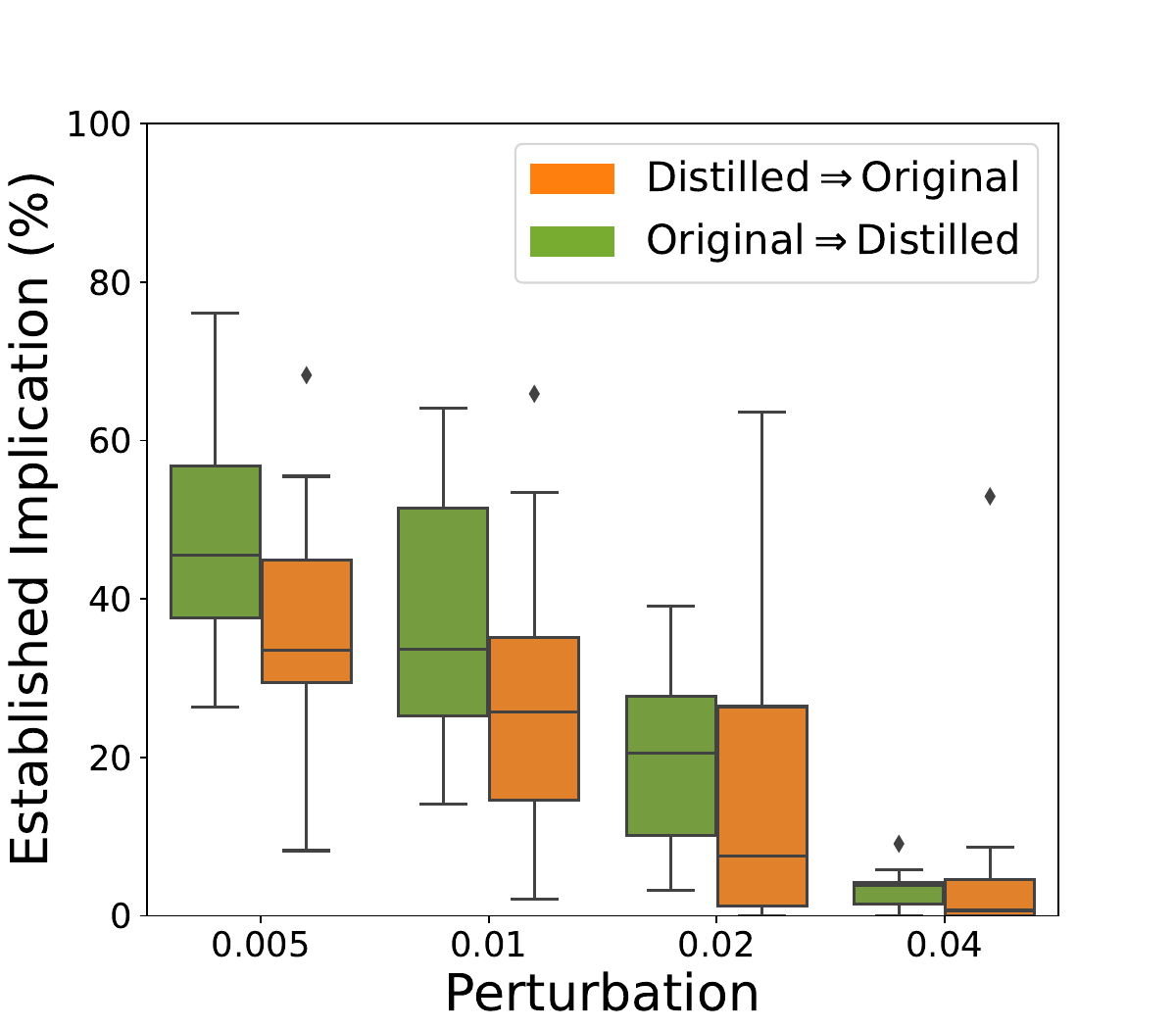}
    \caption{BIH, T = 9}
  \end{subfigure}
  
  \caption{Box plots illustrate the established implication of convolutional \glspl{DNN} trained on all patients in the CHB-MIT and MIT-BIH datasets for Original and Distilled networks. For each patient in the dataset, we evaluate the implication between an Original and a Distilled network using the patient’s own data and aggregate the results across all patients to present them in the box plots.}
  \label{CHB_BIH_distilled_appendix}
\end{figure}

\end{document}